\newcommand{\online}{0}
\newcommand{\classstyle}{1}
\numberwithin{equation}{section}
\newtheorem{theorem}{Theorem}[section]
\newtheorem{lemma}{Lemma}[section]
\newtheorem{remark}{Remark}[section]
\newcommand{\tabitem}{\textbullet~~}
\algrenewcommand\algorithmicindent{2.0em}
\newcommand{\IndState}{\State\hspace{2em}}
\newlength{\dhatheight}
\title{Generative Modelling with Tensor Train approximations of Hamilton--Jacobi--Bellman equations
}
\author{
 \textbf{David Sommer}\thanks{equal contribution} \\
	Weierstrass Institute for \\ Applied Analysis  and Stochastics \\
	Berlin, Germany \\
	\texttt{sommer@wias-berlin.de} 
 	\And
	\textbf{Robert Gruhlke}\footnotemark[1] \\
	Freie Universität Berlin \\ Institute of Mathematics \\
	Berlin, Germany \\
	\texttt{r.gruhlke@fu-berlin.de} 
 \And
 \textbf{Max Kirstein} \\
	Independent Researcher\\
	\And
 \textbf{Martin Eigel} \\
	Weierstrass Institute for \\ Applied Analysis  and Stochastics \\
	Berlin, Germany \\
	\texttt{eigel@wias-berlin.de}
	\And
	\textbf{Claudia Schillings} \\
	Freie Universität Berlin \\ Institute of Mathematics \\
	Berlin, Germany \\
	\texttt{c.schillings@fu-berlin.de}
}
\begin{document}
\maketitle

\begin{abstract}

Sampling from probability densities is a common challenge in fields such as Uncertainty Quantification (UQ) and Generative Modelling (GM). In GM in particular, the use of reverse-time diffusion processes depending on the log-densities of Ornstein-Uhlenbeck forward processes are a popular sampling tool. In 
\ifnum\classstyle=2
\citep{berner2022optimal}
\else
\cite{berner2022optimal}
\fi
the authors point out that these log-densities can be obtained by solution of a \textit{Hamilton-Jacobi-Bellman} (HJB) equation known from stochastic optimal control. While this HJB equation is usually treated with indirect methods such as policy iteration and unsupervised training of black-box architectures like Neural Networks, we propose instead to solve the HJB equation by direct time integration, using compressed polynomials represented in the Tensor Train (TT) format for spatial discretization. Crucially, this method is sample-free, agnostic to normalization constants and can avoid the curse of dimensionality due to the TT compression. We provide a complete derivation of the HJB equation's action on Tensor Train polynomials and demonstrate the performance of the proposed time-step-, rank- and degree-adaptive integration method on a nonlinear sampling task in 20 dimensions.

\end{abstract}

\section{Introduction and related work}

Consider the problem of sampling from a probability measure $\mu_\ast$ on $\mathbb{R}^d$, $d\in\mathbb{N}$, with Lebesgue-density
\begin{equation}\label{eq:posterior}
    \pi_\ast(y) = \dfrac{1}{Z}\exp(-\Phi(y)),
\end{equation}
where $\Phi\colon \mathbb{R}^d \rightarrow \mathbb{R}$ is a sufficiently regular function called the \textit{potential} and $Z\in(0,\infty)$ is a normalization constant such that $\int_{\mathbb{R}^d}\pi_\ast(y)\mathrm{d}y = 1$. Throughout this manuscript, we assume that $\Phi$ is known and can be evaluated, while the normalization constant $Z$ is unknown and difficult or even impossible to compute. Over time, a myriad of different sampling methods have been devised, including Markov Chain Monte Carlo (MCMC) methods
\ifnum\classstyle=2
\cite{roberts2004general,brooks2011handbook,robert2011mcmc},
\else
\cite{roberts2004general,brooks2011handbook,robert2011mcmc},
\fi 
methods based on Stein variational gradient descent
\ifnum\classstyle=2
\cite{liu2016stein},
\else
\cite{liu2016stein},
\fi 
Langevin dynamics
\ifnum\classstyle=2\cite{roberts1996exponential,garbuno2020interacting,garbuno2020affine,reich2021fokker,reich2022kalman, eigel2022interaction},
\else
\cite{roberts1996exponential,garbuno2020interacting,garbuno2020affine,reich2021fokker,reich2022kalman, eigel2022interaction},
\fi
or Langevin dynamics preconditioned with measure transport 
\ifnum\classstyle=2\cite{zhang2023transport},
\else
\cite{zhang2023transport}
\fi
to name just a few. In the last few years, interacting particle systems have received a lot of attention 
\ifnum\classstyle=2
\cite{goodman2010ensemble,garbuno2020interacting,garbuno2020affine,reich2021fokker,reich2022kalman, eigel2022interaction}.
\else
\cite{goodman2010ensemble,garbuno2020interacting,garbuno2020affine,reich2021fokker,reich2022kalman, eigel2022interaction}.
\fi
An important application where one aims to sample from densities of the form \eqref{eq:posterior} stems from solutions of inverse problems via Bayesian inference
\ifnum\classstyle=2
\cite{stuart_2010}.
\else
\cite{stuart_2010}.
\fi

Since our approach is linked to (interacting particle-) Langevin samplers, we take a moment to review these methods in more detail. All methods proposed in 
\ifnum\classstyle=2\cite{garbuno2020interacting,garbuno2020affine,reich2021fokker,reich2022kalman} 
\else
\cite{garbuno2020interacting,garbuno2020affine,reich2021fokker,reich2022kalman} 
\fi 
work with an It\^o diffusion process of the form
\begin{equation}\label{eq:general_Langevin}
    \mathrm{d}X_t = f(X_t)\mathrm{d}t + g(X_t)\mathrm{d}W_t,
\end{equation}
where $W$ is a standard Brownian motion with appropriate dimension, $f$ is the drift and $g$ is the diffusion.
Under certain assumptions on the potential, e.g. (strong) convexity, convexity at infinity, regularity and growth conditions, 
this process is ergodic 
\ifnum\classstyle=2\cite{zhang2020wasserstein,garbuno2020affine}
\else
\cite{zhang2020wasserstein,garbuno2020affine}
\fi
and admits either $\mu_\ast$ (in the case of a single particle process) or $\otimes_{i=1}^B\mu_\ast$ (in the case of a system of $B\in\mathbb{N}$ interacting particles) as an invariant measure.



Samples from $\mu_\ast$ are obtained by propagating an initial batch of arbitrarily distributed samples through the process \eqref{eq:general_Langevin} for infinite time. In the classical overdamped Langevin dynamics, the drift term $f$ is given by the negative gradient $-\nabla\Phi$ of the potential. In state-of-the-art interacting particle methods like the \textit{Affine Invariant Langevin Dynamics} (ALDI)
\ifnum\classstyle=2
\cite{garbuno2020interacting},
\else
\cite{garbuno2020interacting},
\fi
this drift is modified by a reversible perturbation of the underlying process (see e.g. \cite[Equation 2.4]{zhang2023transport} for a general definition of reversible perturbations and \cite[Definition 3.1]{garbuno2020affine} for the specific perturbation of ALDI). Reversible perturbations can increase convergence speed 
\ifnum\classstyle=2
\cite{rey2016improving},
\else
\cite{rey2016improving},
\fi
while ensuring that the perturbed SDE maintains the same invariant measure as the unperturbed system and is still time-reversible. Even if the resulting system is time-reversible, the reverse-time process is not considered in those works, since the \textit{forward} process \eqref{eq:general_Langevin} admits $\mu_\ast$ as invariant measure.

While the time-homogeneous drift term of \eqref{eq:general_Langevin} makes these methods conceptually simple, it comes with a potential downside with regard to the class of measures $\mu_\ast$ that can be approximated. ALDI comes with theoretical convergence guarantees only in the case of a potential with Gaussian tails outside of a compact set
\ifnum\classstyle=2
\cite{garbuno2020interacting}.
\else
\cite{garbuno2020interacting}.
\fi
In 
\ifnum\classstyle=2
\cite{eigel2022interaction},
\else
\cite{eigel2022interaction},
\fi
the authors propose using a time-inhomogeneous process 
\begin{equation}
\label{eq:homotopy_Langevin}
    \mathrm{d}X_t = f(t,X_t)\mathrm{d}t + g(X_t)\mathrm{d}W_t,
\end{equation}
where $f(t,\cdot)$ is defined by gradients of log-densities of intermediate measures defined upon time dependent interpolation, e.g. a convex combination of the target potential $\Phi$ and a simpler auxiliary potential. 
By the choice of the auxiliary measure, the flow towards the target distribution is fixed. While this so called \textit{homotopy}-approach can substantially increase convergence speed in practice, e.g. to sample from multimodal target distributions, the choice of auxiliary measures allowing for optimal flows remains an open question.


Contrary, reverse-time diffusion processes offer a principled way of defining a process of the form \eqref{eq:homotopy_Langevin}, which can be used to sample from $\mu_\ast$. The key observation, dating back to 
\ifnum\classstyle=2
\cite{ANDERSON1982313}, 
\else
\cite{ANDERSON1982313},
\fi
is that the reverse-time process corresponding to \eqref{eq:homotopy_Langevin} defines again a diffusion process of the form \eqref{eq:homotopy_Langevin}. For some years, this property has been used in what is now called \textit{Diffusion Generative Modelling} 
\ifnum\classstyle=2\cite{song2019generative,ho2020denoising,song2020score}.
\else
\cite{song2019generative,ho2020denoising,song2020score}.
\fi
In contrast to Bayesian inference, where $\mu_\ast$ is known but difficult to sample from, the goal here is to generate new samples from some completely unknown data distribution of which a finite number of samples $\{x_i\}_{i=1}^D$, $D\in\mathbb{N}$, are available. The central idea is to use an Ornstein-Uhlenbeck process mapping any distribution to a standard-normal distribution $\mathcal{N}(0,I_d)$ for $t\rightarrow\infty$ and then, by using the available samples $\{x_i\}_{i=1}^D$, learning the drift of the reverse-time process, mapping $\mathcal{N}(0,I_d)$ back to the data distribution~
\ifnum\classstyle=2
\cite{song2020score}.
\else
\cite{song2020score}.
\fi
More specifically, the gradient-log-density or \textit{score} of the Ornstein-Uhlenbeck process is learned by minimizing a \textit{score-matching} objective function 
\ifnum\classstyle=2
\cite{hyvarinen2005estimation,pmlr-v115-song20a}, 
\else
\cite{hyvarinen2005estimation,pmlr-v115-song20a},
\fi
which is essentially a weighted time-average of mean-squared errors (see e.g. \cite[Equation 7]{song2020score}). The score determines the reverse process. Once the score is known, new samples from the data distribution can be obtained by sampling from the standard-normal distribution and propagating the samples through the reverse process. However, classical score-matching relies on the samples $\{x_i\}_{i=1}^D$ of the data distribution, which are usually not available in a Bayesian setting. Hence, we consider an alternative approach.

The authors of 
\ifnum\classstyle=2
\cite{berner2022optimal} 
\else
\cite{berner2022optimal}
\fi
point out that the negative log-density of a reverse-time diffusion process satisfies a \textit{Hamilton-Jacobi-Bellman} (HJB) equation. Since the score is invariant under additive constants to the log-density, it suffices to solve this HJB equation up to an additive constant to obtain the correct score. In particular, the normalization constant of the target density need not be known. Hence, solving the corresponding HJB equation is a viable method of obtaining the score in a Bayesian setting.

Tensor Trains 
\ifnum\classstyle=2
\cite{oseledets2011tensor} 
\else
\cite{oseledets2011tensor}
\fi
have been used successfully in several works on approximations of HJB equations for nonlinear optimal control, see e.g. 
\ifnum\classstyle=2\cite{oster2022approximating,dolgov2021} 
\else
\cite{oster2022approximating,dolgov2021}
\fi
and references therein. In 
\ifnum\classstyle=2\cite{oster2022approximating} 
\else
\cite{oster2022approximating}
\fi
the solution of the deterministic finite horizon HJB equation is obtained by a combination of Monte-Carlo (MC) sampling and policy iteration. While this approach is appealing due to its model-free nature, the policy iteration requires the solution of multiple nonlinear optimization problems at each time step. Furthermore, MC sampling may lead to slow convergence. In
\ifnum\classstyle=2
\cite{dolgov2021} 
\else
\cite{dolgov2021}
\fi
a spectral discretization is used, circumventing the slow convergence rate of MC sampling and achieving algebraic convergence for a class of deterministic infinite horizon optimal control problems. In contrast to these works, we propose a method not reliant on policy iteration. Furthermore, no nonlinear optimization has to be performed except at the initial time point. Instead, the HJB right-hand side is discretized by orthogonal projection onto polynomial space, resulting in an ODE in tensor space. Subsequently, this ODE is integrated using methods for time-integration of Tensor Trains.


\subsection{Contribution and Outline}
The main contribution of this work lies in providing an interpretable solver based on compressed polynomials for the reverse-time HJB equation as it appears in the context of Generative Modelling and Bayesian Inference. Specifically, we integrate the HJB equation using orthogonal projections of the right-hand side and rank-retractions onto a smooth manifold within polynomial space defined by Tensor Trains of a fixed rank. The solver adaptively chooses its stepsize based on current projection- and retraction-errors as well as the local stiffness, which is estimated by local linearizations of the HJB. This approach is sample-free and agnostic to normalization constants and can therefore be used in a Bayesian setting. We demonstrate the performance of the solver on a nonlinear test case in $d=20$ dimensions.


The outline of the rest of the paper is as follows.
\begin{itemize}
    \item Section \ref{sec:diffusion} covers the relevant theory of diffusion processes necessary to construct a process of the form \eqref{eq:homotopy_Langevin}, which can be used to sample from $\mu_\ast$. In particular, Remark \ref{rem:time-reverse_ornstein} offers one such form as a reverse-time Ornstein-Uhlenbek process. The corresponding reverse-time HJB equation determining the score of this process is given in \eqref{eq:ornstein_hjb}. 
    \item In Section \ref{sec:TTs} we introduce our approximation class for the log-densities, namely functional Tensor Trains with orthogonal polynomial ansatz functions. A motivation for this ansatz class can be found in 
    \ifnum\online=1
    the Online Supplementary Material.
    \else
    Appendix \ref{app:tt_motivation}.\fi This section further introduces all algebraic operations on tensor space necessary to solve a projected version of the HJB equation.
    \item Section \ref{sec:solver} is the main part of the paper, where we are concerned with the solution of the HJB. We state the equivalence of the HJB projected onto polynomial space of fixed degree with an ODE in tensor space (Theorem \ref{thm:tensor_hjb}). Furthermore, we give a precise version of the proposed solution algorithm (Algorithm \ref{alg:HJB_algo}).
    \item Finally, the performance of the solver is demonstrated on a Gaussian test case as well as a $20$-dimensional nonlinear potential in Section \ref{sec:numerics}.
\end{itemize}


\section{Reverse-time diffusion processes and HJB equation}\label{sec:diffusion}

Let the terminal time $T>0$ and a $d$-dimensional Ornstein-Uhlenbeck process $(X_t)_{t\in[0,T]}$ be defined by
\begin{equation}\label{eq:Ornstein_Uhlenbeck}
    \mathrm{d}X_t = -X_t\mathrm{d}t + \sqrt{2}\mathrm{d}W_t, \qquad X_0\sim\mu_\ast,
\end{equation}
where $W_t$ denotes standard $d$-dimensional Brownian motion. The probability density function $\pi_t$ of this process satisfies the Fokker-Planck equation
\begin{equation}\label{eq:ornstein_fokker}
    \partial_t \pi_t =  \Delta \pi_t+x\cdot\nabla \pi_t + d\pi_t, \quad \pi_0 = \pi_\ast,
\end{equation}
for $t\in[0,T]$. Since the (standard normal) invariant measure of \eqref{eq:Ornstein_Uhlenbeck} satisfies a log-Soboloev inequality, the corresponding law $\mu_{X_t}$ of \eqref{eq:Ornstein_Uhlenbeck} converges exponentially in Kullback-Leibler divergence (KL) to the standard normal distribution $\mathcal{N}(0,I_d)$ on $\mathbb{R}^d$ 
\ifnum\classstyle=2
\cite{markowich2000trend},
\else
\cite{markowich2000trend},
\fi
i.e.
\begin{equation}\label{eq:ornstein_convergence}
    \mathrm{KL}(\mu_{X_t} || \mathcal{N}(0,I_d)) \leq e^{-2t} \mathrm{KL}(\mu_\ast || \mathcal{N}(0,I_d)).
\end{equation}
Hence, for sufficiently large $T$, the measure $\mu_{X_T}$ will be close to a standard normal distribution in KL divergence. The following remark provides a reverse-time process $(Y_t)_{t\in[0,T]}$ with $Y_0 \sim \mu_{X_T}$ and $Y_T \sim \mu_\ast$.

\begin{remark}[Reverse-time Ornstein-Uhlenbeck process]\label{rem:time-reverse_ornstein}
    Let $(X_t)_{t\in[0,T]}$ be defined by \eqref{eq:Ornstein_Uhlenbeck}. Then, for any $\lambda\in[0,1]$ the process $(Y_t)_{t\in[0,T]}$ defined by
    \begin{equation}\label{eq:reverse_process}
        \mathrm{d}Y_t = \left[ Y_t + (2-\lambda)\nabla\log \pi_{T-t}(Y_t) \right] \mathrm{d}t + \sqrt{2(1-\lambda)}\mathrm{d}W_t, \qquad Y_0 \sim \mu_{X_T} 
    \end{equation}
    satisfies $\mu_{Y_t} = \mu_{X_{T-t}}$ and in particular $\mu_{Y_T} = \mu_\ast$. This result is an immediate consequence of \cite[Appendix G]{huang2021variational}, which covers a much wider range of diffusion processes. The most common choices for $\lambda$ are $\lambda=0$, used for the reverse process e.g. in 
    \ifnum\classstyle=2
    \cite{song2020score}, 
    \else
    \cite{song2020score},
    \fi
    and $\lambda=1$, which leads to a reverse ODE known as probability flow ODE 
    \ifnum\classstyle=2
    \cite{song2020score}.
    \else
    \cite{song2020score}.
    \fi
\end{remark}

To formulate the reverse process $(Y_t)_{t\in[0,T]}$ we need the score $\nabla\log\pi_t$. If a sufficient number of samples of $\mu_\ast$ are available, we can apply score matching techniques (see \cite{song2019generative,song2020score} and references therein). Lacking these samples, we could try to obtain $\pi_t$ by solving \eqref{eq:ornstein_fokker}, but the fact that $\pi_t$ needs to be a density for every $t$ makes this approach cumbersome for approximation methods. Instead, we apply a \textit{Hopf-Cole transformation} $v_t\coloneqq -\log\pi_t$ to \eqref{eq:ornstein_fokker}. A short calculation by product and chain rule 
\ifnum\classstyle=3
\else
(see Appendix \ref{app:hopf_cole}) 
\fi
yields that $v_t$ satisfies the PDE
\begin{equation}\label{eq:ornstein_hjb}
    \partial_t v_t = \Delta v_t + x\cdot \nabla v_t -  \|\nabla v_t\|_2^2 - d, \qquad v_0 = -\log \pi_\ast,
\end{equation}
for $t\in[0,T]$. This nonlinear PDE is the time-reverse of a HJB equation appearing in finite-horizon stochastic optimal control. As 
\ifnum\classstyle=2
\cite{berner2022optimal} 
\else
\cite{berner2022optimal}
\fi
pointed out, we can now apply techniques from optimal control to approximate the score. A straightforward way is to approximately solve the HJB equation \eqref{eq:ornstein_hjb} with some suitable class of functions such as Neural Networks
\ifnum\classstyle=2\cite{zhou2021,berner2020numerically,nusken2021solving}.
\else
\cite{zhou2021,berner2020numerically,nusken2021solving}.
\fi
Instead of this black-box approach, we propose solving \eqref{eq:ornstein_hjb} by means of compressed polynomials represented by a low-rank tensor format, the details of which are provided in the next section. In contrast to Neural Networks, this approach is highly interpretable and utilizes the structure of the HJB equation. In particular, we make frequent use of the fact that the right-hand side $F(v) \coloneqq \Delta v + x\cdot \nabla v -  \|\nabla v\|_2^2 - d$ of \eqref{eq:ornstein_hjb} can be split into a constant, linear and nonlinear contribution, given by
\begin{align}
    \operatorname{Const}(v) &= d, \label{eq:hjb_const}\\
    \operatorname{Lin}(v)  &=  \Delta v + x\cdot \nabla v, \label{eq:hjb_lin}\\ 
    \operatorname{NonLin}(v) & = - \|\nabla v\|^2. \label{eq:hjb_nonlin}
\end{align}

Before going into the detail about the polynomial approximation in the following section, we  briefly sketch some of the core ideas. 

First, we note the constant term \eqref{eq:hjb_const} can be dropped from \eqref{eq:ornstein_hjb} since the score is agnostic to constant shifts of the log-density. More precisely, $v_t$ is a solution to \eqref{eq:ornstein_hjb} if and only if $\overline{v}_t \coloneqq v_t + td$ is a solution to $\partial_t \overline{v}_t = \Delta \overline{v}_t + x\cdot \nabla \overline{v}_t -  \|\nabla \overline{v}_t\|_2^2, \overline{v}_0 = -\log \pi_\ast$, $t\in [0,T]$. The two solutions $v_t$ and $\overline{v}_t$ differ only by a constant shift for every $t\in[0,t]$, hence the score satisfies $\nabla\log\pi_t = -\nabla v_t = -\nabla\overline{v}_t$. By the same reasoning, an arbitrary constant can be added to the initial condition of \eqref{eq:ornstein_hjb} without affecting the score. By choosing this constant equal to $-\log(Z)$, we achieve $-\log\pi_\ast-\log(Z) = \Phi$. Thus, from now on we consider the equation
\begin{equation}\label{eq:shifted_hjb}
    \partial_t v_t = \operatorname{Lin}(v_t) + \operatorname{NonLin}(v_t), \qquad v_0 = \Phi.
\end{equation}

Morevover, if $v_t$ is a polynomial of fixed degree $N\in\mathbb{N}$ for any $t$, then $\operatorname{Lin}(v_t)$ is also a polynomial of degree $N$. This means that if $v_0$ is a polynomial of a fixed degree, integrating only the linear part of \eqref{eq:shifted_hjb} would yield a polynomial of same degree for all $t\in[0,T]$. For the nonlinear part $\operatorname{NonLin}(v_t)$ this is only true for $N=2$. In this quadratic case, \eqref{eq:shifted_hjb} can be solved to arbitrary accuracy. In particular, if $\mu_\ast$ is a zero mean Gaussian with density
\begin{equation}
    \pi_\ast(x) = \frac{1}{\sqrt{(2\pi)^{d}|\Sigma|}}e^{-x^{\intercal}\Sigma^{-1} x}
\end{equation}
for positive definite $\Sigma\in\mathbb{R}^{d, d}$, then \eqref{eq:shifted_hjb} corresponds to the HJB equation of a linear-quadratic optimal control problem with solution given by $v_t(x) = x^{\intercal}P_tx$, where $P_t\in\mathbb{R}^{d, d}$, $t\in[0,T]$, solves a Ricatti matrix differential equation\ifnum\classstyle=3
.
\else
(see Appendix \ref{app:oc}).
\fi

Solving \eqref{eq:shifted_hjb} e.g. with an explicit Euler method for time discretization leads to a steady increase of the degree over time for all initial degrees larger than $N=2$. This is due to the nonlinear term: if $v_t$ for some $t\in[0,T]$ is a polynomial of degree $N$, then $\operatorname{NonLin}(v_t)$ is (in general) a polynomial of degree $\leq 2N$. To prevent this degree increase, we \textit{project} the nonlinear part of the right-hand side back onto the space spanned by polynomials of degree $N$ before performing the time integration step. Furthermore, since the linear space of polynomials suffers from the curse of dimensionality, we use a \textit{compression} or \textit{retraction} after every time step, finding a best approximation of the new iterate in a low dimensional manifold. In the case of an explicit Euler method, the resulting integration scheme can be written as
\begin{equation}\label{eq:euler_sketch}
    v_{t+\tau_t} = \operatorname{Compression}\left[v_t + \tau_t \left( \operatorname{Lin}(v_t) + \operatorname{Projection}\left[\operatorname{NonLin}(v_t)\right] \right)\right],
\end{equation}
where $\tau_t>0$ is the current adaptively chosen stepsize. The precise definition of all terms involved is the subject of the next section.

\section{Functional Tensor Trains (FTT) and Tensor Trains (TT)}\label{sec:TTs}


\begin{table}
\begin{center}
  {\renewcommand{\arraystretch}{1.5}
    \begin{tabular}{cl}
    \hline\hline
          \rowcolor{gray!10!white}  $\bm{n}\in\mathbb{N}_0^d $ &  dimension array $\bm{n}=(n_1,\ldots,n_d)$ \\
          $k\bm{n}+l$ &  $(kn_1+l, \ldots, k n_d +l)$ for $k,l\in\mathbb{N}_0$ \\
          \rowcolor{gray!10!white} 
          $[\bm{n}]$ & indexing $[\bm{n}] = \bigtimes_{i=1}^d \{0,\ldots,n_i\}$ \\
          $\bm{n}_1\geq \bm{n}_2$, $\bm{n}\geq k$ & 
          component wise comparison $\bm{n},\bm{n}_1,\bm{n}_2\in\mathbb{N}^d$, $k\in\mathbb{N}$ \\
          \rowcolor{gray!10!white} 
          $\bm{\alpha}$, $\bm{\beta}$, $\bm{\gamma}$ & multiindex in $\mathbb{N}_0^d$, note that we always index starting from $0$ \\
          $\mathbb{R}^{\bm{n}}$  & tensor space $\mathbb{R}^{n_1,\ldots,n_d}$ \\
          \rowcolor{gray!10!white} 
          $\bm{A},\bm{B},\bm{C}$ &  tensor elements in $\mathbb{R}^{\bm{n}}$ \\
          $\bm{r}$ & rank $\bm{r}=(r_1,\ldots,r_{d-1})$ in $\mathbb{N}^{d-1}$ \\
          \rowcolor{gray!10!white} 
          $\bm{r}^1\bm{r}^2$ &  multiplication $\bm{r}^1\bm{r}^2=(r^1_1 r_1^2,\ldots,r^1_{d-1}r^2_{d-1})$ in $\mathbb{N}^{d-1}$ \\
        $k_i, l_i$ &  rank enumeration indices in $\{1,\ldots, r_i\}$ \\
        \rowcolor{gray!10!white} 
        $A_i,B_i,C_i$ &  component order $3$ tensor in $\mathbb{R}^{r_{i-1},n_i+1,r_i}$ 
      with entries indexed by $[k_{i-1},\alpha_i,k_i]$ \\
      $A_i[\alpha_i]$ & matrix extraction $A_i[\alpha_i]=A_i[\,:\, , \alpha_i, \, :\,]\in\mathbb{R}^{r_{i-1},r_i}$ of component tensor $A_i$ \\
      \rowcolor{gray!10!white} 
        $A_i[k_{i-1},\,:\,,k_i]$ & vector extraction in $\mathbb{R}^{n_i+1}$ for each rank enumeration $k_{i-1},k_i$ \\
        $\bm{A}[\bm{\alpha}]$ & tensor indexing $\bm{A}[\alpha_1,\ldots,\alpha_d]$ for $\bm{A}\in\mathbb{R}^{\bm{n}}$, $\bm{\alpha}\in[\bm{n}]$, $\bm{n}\in\mathbb{N}^d_0$\\
         \hline\hline
    \end{tabular}
    }
\end{center}
\caption{List of compact notations used in this work.}
\label{tab:notation}
\end{table}

In this section we introduce the approximation class used as a spatial discretization for the HJB equation. Let $K\subset \mathbb{R}^d$ be a compact hypercube defined by $a_{i}, b_{i}\in\mathbb{R}$ with $a_{i} < b_{i}$ for $i=1,\ldots,d$ and  $K=\bigtimes_{i=1}^d [a_{i},b_{i}]$. 
A function $f\colon K\to \mathbb{R}$ is said to have functional Tensor Train (FTT) 
\ifnum\classstyle=2
\cite{oseledets2013constructive} 
\else
\cite{oseledets2013constructive}
\fi
rank $\overline{\bm{r}}=(\overline{r}_1,\ldots, \overline{r}_{d-1})\in\mathbb{N}^{d-1}$ with the convention $\overline{r}_0=\overline{r}_d=1$, 
if it can be written as
\begin{equation}\label{eq:ftt}
f(x_1,\ldots, x_d) = F_1(x_1) F_2(x_2) \cdots F_d(x_d)
\end{equation}
with matrix valued functions  $F_i(x_i)\in\mathbb{R}^{\overline{r}_{i-1},\overline{r}_i}$, $x_i\in [a_i,b_i]$ for $i=1,\ldots,d$. For discussions regarding the approximation of functions of mixed regularity or compositional structures we refer to 
\ifnum\classstyle=2
\cite{bachmayr2021approximation,griebel2023low,bachmayr2023low}.
\else
\cite{bachmayr2021approximation,griebel2023low,bachmayr2023low}.
\fi

In order to obtain a discrete approximation class, for each $i=1,\ldots, d$ and $\alpha\in\mathbb{N}_0$ let $p_{\alpha}^i$ denote the $\alpha$-th orthonormal Legendre polynomial with respect to the standard $L^2$ inner product on $[a_i,b_i]$. For $\bm{n}\in\mathbb{N}_0^d$, we define the discrete set of orthonormal polynomials of degree $\bm{n}$ by
\begin{equation}
\Pi_{\bm{n}} \coloneqq
\{ p_{\bm{\alpha}} \coloneqq \bigotimes_{i=1}^d p_{\alpha_i}^i \,|\, 
\bm{\alpha}\in[\bm{n}]
\},
\end{equation}
where $[\bm{n}]$ is defined as in Table \ref{tab:notation}.
For $f$ with FTT rank $\overline{\bm{r}}$, we then may approximate 
\begin{equation}
\label{eq:TT_poly_format}
f(x_1,\ldots,x_d)\approx \sum\limits_{\bm{\alpha}\in[\bm{n}]} \bm{C}[\bm{\alpha}] p_{\bm{\alpha}}(x_1,\ldots,x_d),
\end{equation}
with a tensor array $\bm{C}\in\mathbb{R}^{\bm{n}+1}$ with Tensor Train (TT) rank $\bm{r}=(r_1,\ldots,r_{d-1})^\intercal\in\mathbb{N}^{d-1}$ bounded by the FTT rank $\overline{\bm r}$. In particular we have the decomposition into a Tensor Train (or Matrix Product State) format
\begin{equation}
    \bm{C}[\bm{\alpha}] = C_1[\alpha_1]C_2[\alpha_2]\cdots C_d[\alpha_d],
\end{equation}
with matrices $C_i[\alpha_i]\in\mathbb{R}^{r_{i-1},r_i}$ and the convention that $r_0=r_d=1$. 
Note that the relation of $\overline{\bm{r}}$ and $\bm{r}$ depends on the relation of $F_i$ and the polynomials in $i$-th direction. In particular it holds  $\overline{\bm{r}}=\bm{r}$ if for all $i=1,\ldots,d$ and $\alpha=0,\ldots, n_i$ it holds
$$
\int\limits_{a_i}^{b_i} F_i(x_i) p_{\alpha}^i(x_i) \mathrm{d}x_i \neq \bm{0} \in\mathbb{R}^{\overline{r}_{i-1},\overline{r}_i}.
$$
Provided that the ranks can be bounded, the TT format exhibits a storage complexity of \\ $\mathcal{O}(\max(n_1,\ldots,n_d) d \max(r_1,\dots,r_{d-1})^2)$, which scales only linearly in the dimension $d$, hence avoiding the curse of dimensionality. The set of such Tensor Trains of fixed rank $\bm{r}$ defines a manifold $\mathcal{M}_{\bm{r}} \subset \mathbb{R}^{\bm{n}+1}$, see e.g.
\ifnum\classstyle=2
\cite{holtz2012manifolds}.
\else
\cite{holtz2012manifolds}.
\fi


As a first step of our HJB solver, we propose to approximate $V_0=-\log\pi^\ast$ in a functional Tensor Train format based on orthogonal polynomial space discretization as in \eqref{eq:TT_poly_format} for some TT rank $\bm{r}\in\mathbb{N}^{d-1}$. A motivation for this type of approximation for Bayesian posteriors can be found in 
\ifnum\online=1
the Online Supplementary Material.
\else
Appendix \ref{app:tt_motivation}. 
\fi
In what follows we discuss the actions of the linear and nonlinear operators defined in \eqref{eq:hjb_lin} and \eqref{eq:hjb_nonlin} on functions given in that format. To that end, we define for any tensor $\bm{A}\in\mathbb{R}^{\bm{n}+1}$ the associated polynomial $v_A\in\operatorname{span} \Pi_{\mathbf{n}} $ by
\begin{equation}\label{eq:TT_repr}
    v_{\bm{A}} = \sum\limits_{\bm{\alpha}\in [\bm{n}]} \bm{A}[\bm{\alpha}] p_{\bm{\alpha}}.
\end{equation}

\subsection{The linear part}
\label{sec:linear_part}
This section is concerned with the operator
$\operatorname{Lin}$ from \eqref{eq:hjb_lin}, appearing in the right-hand side of the HJB in \eqref{eq:shifted_hjb}.

Let the differential operator $\mathcal{D} \colon \mathcal{C}^2(\mathbb{R})\to\mathcal{C}(\mathbb{R})$ be defined as $\mathcal{D}v= \partial^2_x v + x\partial_x v$ for $v\in\mathcal{C}^2(\mathbb{R})$ and let $\mathcal{I}\colon\mathcal{C}^2(\mathbb{R})\to\mathcal{C}^2(\mathbb{R})$ denote the identity operator. Then, it holds 
\begin{equation}
    \label{eq:LinOpt_cont}
    \operatorname{Lin} = \mathcal{D} \otimes \mathcal{I} \otimes \ldots \otimes \mathcal{I}  + \mathcal{I}  \otimes \mathcal{D}  \otimes \mathcal{I}  \otimes \ldots \otimes \mathcal{I}  + \ldots + \mathcal{I}   \otimes \ldots \otimes \mathcal{I}  \otimes \mathcal{D} . 
\end{equation}
 As a first result we discuss the effect of the operator on functions $v$ given in FTT format.
\begin{lemma}\label{lem:lin_ftt_rank}
    Let $f\in\mathcal{C}^2(K)$ have FTT-rank $\bm{r}\in\mathbb{N}^{d-1}$. Then, $\operatorname{Lin}(f)$ has FTT-rank at most $2\bm{r}$.
\end{lemma}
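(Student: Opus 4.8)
The plan is to exhibit one explicit FTT representation of $\operatorname{Lin}(f)$ whose internal dimensions are componentwise bounded by $2\bm{r}$; since the FTT-rank is the infimum of such dimensions over all representations, this gives the claim. Fix a decomposition $f = F_1(x_1)F_2(x_2)\cdots F_d(x_d)$ with $F_i$ matrix-valued of size $r_{i-1}\times r_i$ (with $r_0=r_d=1$) realizing the FTT-rank $\bm{r}$, chosen so that each $F_i$ is itself $\mathcal{C}^2$ on $[a_i,b_i]$. By \eqref{eq:LinOpt_cont}, $\operatorname{Lin}$ is the sum over $j=1,\dots,d$ of the operator that applies $\mathcal{D}$ in the $j$-th coordinate and $\mathcal{I}$ in the others. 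Since $\mathcal{D}$ differentiates only in $x_j$ and the factors in the other variables pass through the matrix product, the $j$-th summand equals $F_1\cdots F_{j-1}\,(\mathcal{D}F_j)\,F_{j+1}\cdots F_d$, where $(\mathcal{D}F_j)(x_j) = F_j''(x_j) + x_j F_j'(x_j)$ is again a continuous $r_{j-1}\times r_j$ matrix-valued function. Hence $\operatorname{Lin}(f) = \sum_{j=1}^d F_1\cdots(\mathcal{D}F_j)\cdots F_d$, a sum of $d$ FTTs each of rank $\bm{r}$.

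Next I would fold these $d$ terms into a single FTT by a block "running-sum" construction. Set
\[
\widetilde F_1 = \begin{pmatrix} \mathcal{D}F_1 & F_1 \end{pmatrix},\qquad
\widetilde F_i = \begin{pmatrix} F_i & 0 \\ \mathcal{D}F_i & F_i \end{pmatrix}\ \ (2\le i\le d-1),\qquad
\widetilde F_d = \begin{pmatrix} F_d \\ \mathcal{D}F_d \end{pmatrix},
\]
so that $\widetilde F_i$ is continuous of size $2r_{i-1}\times 2r_i$ (again $r_0=r_d=1$). A short induction on $i$ shows
\[
\widetilde F_1\cdots\widetilde F_i = \Big(\, \textstyle\sum_{j=1}^i F_1\cdots(\mathcal{D}F_j)\cdots F_i \ \,\big|\,\ F_1\cdots F_i \,\Big),
\]
the inductive step being immediate from the block-bidiagonal shape of $\widetilde F_{i}$ (the upper-triangular zero block keeps the second slot a "pure product", while the lower-left block injects the new $j=i$ term into the running sum). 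Evaluating at $i=d$ collapses the second block and yields $\widetilde F_1\cdots\widetilde F_d = \operatorname{Lin}(f)$. The internal dimensions of this representation are $2r_1,\dots,2r_{d-1}$, so $\operatorname{Lin}(f)$ has FTT-rank at most $2\bm{r}$.

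The algebra above is routine; the one point that needs care, and which I would address explicitly, is the very first step: that $f\in\mathcal{C}^2(K)$ with FTT-rank $\bm{r}$ admits factors $F_i$ that are themselves $\mathcal{C}^2$, so that $\mathcal{D}F_i$ is defined and continuous and the block cores make sense. This is justified by the representation-independent characterization of the FTT-rank via the ranks of the associated unfolding operators, together with the fact that an attaining decomposition can be built from fibers $x_i\mapsto f(\dots,x_i,\dots)$ of $f$ (e.g.\ an interpolatory/cross construction), whose cores inherit the $\mathcal{C}^2$ regularity of $f$; alternatively one simply takes this regularity of the cores as part of the standing FTT setup, in which case the lemma follows directly from the block construction.
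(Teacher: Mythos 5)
Your proof is correct and follows essentially the same route as the paper: the paper's proof is precisely the block ``Laplace-like sum'' representation you construct in display \eqref{eq:lin_ftt_bound}, with the rank bound read off from the block dimensions. You merely supply two details the paper leaves implicit (the induction verifying the telescoping product and the remark about $\mathcal{C}^2$ regularity of the cores), which is fine but not a different argument.
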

\begin{proof}
The assertion follows immediately since $\operatorname{Lin}(f)$ defines a \textit{Laplace-like sum} of FTTs, meaning that each summand only modifies a single component of the FTT. More precisely, we have

\ifnum\classstyle=1
\begin{equation}\label{eq:lin_ftt_bound}
\operatorname{Lin}(f)(x) 
=
    \begin{bmatrix}
        \mathcal{D}F_1(x_1) & F_1(x_1)
    \end{bmatrix}
    \begin{bmatrix}
        F_2(x_2) & 0 \\ \mathcal{D}F_2(x_2) & F_2(x_2)
    \end{bmatrix} \cdots 
    \begin{bmatrix}
        F_{d-1}(x_{d-1}) & 0 \\ \mathcal{D}F_{d-1}(x_{d-1}) & F_{d-1}(x_{d-1})
    \end{bmatrix}  
    \begin{bmatrix}
         F_d(x_d) \\ \mathcal{D}F_d(x_d)
    \end{bmatrix},
\end{equation}
\fi 
\ifnum\classstyle=0
\begin{equation}\label{eq:lin_ftt_bound}
\begin{aligned}
\operatorname{Lin}(f)(x) 
&=
    \begin{bmatrix}
        \mathcal{D}F_1(x_1) & F_1(x_1)
    \end{bmatrix}
    \begin{bmatrix}
        F_2(x_2) & 0 \\ \mathcal{D}F_2(x_2) & F_2(x_2)
    \end{bmatrix} \cdots \\
    &\cdots
    \begin{bmatrix}
        F_{d-1}(x_{d-1}) & 0 \\ \mathcal{D}F_{d-1}(x_{d-1}) & F_{d-1}(x_{d-1})
    \end{bmatrix}  
    \begin{bmatrix}
         F_d(x_d) \\ \mathcal{D}F_d(x_d)
    \end{bmatrix},
\end{aligned}
\end{equation}
\fi 
\ifnum\classstyle=2
\begin{equation}\label{eq:lin_ftt_bound}
\begin{aligned}
\operatorname{Lin}(f)(x) 
&=
    \begin{bmatrix}
        \mathcal{D}F_1(x_1) & F_1(x_1)
    \end{bmatrix}
    \begin{bmatrix}
        F_2(x_2) & 0 \\ \mathcal{D}F_2(x_2) & F_2(x_2)
    \end{bmatrix} \cdots \\
    &\cdots
    \begin{bmatrix}
        F_{d-1}(x_{d-1}) & 0 \\ \mathcal{D}F_{d-1}(x_{d-1}) & F_{d-1}(x_{d-1})
    \end{bmatrix}  
    \begin{bmatrix}
         F_d(x_d) \\ \mathcal{D}F_d(x_d)
    \end{bmatrix},
\end{aligned}
\end{equation}
\fi 
\ifnum\classstyle=3
\begin{equation}\label{eq:lin_ftt_bound}
\operatorname{Lin}(f)(x) 
=
    \begin{bmatrix}
        \mathcal{D}F_1(x_1) & F_1(x_1)
    \end{bmatrix}
    \begin{bmatrix}
        F_2(x_2) & 0 \\ \mathcal{D}F_2(x_2) & F_2(x_2)
    \end{bmatrix} \cdots 
    \begin{bmatrix}
        F_{d-1}(x_{d-1}) & 0 \\ \mathcal{D}F_{d-1}(x_{d-1}) & F_{d-1}(x_{d-1})
    \end{bmatrix}  
    \begin{bmatrix}
         F_d(x_d) \\ \mathcal{D}F_d(x_d)
    \end{bmatrix},
\end{equation}
\fi 
which defines a product of matrix valued functions as in \eqref{eq:ftt}. The rank bound follows immediately from the block structure of \eqref{eq:lin_ftt_bound} and the dimensions of $F_i,\mathcal{D}F_i$ for $i=1,\ldots,d$.
\end{proof}
When applied to polynomials, the linear operator can be expressed in terms of its action on the polynomial's coefficients. More precisely, the discretization of $\operatorname{Lin}$ on the finite set $\Pi_{\bm{n}}$ for some $\bm{n}=(n_1,\ldots,n_d)^\intercal\in\mathbb{N}_0^d$ implies a linear operator $\bm{L}\colon \mathbb{R}^{\bm{n}+1}\to\mathbb{R}^{\bm{n}+1}$ given as
\begin{equation}
    \label{eq:LinOpt_discrete}
    \bm{L}:= \sum\limits_{i=1}^d \bm{L}_i, \quad \bm{L}_i:= \left(\bigotimes_{j=1}^{i-1} I_{n_j+1}\right) \otimes D_{i} \otimes \left(\bigotimes_{j=i+1}^{d} I_{n_j+1}\right),
\end{equation}
with identity matrix $I_n\in\mathbb{R}^{n, n}$. For the structure of the matrix $D_{i}\in\mathbb{R}^{n_i+1,n_i+1}$ we refer to
\ifnum\online=1
the Online Supplementary Material.
\else
Appendix \ref{sec:LinOp}, specifically equation \eqref{eq:app_di}.\fi
For the moment it suffices to note that $D_i$ governs the action of the differential operator $\mathcal{D}$ on the coefficients of the polynomials in dimension $i$. It can be shown that the action of $\operatorname{Lin}$ on a polynomial corresponds to algebraic manipulation of the coefficient tensor with respect to $\bm{L}$, which is the result of the following lemma.
\begin{lemma}
    Let $\bm{n}\in\mathbb{N}_0^d$, $\operatorname{Lin}$ and $\bm{L}$ from \eqref{eq:LinOpt_cont} and \eqref{eq:LinOpt_discrete}. Then, for $\bm{A}\in\mathbb{R}^{\bm{n}+1}$ we have
    \begin{equation}
    \operatorname{Lin}v_{\bm{A}} = v_{\bm{L}\bm{A}}.
    \end{equation}
\end{lemma}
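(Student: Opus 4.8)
The plan is to reduce the statement to a one-dimensional fact about the differential operator $\mathcal{D}$ acting on Legendre polynomials, and then tensorize. First I would observe that both sides are polynomials in $\operatorname{span}\Pi_{\bm n}$, so it suffices to check equality of coefficients, i.e. that $\operatorname{Lin} v_{\bm A} = \sum_{\bm\alpha\in[\bm n]} (\bm L\bm A)[\bm\alpha]\, p_{\bm\alpha}$. By linearity in $\bm A$, it is enough to treat $\bm A = e_{\bm\beta}$, the canonical basis tensor, so that $v_{\bm A} = p_{\bm\beta} = \bigotimes_{i=1}^d p^i_{\beta_i}$.

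Next I would exploit the tensor-product structure. Since $\operatorname{Lin} = \sum_{i=1}^d \mathcal{I}\otimes\cdots\otimes\mathcal{D}\otimes\cdots\otimes\mathcal{I}$ (the $\mathcal{D}$ in slot $i$), applying it to $p_{\bm\beta}$ gives $\sum_{i=1}^d \Big(\prod_{j\neq i} p^j_{\beta_j}(x_j)\Big)\,\mathcal{D}p^i_{\beta_i}(x_i)$. The key one-dimensional claim, which I would isolate as the crux of the argument, is that $D_i$ is by definition (Appendix \ref{sec:LinOp}, equation \eqref{eq:app_di}) the matrix representing $\mathcal{D}$ in the Legendre basis, i.e. $\mathcal{D}p^i_{\beta_i} = \sum_{\alpha_i=0}^{n_i} (D_i)[\alpha_i,\beta_i]\, p^i_{\alpha_i}$ — here one needs that $\mathcal{D}p^i_{\beta_i}$ stays within degree $n_i$, which is immediate because $\partial_x^2$ lowers degree by two and $x\partial_x$ preserves it, so $\mathcal{D}$ maps degree-$\beta_i$ polynomials to degree-$\beta_i$ polynomials. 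Substituting this expansion into each summand and collecting, $\operatorname{Lin}p_{\bm\beta} = \sum_{i=1}^d \sum_{\alpha_i} (D_i)[\alpha_i,\beta_i]\, p^i_{\alpha_i}\prod_{j\neq i}p^j_{\beta_j}$, which is exactly $v_{\bm L e_{\bm\beta}}$ by the definition of $\bm L_i = I\otimes\cdots\otimes D_i\otimes\cdots\otimes I$ and $\bm L = \sum_i \bm L_i$.

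Finally I would assemble: $\operatorname{Lin}v_{\bm A} = \operatorname{Lin}\sum_{\bm\beta}\bm A[\bm\beta]p_{\bm\beta} = \sum_{\bm\beta}\bm A[\bm\beta]\,\operatorname{Lin}p_{\bm\beta} = \sum_{\bm\beta}\bm A[\bm\beta]\, v_{\bm L e_{\bm\beta}} = v_{\bm L \bm A}$, using linearity of $\bm A\mapsto v_{\bm A}$ in the last step. I do not expect any real obstacle here; the only thing requiring care is the bookkeeping of which matrix/identity sits in which tensor slot, and making sure the degree-preservation of $\mathcal{D}$ is explicitly noted so that the finite matrix $D_i \in \mathbb{R}^{n_i+1,n_i+1}$ genuinely captures the action of $\mathcal{D}$ on $\Pi_{\bm n}$ without truncation loss. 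The substantive content is entirely pushed into the definition of $D_i$ given in the appendix, so this lemma is essentially a formal consistency check of that definition.
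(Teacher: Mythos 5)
Your proposal is correct and follows essentially the same route as the paper's proof: both arguments reduce the claim to the fact that $D_i$ is the matrix of $\mathcal{D}$ in the Legendre basis (as defined in the appendix) and then tensorize over the $d$ slots, the only cosmetic difference being that you first specialize to canonical basis tensors $e_{\bm\beta}$ while the paper manipulates the sums for a general $\bm{A}$ directly. Your explicit remark that $\mathcal{D}$ preserves polynomial degree (so the finite matrix $D_i$ loses nothing to truncation) is a point the paper leaves implicit, but it does not constitute a different argument.
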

\begin{proof}
Let $\bm{L}_i[\bm{\beta}, \bm{\alpha}] :=   \left(\bigotimes_{j=1}^{i-1} I_{n_j+1}[\beta_j,\alpha_j]\right) \otimes D_{n_i}[\beta_i,\alpha_i] \otimes \left(\bigotimes_{j=i+1}^{d} I_{n_j+1}[\beta_j,\alpha_j]\right)$. Then, the action of $\bm{L}_i$ on $\bm{A}$ defines a tensor $\bm{B}_i$ given as 
$
\bm{B}_i[\bm{\beta}] = \sum_{\bm{\alpha}\in[\bm{n}]} \bm{L}_i[\bm{\beta}, \bm{\alpha}] \bm{A}[\bm{\alpha}].
$
Moreover, $\bm{L}\bm{A}=\sum\limits_{i=1}^d \bm{B}_i$.
Hence, 
\begin{align*}
     \operatorname{Lin} v_{\bm{A}} 
     &= 
     \sum\limits_{i=1}^d\sum_{\bm{\alpha}\in[\bm{n}]}
     \left(\bigotimes_{j=1}^{i-1} \mathcal{I}\right) \otimes \mathcal{D} \otimes \left(\bigotimes_{j=i+1}^{d} \mathcal{I}\right) \bm{A}[\bm{\alpha}]  p_{\alpha_1}^1\otimes \cdots \otimes p_{\alpha_d}^d \\
     &= 
     \sum\limits_{i=1}^d
\sum_{\bm{\beta}\in[\bm{n}]}
\sum_{\bm{\alpha}\in[\bm{n}]}
     \bm{L}_i[\bm{\beta}, \bm{\alpha} ] \bm{A}[\bm{\alpha}]  p_{\alpha_1}^1\otimes \cdots \otimes p_{\alpha_d}^d \\
&= \sum\limits_{i=1}^d
\sum_{\beta\in[\bm{n}]} \bm{B}_i[\bm{\beta}]   p_{\beta_1}^1\otimes \cdots \otimes p_{\beta_d}^d \\
&= 
v_{\sum\limits_{i=1}^d \bm{B}_i}
\end{align*}
\end{proof}
The contraction $\bm{L}\bm{A}$ is cumbersome for full tensors $\bm{A}$. However, 
it is easy to implement if $\bm{A}\in\mathcal{M}_{\bm{r}}$ is a Tensor Train of fixed rank $\bm{r}\in\mathbb{N}^{d-1}$, such that $\bm{A}[\alpha]=A_1[\alpha_1]A_2[\alpha_2]\cdots A_d[\alpha_d]$ with $A_i[\alpha_i]\in\mathbb{R}^{r_{i-1},r_i}$ for $i=1,\ldots,d$. In this case, let $D_{i,A_i}[\beta_i] := \sum\limits_{\alpha_i=0}^{n_i} D_{i}[\beta_i,\alpha_i]A_i[\alpha_i]$ for $i=1,\ldots,d$. Then, 
\ifnum\classstyle=1
\begin{equation*}
\begin{aligned}
    \left(\bm{L}\bm{A}\right)[\beta] 
    &= 
    \sum\limits_{i=1}^d\sum_{\alpha\in[\bm{n}]}\bm{L}_i[\beta,\alpha]A[\alpha] \\
    &= 
     \sum\limits_{i=1}^d\sum_{\alpha\in[\bm{n}]}\left(\bigotimes_{j=1}^{i-1} I_{n_j+1}[\beta_j,\alpha_j]A_j[\alpha_j]\right) \otimes D_{i}[\beta_i,\alpha_i]A_i[\alpha_i] \otimes \left(\bigotimes_{j=i+1}^{d} I_{n_j+1}[\beta_j,\alpha_j]A_j[\alpha_j]\right) \\
     &= 
     \sum\limits_{i=1}^d \left(\bigotimes_{j=1}^{i-1} A_j[\beta_j]\right) \otimes D_{i,A_i}[\beta_i] \otimes \left(\bigotimes_{j=i+1}^{d} A_j[\beta_j]\right) \\
     &= \begin{bmatrix}
        D_{1,A_1}[\beta_1]^{\intercal} & A_1[\beta_1]^{\intercal}
    \end{bmatrix}
    \begin{bmatrix}
        A_2[\beta_2]^{\intercal} & 0 \\ D_{2,A_2}[\beta_2]^{\intercal} & A_2[\beta_2]^{\intercal}
    \end{bmatrix} \cdots 
    \begin{bmatrix}
        A_{d-1}[\beta_{d-1}]^{\intercal} & 0 \\ D_{d-1,A_{d-1}}[\beta_{d-1}]^{\intercal} & A_{d-1}[\beta_{d-1}]^{\intercal}
    \end{bmatrix}  
    \begin{bmatrix}
         A_{d}[\beta_{d}] \\ D_{d,A_d}[\beta_d]
    \end{bmatrix}.
\end{aligned}
\end{equation*}
\fi
\ifnum\classstyle=2
\begin{equation*}
\begin{aligned}
    \left(\bm{L}\bm{A}\right)[\beta] 
    &= 
    \sum\limits_{i=1}^d\sum_{\alpha\in[\bm{n}]}\bm{L}_i[\beta,\alpha]A[\alpha] \\
     &= 
     \sum\limits_{i=1}^d \left(\bigotimes_{j=1}^{i-1} A_j[\beta_j]\right) \otimes D_{i,A_i}[\beta_i] \otimes \left(\bigotimes_{j=i+1}^{d} A_j[\beta_j]\right) \\
     &= \begin{bmatrix}
        D_{1,A_1}[\beta_1]^{\intercal} & A_1[\beta_1]^{\intercal}
    \end{bmatrix}
    \begin{bmatrix}
        A_2[\beta_2]^{\intercal} & 0 \\ D_{2,A_2}[\beta_2]^{\intercal} & A_2[\beta_2]^{\intercal}
    \end{bmatrix} \cdots \\
    &\cdots
    \begin{bmatrix}
        A_{d-1}[\beta_{d-1}]^{\intercal} & 0 \\ D_{d-1,A_{d-1}}[\beta_{d-1}]^{\intercal} & A_{d-1}[\beta_{d-1}]^{\intercal}
    \end{bmatrix}  
    \begin{bmatrix}
         A_{d}[\beta_{d}] \\ D_{d,A_d}[\beta_d]
    \end{bmatrix}.
\end{aligned}
\end{equation*}
\fi
\ifnum\classstyle=0
\begin{equation*}
\begin{aligned}
    \left(\bm{L}\bm{A}\right)[\beta] 
    &= 
    \sum\limits_{i=1}^d\sum_{\alpha\in[\bm{n}]}\bm{L}_i[\beta,\alpha]A[\alpha] \\
     &= 
     \sum\limits_{i=1}^d \left(\bigotimes_{j=1}^{i-1} A_j[\beta_j]\right) \otimes D_{i,A_i}[\beta_i] \otimes \left(\bigotimes_{j=i+1}^{d} A_j[\beta_j]\right) \\
     &= \begin{bmatrix}
        D_{1,A_1}[\beta_1]^{\intercal} & A_1[\beta_1]^{\intercal}
    \end{bmatrix}
    \begin{bmatrix}
        A_2[\beta_2]^{\intercal} & 0 \\ D_{2,A_2}[\beta_2]^{\intercal} & A_2[\beta_2]^{\intercal}
    \end{bmatrix} \cdots \\
    &\cdots
    \begin{bmatrix}
        A_{d-1}[\beta_{d-1}]^{\intercal} & 0 \\ D_{d-1,A_{d-1}}[\beta_{d-1}]^{\intercal} & A_{d-1}[\beta_{d-1}]^{\intercal}
    \end{bmatrix}  
    \begin{bmatrix}
         A_{d}[\beta_{d}] \\ D_{d,A_d}[\beta_d]
    \end{bmatrix}.
\end{aligned}
\end{equation*}
\fi
Hence, $\bm{L}\bm{A}$ is given in TT format through a \textit{Laplace-like sum} 
with TT rank bounded by $2\bm{r}$, which is consistent with Lemma \ref{lem:lin_ftt_rank}. In particular, this formula involves only contractions of the matrices $D_i\in\mathbb{R}^{n_i+1,n_i+1}$ with the order three tensors $A_i \in \mathbb{R}^{r_{i-1},n_i+1,r_i}$ for $i=1,\ldots,d$. No contraction with the full tensor $\bm{A}$ is required.

\subsection{The nonlinear part}
This section is concerned with the operator
$\operatorname{NonLin}(\,\cdot\,)=\|\nabla \,\cdot\,\|^2$ from  \eqref{eq:hjb_nonlin}, appearing in the right-hand side of the HJB equation in \eqref{eq:shifted_hjb}, in case the arguments are functions given in FTT format. This operator is a combination of partial derivatives, squares and a summation. We split the results into two Lemmas. First, we derive a more general bound on the FTT-rank of a product of functions with bounded FTT-rank.

\begin{lemma}
\label{lemma:fsquare_ttrank}
    Let $g$, $f$ have FTT-rank $\bm{r}^f$ and $\bm{r}^g$, respectively. Then $g\cdot f$ has FTT-rank at most $\bm{r}^g\bm{r}^f$.
\end{lemma}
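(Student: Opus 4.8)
The plan is to produce an explicit FTT representation of $g\cdot f$ whose rank is the component-wise product $\bm r^g\bm r^f$; since the FTT-rank of a function is the minimal rank over all representations of the form \eqref{eq:ftt}, the ``at most'' bound then follows at once.

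First I would fix FTT decompositions $f(x_1,\ldots,x_d)=F_1(x_1)\cdots F_d(x_d)$ with $F_i(x_i)\in\mathbb{R}^{r^f_{i-1},r^f_i}$ and $g(x_1,\ldots,x_d)=G_1(x_1)\cdots G_d(x_d)$ with $G_i(x_i)\in\mathbb{R}^{r^g_{i-1},r^g_i}$, keeping the convention $r^f_0=r^f_d=r^g_0=r^g_d=1$. For every fixed $x$ the values $f(x)$ and $g(x)$ are scalars, so their product equals their Kronecker product, $g(x)f(x)=g(x)\otimes f(x)$.

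The key step is the mixed-product rule $(AB)\otimes(CD)=(A\otimes C)(B\otimes D)$ for the Kronecker product, which, applied inductively across the $d$ cores, yields
\begin{equation*}
g(x)f(x)=\bigl(G_1(x_1)\cdots G_d(x_d)\bigr)\otimes\bigl(F_1(x_1)\cdots F_d(x_d)\bigr)=H_1(x_1)\cdots H_d(x_d),\qquad H_i(x_i):=G_i(x_i)\otimes F_i(x_i).
\end{equation*}
Each factor lies in $\mathbb{R}^{r^g_{i-1}r^f_{i-1},\,r^g_i r^f_i}$ and the boundary ranks satisfy $r^g_0 r^f_0=r^g_d r^f_d=1$, so this is a legitimate FTT representation as in \eqref{eq:ftt} with rank $\bm r^g\bm r^f$. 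Hence $g\cdot f$ has FTT-rank at most $\bm r^g\bm r^f$.

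The argument is essentially bookkeeping, so there is no genuine obstacle; the only points needing mild care are that the Kronecker-product legs are ordered consistently across all $d$ cores and that the scalar/Kronecker identification is compatible with the rank-one boundary convention of \eqref{eq:ftt}. Note that, together with the observations that $\partial_i v$ has the same FTT-rank as $v$ and that a sum of several FTTs has rank bounded by the sum of their ranks, this lemma will yield the FTT-rank bound for $\operatorname{NonLin}(v)=\|\nabla v\|^2$ in the following lemma.
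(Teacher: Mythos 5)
Your proposal is correct and uses essentially the same argument as the paper: both construct the product representation core-by-core via the Kronecker product and the mixed-product identity, yielding cores in $\mathbb{R}^{r^g_{i-1}r^f_{i-1},\,r^g_i r^f_i}$. The only (immaterial) difference is the ordering of the Kronecker factors.
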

\begin{proof}
    We write $f(x) = F_1(x_1)\cdot \ldots \cdot F_d(x_d)$ and $g(x) = G_1(x_1)\cdot \ldots \cdot G_d(x_d)$ with $F_i(x_i)\in\mathbb{R}^{r^f_{i-1},r^f_i}$, $G_i(x_i)\in\mathbb{R}^{r^g_{i-1},r^g_i}$ for $i=1,\ldots,d$. Let $\otimes_k$ denote the standard matrix-Kronecker product with the convention that for two scalar values $a,b\in\mathbb{R}$ we set $a\otimes_k b = a\cdot b$. Then, we have
    \begin{equation*}
        f(x)g(x) = F_1(x_1)\cdot \ldots \cdot F_d(x_d)\cdot G_1(x_1)\cdot \ldots \cdot G_d(x_d) = F_1(x_1)\otimes_k G_1(x_1)\cdot \ldots \cdot F_d(x_d)\otimes_k G_d(x_d),
    \end{equation*}
    where $F_i(x_i)\otimes_k G_i(x_i)\in\mathbb{R}^{r^f_{i-1}r^g_{i-1},r^f_{i}r^g_{i}}$.
\end{proof}
Second, the rank bound of the nonlinear right hand side is provided.
\begin{lemma}
\label{lemma:f_partial_sum_ttrank}
    Let $f$ have FTT-rank $\bm{r}$, then $\operatorname{NonLin}(f) = \|\nabla f\|^2 = \sum\limits_{i=1}^d  \left( \frac{\partial f}{\partial x_i}\right)^2$ has FTT-rank at most $2\bm{r}^2$.
\end{lemma}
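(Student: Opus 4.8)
The plan is to combine the two preceding lemmas: Lemma~\ref{lemma:fsquare_ttrank} controls the rank of each squared partial derivative, while the Laplace-like block construction used in the proof of Lemma~\ref{lem:lin_ftt_rank} (cf.\ equation~\eqref{eq:lin_ftt_bound}) lets us add the $d$ summands without the rank picking up a factor $d$.

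First I would write $f(x)=F_1(x_1)\cdots F_d(x_d)$ with $F_i(x_i)\in\mathbb{R}^{r_{i-1},r_i}$ and note that, since only the $i$-th factor depends on $x_i$, the product rule gives $\tfrac{\partial f}{\partial x_i}=F_1(x_1)\cdots F_{i-1}(x_{i-1})\,F_i'(x_i)\,F_{i+1}(x_{i+1})\cdots F_d(x_d)$, which is again an FTT whose components have the same sizes $r_{i-1}\times r_i$; hence $\tfrac{\partial f}{\partial x_i}$ has FTT-rank at most $\bm{r}$. Applying Lemma~\ref{lemma:fsquare_ttrank} with $g=f=\tfrac{\partial f}{\partial x_i}$ then yields, with $\otimes_k$ the matrix Kronecker product,
\begin{equation*}
\left(\frac{\partial f}{\partial x_i}\right)^2 = \bigl(F_1(x_1)\otimes_k F_1(x_1)\bigr)\cdots\bigl(F_i'(x_i)\otimes_k F_i'(x_i)\bigr)\cdots\bigl(F_d(x_d)\otimes_k F_d(x_d)\bigr),
\end{equation*}
an FTT whose $j$-th component equals $F_j(x_j)\otimes_k F_j(x_j)\in\mathbb{R}^{r_{j-1}^2,r_j^2}$ for $j\neq i$ and $F_i'(x_i)\otimes_k F_i'(x_i)\in\mathbb{R}^{r_{i-1}^2,r_i^2}$ for $j=i$.

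The key observation is that for every $i$ the ``off-diagonal'' components $F_j\otimes_k F_j$, $j\neq i$, are the \emph{same}, so $\operatorname{NonLin}(f)=\sum_{i=1}^d\bigl(\tfrac{\partial f}{\partial x_i}\bigr)^2$ is a Laplace-like sum of FTTs in exactly the sense exploited in Lemma~\ref{lem:lin_ftt_rank}. I would therefore assemble it in block-bidiagonal form
\begin{equation*}
\operatorname{NonLin}(f)(x) =
\begin{bmatrix} F_1'\otimes_k F_1' & F_1\otimes_k F_1\end{bmatrix}
\begin{bmatrix} F_2\otimes_k F_2 & 0 \\ F_2'\otimes_k F_2' & F_2\otimes_k F_2\end{bmatrix}\cdots
\begin{bmatrix} F_{d-1}\otimes_k F_{d-1} & 0 \\ F_{d-1}'\otimes_k F_{d-1}' & F_{d-1}\otimes_k F_{d-1}\end{bmatrix}
\begin{bmatrix} F_d\otimes_k F_d \\ F_d'\otimes_k F_d'\end{bmatrix}
\end{equation*}
(arguments $x_j$ suppressed), which multiplies out to $\sum_{i=1}^d\bigl(\tfrac{\partial f}{\partial x_i}\bigr)^2$ just as \eqref{eq:lin_ftt_bound} multiplies out to $\operatorname{Lin}(f)$. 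Reading off the sizes of the interior factors, the $j$-th one lies in $\mathbb{R}^{2r_{j-1}^2,\,2r_j^2}$, with the first and last collapsing correctly since $r_0=r_d=1$; hence $\operatorname{NonLin}(f)$ has FTT-rank at most $2\bm{r}^2$.

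The only point requiring a word of care is that the componentwise derivatives $F_i'$ exist, i.e.\ that $f$ admits an FTT representation with $\mathcal{C}^1$ (indeed $\mathcal{C}^2$) components; this is covered by the standing smoothness assumption and does not affect the rank count. The genuinely load-bearing step is the same one as in Lemma~\ref{lem:lin_ftt_rank}: because each summand perturbs only a single tensor component while all others stay equal to the common factors $F_j\otimes_k F_j$, the $d$-term sum compresses into two ``rank channels'' rather than $d$, yielding $2\bm{r}^2$ in place of the naive $d\,\bm{r}^2$. I do not expect any real obstacle beyond making this block-matrix bookkeeping precise.
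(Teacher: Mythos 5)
Your proof is correct and follows essentially the same route as the paper: bound each $\bigl(\tfrac{\partial f}{\partial x_i}\bigr)^2$ by $\bm{r}^2$ via Lemma~\ref{lemma:fsquare_ttrank} and then absorb the $d$-term sum into a Laplace-like block-bidiagonal FTT as in Lemma~\ref{lem:lin_ftt_rank}, giving the factor $2$ instead of $d$. If anything, you are more explicit than the paper in identifying the common components of the summands as $F_j\otimes_k F_j$, which is exactly the detail needed to make the ``same derivation as Lemma~\ref{lem:lin_ftt_rank}'' remark rigorous.
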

\begin{proof}
Note that $\frac{\partial f}{\partial x_i} = F_1(x_1)\ldots \partial_{x_i}F_i(x_i)\ldots F_d(x_d)$ has FTT-rank $\leq \bm{r}$ for all $i$. By Lemma \ref{lemma:fsquare_ttrank}, $(\frac{\partial f}{\partial x_i})^2$ has FTT-rank at most $\bm{r}^2$. To bound the FTT-rank of $\sum_{i=1}^d  \left( \frac{\partial f}{\partial x_i}\right)^2$, the derivation is the same as in the proof of Lemma \ref{lem:lin_ftt_rank}, only that the operator $\mathcal{D}$ is replaced by an operator mapping $\mathcal{C}^1(\mathbb{R})\rightarrow \mathcal{C}(\mathbb{R})$ and $v\mapsto (\partial_x v)^2$.
\end{proof}
We now turn our view on the discretization of the corresponding operator with respect to $\Pi_{\bm{n}}$. 

\subsubsection{The operator in Tensor Train format}
For a practical algorithm, we need a discretization of $\operatorname{NonLin}$ on the finite set $\Pi_{\mathbf{n}}$ such as \eqref{eq:LinOpt_discrete} for the linear part. Here, we refrain from deriving a formula in the general setting of a full coefficient tensor and directly examine the case of a TT with fixed rank. We consider the square operation first. Let $\bm{n}\in\mathbb{N}^d$ and the \textit{multiplication} operation $M_g \colon f\mapsto g\cdot f$, where $g, f$ are given by
\begin{align}
\label{eq:polyform_f_g}
    f(x) = v_{\bm{A}}(x) =  \sum\limits_{\bm{\alpha}} \bm{A}[\bm{\alpha}] \prod\limits_{i=1}^d p_{\alpha_i}^i (x_i), \quad 
    g(x) = v_{\bm{B}}(x) =  \sum\limits_{\bm{\beta}\in[\bm{n}]} \bm{B}[\bm{\beta}] \prod\limits_{j=1}^d p_{\beta_j}^j (x_j)
\end{align} 
with tensors $\bm{A}, \bm{B}\in \mathbb{R}^{\bm{n}+1}$ both given in Tensor Train format with TT-rank $\bm{r}=(r_1,\ldots,r_{d-1})\in\mathbb{N}^{d-1}$ and
\begin{equation}
   \bm{A}[\bm{\alpha}] = A_1[\alpha_1]\cdots A_d[\alpha_d], \quad 
\bm{B}[\bm{\beta}] = B_1[\beta_1]\cdots B_d[\beta_d]. 
\end{equation}
We aim to define a Tensor Train operator $\bm{M}_{\bm{B}} \colon \mathbb{R}^{\bm{n}+1} \rightarrow \mathbb{R}^{2\bm{n}+1}$ such that
\begin{align}
    M_g(f) = v_{\bm{M}_{\bm{B}}(\bm{A})} = \sum\limits_{\bm{\gamma}\in[2\bm{n}]} \bm{M}_{\bm{B}}(\bm{A})[\bm{\gamma}] \prod\limits_{i=1}^d p_{\gamma_i}^i (x_i).
\end{align}
For $n\in\mathbb{N}_0$ let $T_{i,n}\in\mathbb{R}^{n+1,n+1}$ denote the transformation matrix mapping the coefficients of Legendre polynomials up to degree $n$ on $[a_i,b_i]$ to the corresponding coefficients of standard monomials $1,x,x^2,\ldots$ up to degree $n$. Let 
\begin{align}
 \hat{\bm{A}}[\bm{\alpha}] &:= 
                 \hat{A}_1[\alpha_1]\cdots \hat{A}_d[\alpha_d], \quad
 \hat{A}_i[\alpha_i] = \sum\limits_{\alpha_i'=0}^{n_i} T_{i,n_i}[\alpha_i,\alpha_i']A_i[\alpha_i'], \\
\hat{\bm{B}}[\bm{j}] &:=  
                   \hat{B}_1[\beta_1]\cdots \hat{B}_d[\beta_d],\quad 
 \hat{B}_i[\beta_i]:=\sum\limits_{\beta_i'=0}^{n_i} T_{i,n_i}[\beta_i,\beta_i']B_i[\beta_i'],
\end{align}
define the coefficient tensors of $f$ and $g$ with respect to monomials. Now, for $i=1\ldots,d$ and $\alpha_i=0,\ldots,n_i$ define the matrix $D_{i,\alpha_i}$ by

\begin{equation}
    D_{i,\alpha_i}:=\begin{bNiceArray}{cccccc}[margin]
     \Block{1-6}{\mathbf{0}_{\alpha_i,n_i+1}} & &&&& \\
    \hline
    \Block{3-6}{I_{n_i+1,n_i+1}} & &&&& \\
    &&&&& \\
    &&&&& \\ 
    \hline 
    \Block{2-6}{\mathbf{0}_{n_i+1-\alpha_i,n_i+1}}& &&&&   \\
    &&&&&
    \end{bNiceArray} \in \mathbb{R}^{2n_i+1, n_i+1},
\end{equation}
where $\bm{0}_{m,n} \in\mathbb{R}^{m,n}$ is a matrix with all entries equal to $0$, which we define to be empty if $m$ or $n$ equal zero.
Furthermore, for $i=1,\ldots, d$, $k_{i-1},\ell_{i-1}\in\{1,\ldots,r_{i-1}\}$, $k_{i},\ell_i\in\{1,\ldots, r_i\}$ we define the vector $ \hat{\mathrm{C}}_i[k_{i-1}, \ell_{i-1}, k_i, \ell_i] \in\mathbb{R}^{2 n_i +1}$ 
as
\begin{equation}
\hat{\mathrm{C}}_i[k_{i-1}, \ell_{i-1}, k_i, \ell_i] = 
\sum\limits_{\alpha_i=0}^{n_i}
D_{i, \alpha_i} \hat{B}_i[k_{i-1},\,:\,, k_i]
\hat{A}_i[\ell_{i-1}, \alpha_i, \ell_i].
\end{equation}
Note that $D_{i,\alpha_i}\hat{B}_i[k_{i-1}, \,:\, , k_i]  $ denotes a matrix-vector multiplication, whereas $\hat{A}_i[\ell_{i-1}, \alpha_i, \ell_i]$ is scalar valued. 

With slight abuse of notation, we denote the $\gamma_i$-th entry of the vector $\hat{\mathrm{C}}_i[k_{i-1}, \ell_{i-1}, k_i, \ell_i] $ \ifnum\classstyle=0 \\ \fi
\ifnum\classstyle=3 \\ \fi by $
\hat{\mathrm{C}}_i[k_{i-1}, \ell_{i-1}, \gamma_i, k_i, \ell_i]\in\mathbb{R} $, which defines an order $5$ tensor $\hat{C}_{i} \in\mathbb{R}^{r_{i-1}, r_{i-1}, 2n_i+1, r_i, r_i}$. For convenience, we reshape $\hat{C}_i$ to an order $3$ tensor by flattening together the first two and last two dimensions, again overloading notation with $\hat{C}_i\in\mathbb{R}^{r_{i-1}^2, 2n_i+1, r_i^2}$. Now we revert to the Legendre polynomial system and define
the coefficient tensor $\bm{C}\in\mathbb{R}^{2\bm{n}+1}$ given in TT format by
\begin{equation}
\label{eq:fg_hadamard_legendrepoly_tt_tensor}
     \bm{C}[\bm{\gamma}] := C_1[\gamma_1]\cdots C_d[\gamma_d], 
     \quad 
     C_i[\gamma_i] = \sum\limits_{\gamma_i'=0}^{2n_i} T_{i,2n_i}^{-1}[\gamma_i,\gamma_i']\hat{C}_i[\gamma_i'].
\end{equation}
This construction yields the following result.
\begin{lemma}
\label{lemma:fg_hadamard}
    Let $f$ and $g$ have FTT-rank $\bm{r}$ and given as in \eqref{eq:polyform_f_g}. Then, $fg$ has FTT-rank at most $\bm{r}^2$, in particular 
    \begin{equation}
    g(x)f(x) = 
    \sum\limits_{\bm{\gamma}\in[2\bm{n}]}  C[\bm{\gamma}] \prod\limits_{i=1}^d p_{\gamma_i}^i(x_i), 
\end{equation}
with coefficient tensor $\bm{C}$ with TT-rank at most $\bm{r}^2$ given by \eqref{eq:fg_hadamard_legendrepoly_tt_tensor}.
\end{lemma}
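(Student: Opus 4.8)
The plan is to verify the claimed identity by tracking the coefficient tensor through the three changes of polynomial basis used in the construction: Legendre to monomial, a monomial-level multiplication, and monomial back to Legendre. First I would write both $f$ and $g$ in the monomial basis, so that $f(x) = \sum_{\bm\alpha} \hat{\bm A}[\bm\alpha]\prod_i x_i^{\alpha_i}$ and $g(x) = \sum_{\bm\beta} \hat{\bm B}[\bm\beta]\prod_i x_i^{\beta_i}$, which is exactly what the matrices $T_{i,n_i}$ accomplish componentwise; the fact that $\hat{\bm A}$ and $\hat{\bm B}$ inherit the TT format with the same rank $\bm r$ is immediate because each core is transformed by a fixed matrix acting only on its physical index, so $\hat A_i[\alpha_i] = \sum_{\alpha_i'} T_{i,n_i}[\alpha_i,\alpha_i']A_i[\alpha_i']$ is again an $r_{i-1}\times r_i$ matrix.

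Next I would carry out the monomial-level product. Since $x_i^{\alpha_i}\cdot x_i^{\beta_i} = x_i^{\alpha_i+\beta_i}$, multiplying $g$ and $f$ in monomials amounts, dimension by dimension, to a discrete convolution of the two coefficient sequences, and the role of $D_{i,\alpha_i}$ is precisely to realize ``shift the vector of $\beta_i$-coefficients up by $\alpha_i$'': applying $D_{i,\alpha_i}$ to a vector in $\mathbb R^{n_i+1}$ embeds it into $\mathbb R^{2n_i+1}$ with an offset of $\alpha_i$, so that the $\beta_i$-th entry lands in position $\alpha_i+\beta_i$. Summing $D_{i,\alpha_i}\hat B_i[k_{i-1},:,k_i]$ against the scalars $\hat A_i[\ell_{i-1},\alpha_i,\ell_i]$ over $\alpha_i$ is then exactly the convolution of the $i$-th cores, producing the degree-$2n_i$ polynomial coefficients in direction $i$. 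I would make this precise by checking the scalar identity $\big(\hat{\mathrm C}_i[k_{i-1},\ell_{i-1},\cdot,k_i,\ell_i]\big)_{\gamma_i} = \sum_{\alpha_i+\beta_i=\gamma_i}\hat B_i[k_{i-1},\beta_i,k_i]\,\hat A_i[\ell_{i-1},\alpha_i,\ell_i]$, and then observing that the product of two TT decompositions is the TT decomposition whose cores are the Kronecker products of the original cores in the physical-index-convolved sense — this is the algebraic content already used in Lemma \ref{lemma:fsquare_ttrank}, only now carried out at the level of coefficient tensors rather than matrix-valued functions. The flattening of the order-$5$ tensor $\hat C_i$ to an order-$3$ tensor in $\mathbb R^{r_{i-1}^2,2n_i+1,r_i^2}$ is just the bookkeeping that makes ``Kronecker product of cores'' literally true, and it is what yields the rank bound $\bm r^2$.

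Finally I would revert from the monomial basis back to the Legendre basis on $[a_i,b_i]$ via $T_{i,2n_i}^{-1}$ applied to each core, i.e. $C_i[\gamma_i] = \sum_{\gamma_i'} T_{i,2n_i}^{-1}[\gamma_i,\gamma_i']\hat C_i[\gamma_i']$, which again preserves the TT format and the rank since it acts only on the physical index of each core. Composing the three steps, $v_{\bm C}$ equals $g\cdot f$ as claimed, the TT-rank of $\bm C$ is at most $\bm r^2$, and the degree statement $\bm C\in\mathbb R^{2\bm n+1}$ is built into the construction. The one point requiring genuine care — the ``main obstacle'', such as it is — is verifying that the shift matrices $D_{i,\alpha_i}$ together with the index conventions (indexing from $0$, the sizes $n_i+1$ vs $2n_i+1$, the placement of the identity block) really do implement the convolution with the correct offsets and no off-by-one errors; once that combinatorial identity is nailed down, everything else is the routine observation that all three basis changes and the Kronecker-product structure are preserved under the TT format. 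I would therefore organize the proof as: (i) monomial rewriting preserves TT rank; (ii) $D_{i,\alpha_i}$ implements the shift, hence $\hat C_i$ is the convolution of cores and $\hat{\bm C}$ is the monomial coefficient tensor of $gf$ with TT-rank $\le\bm r^2$; (iii) the inverse transform returns the Legendre coefficient tensor $\bm C$ with the same rank bound, completing the claim.
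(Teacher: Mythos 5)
Your proposal is correct and follows essentially the same route as the paper, which presents exactly this construction (core-wise Legendre-to-monomial transform, the shift matrices $D_{i,\alpha_i}$ realizing the per-dimension coefficient convolution with the Kronecker-product rank bound $\bm{r}^2$, and the core-wise inverse transform) and then states the lemma as its immediate consequence. Your write-up simply makes explicit the convolution identity for $\hat{\mathrm{C}}_i$ that the paper leaves implicit, which is a reasonable thing to verify.
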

By this Lemma, we have 
\begin{equation}
    \bm{M}_{\bm{B}}(\bm{A})= \bm{C} 
\end{equation}
with $\bm{C}$ from \eqref{eq:fg_hadamard_legendrepoly_tt_tensor}. For ease of notation, we further define the \textit{square} operation $\bm{S}\colon \mathbb{R}^{\bm{n}+1}\rightarrow \mathbb{R}^{2\bm{n}+1}, \bm{A}\mapsto \bm{M}_{\bm{A}}(\bm{A})$.

Finally, note that the partial derivative $\partial_{x_i}$ defines a linear operator that, analogous to Section \ref{sec:linear_part}, implies a linear operator $\bm{L}_{x_i}\colon\mathbb{R}^{\bm{n}+1}\to\mathbb{R}^{\bm{n}+1}$ based on the polynomial discretization such that $\partial_{x_i} v_{\bm{A}}= v_{\bm{L}_{x_i}\bm{A}}$. This operator has the form $\bm{L}_{x_i} = \mathcal{I}\otimes \ldots \otimes \mathcal{I} \otimes D_{x_i}\otimes \mathcal{I}\otimes \ldots \otimes \mathcal{I}$ with $D_{x_i}\in\mathbb{R}^{n_i+1,n_i+1}$ given in 
\ifnum\online=1
the Online Supplementary Material.
\else
Appendix \ref{app:nonlinear_part}.\fi 
Putting all of the above together, we see that 
\ifnum\classstyle=1
\begin{equation}
\label{eq:non_linear_general_algebraic_formula}
    \langle \nabla v_{\bm{B}}, \nabla v_{\bm{A}} \rangle = \sum\limits_{i=1}^d (\partial_{x_i} v_{\bm{A}}) (\partial_{x_i} v_{\bm{B}}) 
    =  \sum\limits_{i=1}^d  v_{\bm{L}_{x_i}\bm{A}} v_{\bm{L}_{x_i}\bm{B}}
    =  \sum\limits_{i=1}^d
    v_{\bm{M}_{{\bm{L}_{x_i}\bm{B}}}(\bm{L}_{x_i}\bm{A})}
    =  v_{\sum\limits_{i=1}^d\bm{M}_{{\bm{L}_{x_i}\bm{B}}}(\bm{L}_{x_i}\bm{A})} 
\end{equation}
\fi
\ifnum\classstyle=2
\begin{equation}
\begin{aligned}
    \label{eq:non_linear_general_algebraic_formula}
    \langle \nabla v_{\bm{B}}, \nabla v_{\bm{A}} \rangle &= \sum\limits_{i=1}^d (\partial_{x_i} v_{\bm{A}}) (\partial_{x_i} v_{\bm{B}}) 
    =  \sum\limits_{i=1}^d  v_{\bm{L}_{x_i}\bm{A}} v_{\bm{L}_{x_i}\bm{B}}\\
    &=  \sum\limits_{i=1}^d
    v_{\bm{M}_{{\bm{L}_{x_i}\bm{B}}}(\bm{L}_{x_i}\bm{A})}
    =  v_{\sum\limits_{i=1}^d\bm{M}_{{\bm{L}_{x_i}\bm{B}}}(\bm{L}_{x_i}\bm{A})} 
\end{aligned}
\end{equation}
\fi
\ifnum\classstyle=0
\begin{equation}
\begin{aligned}
    \label{eq:non_linear_general_algebraic_formula}
    \langle \nabla v_{\bm{B}}, \nabla v_{\bm{A}} \rangle &= \sum\limits_{i=1}^d (\partial_{x_i} v_{\bm{A}}) (\partial_{x_i} v_{\bm{B}}) 
    =  \sum\limits_{i=1}^d  v_{\bm{L}_{x_i}\bm{A}} v_{\bm{L}_{x_i}\bm{B}}\\
    &=  \sum\limits_{i=1}^d
    v_{\bm{M}_{{\bm{L}_{x_i}\bm{B}}}(\bm{L}_{x_i}\bm{A})}
    =  v_{\sum\limits_{i=1}^d\bm{M}_{{\bm{L}_{x_i}\bm{B}}}(\bm{L}_{x_i}\bm{A})} 
\end{aligned}
\end{equation}
\fi
\ifnum\classstyle=3
\begin{equation}
\label{eq:non_linear_general_algebraic_formula}
    \langle \nabla v_{\bm{B}}, \nabla v_{\bm{A}} \rangle = \sum\limits_{i=1}^d (\partial_{x_i} v_{\bm{A}}) (\partial_{x_i} v_{\bm{B}}) 
    =  \sum\limits_{i=1}^d  v_{\bm{L}_{x_i}\bm{A}} v_{\bm{L}_{x_i}\bm{B}}
    =  \sum\limits_{i=1}^d
    v_{\bm{M}_{{\bm{L}_{x_i}\bm{B}}}(\bm{L}_{x_i}\bm{A})}
    =  v_{\sum\limits_{i=1}^d\bm{M}_{{\bm{L}_{x_i}\bm{B}}}(\bm{L}_{x_i}\bm{A})} 
\end{equation}
\fi
This leads to a Tensor Train operator representing the nonlinear part \eqref{eq:hjb_nonlin}. In particular, for $\bm{A},\bm{B}\in\mathbb{R}^{\bm{n}+1}$ let
\begin{align}
    {\bm{N}\!\!\bm{L}}(\bm{A}) &\coloneqq -\sum_{i=1}^d \bm{S}(\bm{L}_{x_i}\bm{A}),\label{eq:nonlinear_operator}\\
    {\bm{N}\!\!\bm{L}}_{\bm{B}}(\bm{A}) &\coloneqq -\sum_{i=1}^d \bm{M}_{\bm{L}_{x_i}\bm{B}}(\bm{L}_{x_i}\bm{A}).\label{eq:linearized_nonlinear_operator}
\end{align}
Then, by \eqref{eq:non_linear_general_algebraic_formula}, we have $\operatorname{NonLin}(v_{\bm{A}}) = v_{{\bm{N}\!\!\bm{L}}(\bm{A})} $ and $-\langle \nabla v_{\bm{B}}, \nabla v_{\bm{A}} \rangle = v_{{\bm{N}\!\!\bm{L}}_{\bm{B}}(\bm{A})}$. This concludes the derivation of the nonlinear part.

\subsection{Projection and retraction}
The discussion so far shows that linear and nonlinear operations on the polynomial discretization with Tensor Trains may increase the rank as well as the underlying polynomial degree. Therefore, we shall discuss operations that keep a fixed polynomial degree and a fixed TT-rank with possible error control, namely \textit{projection} and \textit{retraction}. Regarding the projection, let $\bm{n},\bm{m}\in\mathbb{N}_0$, $\bm{n}\leq \bm{m}$ and define $\mathcal{P}_{\bm{m},\bm{n}} \colon \operatorname{span}\Pi_{\bm{m}}\to \operatorname{span}\Pi_{\bm{n}}$ by
\begin{align}\label{eq:projection}
    \mathcal{P}_{\bm{m},\bm{n}}(\cdot)  \coloneqq \sum_{\alpha_1,\ldots,\alpha_d=0}^{n_1,\ldots, n_d} \bigotimes_{i=1}^d p^i_{\alpha_{i}} 
    \left\langle \bigotimes_{i=1}^d p^i_{\alpha_{i}}, \,\cdot\, \right\rangle
\end{align}
Due to the orthonormality of the $p^i_{\alpha_i}$, the projection is simply obtained by truncating the coefficients, as the following result states.
\begin{lemma}\label{lem:projection}
    For $\bm{n}\leq\bm{m}$ and $\bm{A}\in\mathbb{R}^{\bm{m}+1}$ we have $\mathcal{P}_{\bm{m},\bm{n}}V_{\bm{A}} = V_{\bm{P}_{\bm{m},\bm{n}}\bm{A}}$, where $\bm{P}_{\bm{m},\bm{n}}\colon \mathbb{R}^{\bm{m}+1}\rightarrow\mathbb{R}^{\bm{n}+1}$ is defined by
    \begin{equation}
    \label{eq:discrete_projection}
        (\bm{P}_{\bm{m},\bm{n}}\bm{A})[\alpha_1,\ldots,\alpha_d] = \bm{A}[\alpha_1,\ldots,\alpha_d]
    \end{equation}
    for all $\bm{A}\in\mathbb{R}^{\bm{m}+1}$ and $\bm{\alpha}\in\mathbb{N}^{\bm{n}}_0$.
\end{lemma}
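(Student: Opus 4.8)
The plan is to unwind both sides in the orthonormal Legendre basis and reduce the claim to the factorization of the $L^2(K)$ inner product over the tensor product structure. Concretely, write $V_{\bm{A}} = \sum_{\bm{\beta}\in[\bm{m}]} \bm{A}[\bm{\beta}]\, p_{\bm{\beta}}$ as in \eqref{eq:TT_repr}, and insert this into the definition \eqref{eq:projection} of $\mathcal{P}_{\bm{m},\bm{n}}$, exchanging the finite sums:
\begin{equation*}
\mathcal{P}_{\bm{m},\bm{n}} V_{\bm{A}}
= \sum_{\bm{\alpha}\in[\bm{n}]} p_{\bm{\alpha}}\, \Bigl\langle p_{\bm{\alpha}}, \sum_{\bm{\beta}\in[\bm{m}]} \bm{A}[\bm{\beta}]\, p_{\bm{\beta}} \Bigr\rangle
= \sum_{\bm{\alpha}\in[\bm{n}]} \sum_{\bm{\beta}\in[\bm{m}]} \bm{A}[\bm{\beta}]\, \langle p_{\bm{\alpha}}, p_{\bm{\beta}}\rangle\, p_{\bm{\alpha}}.
\end{equation*}

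The one substantive step is to check that $\langle p_{\bm{\alpha}}, p_{\bm{\beta}}\rangle_{L^2(K)} = \delta_{\bm{\alpha},\bm{\beta}}$ for multi-indices $\bm{\alpha},\bm{\beta}\in\mathbb{N}_0^d$. Since $K = \bigtimes_{i=1}^d [a_i,b_i]$ and $p_{\bm{\alpha}} = \bigotimes_{i=1}^d p^i_{\alpha_i}$, Fubini gives
$\langle p_{\bm{\alpha}}, p_{\bm{\beta}}\rangle_{L^2(K)} = \prod_{i=1}^d \langle p^i_{\alpha_i}, p^i_{\beta_i}\rangle_{L^2([a_i,b_i])} = \prod_{i=1}^d \delta_{\alpha_i,\beta_i} = \delta_{\bm{\alpha},\bm{\beta}}$,
using that each $p^i_{\alpha}$ is, by construction, the $\alpha$-th orthonormal Legendre polynomial on $[a_i,b_i]$. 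Substituting this back collapses the double sum: every $\bm{\beta}\neq\bm{\alpha}$ drops out, and since $\bm{n}\leq\bm{m}$ implies $[\bm{n}]\subseteq[\bm{m}]$ the surviving term $\bm{\beta}=\bm{\alpha}$ indeed lies in the range of summation, yielding $\mathcal{P}_{\bm{m},\bm{n}} V_{\bm{A}} = \sum_{\bm{\alpha}\in[\bm{n}]} \bm{A}[\bm{\alpha}]\, p_{\bm{\alpha}}$.

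Finally, by the definition \eqref{eq:discrete_projection} of $\bm{P}_{\bm{m},\bm{n}}$ we have $(\bm{P}_{\bm{m},\bm{n}}\bm{A})[\bm{\alpha}] = \bm{A}[\bm{\alpha}]$ for all $\bm{\alpha}\in[\bm{n}]$, so the last display is exactly $V_{\bm{P}_{\bm{m},\bm{n}}\bm{A}}$ in the notation of \eqref{eq:TT_repr}, which proves the claim. There is no real obstacle here; the only thing to be slightly careful about is the index bookkeeping, i.e. that the truncation $[\bm{n}]\subseteq[\bm{m}]$ is compatible with restricting the sum, which is immediate from the componentwise order $\bm{n}\leq\bm{m}$.
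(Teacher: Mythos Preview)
Your proof is correct and follows exactly the reasoning the paper intends: the paper does not give a formal proof of this lemma but simply remarks beforehand that ``due to the orthonormality of the $p^i_{\alpha_i}$, the projection is simply obtained by truncating the coefficients''. Your write-up is a careful expansion of precisely this observation, using the tensor-product factorization of the $L^2(K)$ inner product and the orthonormality of the univariate Legendre polynomials.
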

Note that by Parseval's identity, the projection error in $L^2$-norm can be computed by simply adding the squares of the elements that are eliminated by the projection, i.e. with assumptions of Lemma \ref{lem:projection}, we have
    \begin{align}
        \|\mathcal{P}_{\bm{m},\bm{n}}v_{\bm{A}} -v_{\bm{A}} \|^2_{L^2(K)} = \sum_{\alpha_1=n_1+1,\ldots,\alpha_d=n_d+1}^{m_1,\ldots,m_d} \bm{A}[\alpha_1,\ldots,\alpha_d]^2.
    \end{align}
A possible realization of a \textit{retraction operator}
\begin{equation}
\label{eq:retraction_op}
    \bm{R}_{\bm{r}}\colon \bigcup_{\hat{\bm{r}}\geq \bm{r}} \mathcal{M}_{\hat{\bm{r}}}\to \mathcal{M}_{\bm{r}},
\end{equation} for given fixed rank $\bm{r}\in\mathbb{N}^{d-1}$, is obtained by using the \textit{TT rounding} scheme first presented in \cite[Algorithm 2]{oseledets2011tensor}, which is based on efficient high-order singular value decomposition exploiting the structure of TTs. The operators in \eqref{eq:discrete_projection} and \eqref{eq:retraction_op} provide us with the necessary tensor operations to fix the degree as well as the rank of the HJB solution, concluding this section.





\section{A direct low-rank HJB solver}\label{sec:solver}




In this section, we consider polynomial potentials $\Phi\in\operatorname{span}\Pi_{\bm{n}}$ for some $\bm{n}\in\mathbb{N}_0^d$. If the potential is not available in polynomial form, we can obtain a suitable polynomial approximation e.g. by the \textit{Alternating Linear Scheme} (ALS) 
\ifnum\classstyle=2
\cite{holtz2012alternating} 
\else
\cite{holtz2012alternating}
\fi
as was done in 
\ifnum\classstyle=2\cite{oster2022approximating} 
\else
\cite{oster2022approximating}
\fi
for the purpose of approximating value functions. Crucially, the ALS yields an approximation in a chosen low rank TT format.
For $\Phi\in\operatorname{span}\Pi_{\bm{n}}$, we consider a projected version of the modified HJB equation \eqref{eq:shifted_hjb} restricted to the hypercube $K$ defined by
\begin{align}\label{eq:projected_hjb}
   \left\{
    \begin{array}{rcl}
 \partial_t v_t &=& \mathcal{P}_{2\bm{n},\bm{n}}\left[ \operatorname{Lin}(v_t) + \operatorname{NonLin}(v_t) \right], \\
 v_0 &=& \Phi,
    \end{array}
   \right.
   \quad \text{in } K,
\end{align}
for $t\in[0,T]$ and some $T>0$ large enough. Note, that the projection only acts on the nonlinear part, as the linear part does not increase the polynomial degree.

With the work from the previous section, we can show that this PDE is equivalent to an ODE on a tensor space.
 Let $\bm{L}$, ${\bm{N}\!\!\bm{L}}$, ${\bm{N}\!\!\bm{L}}_{\bm{B}}$ for any $\bm{B}\in\mathbb{R}^{\bm{n}+1}$ and $\bm{P} \equiv \bm{P}_{2\bm{n}+1,\bm{n}+1}$ be given by \eqref{eq:LinOpt_discrete}, \eqref{eq:nonlinear_operator}, \eqref{eq:linearized_nonlinear_operator} and \eqref{eq:discrete_projection}, respectively. Then the following theorem holds true.
 

\begin{theorem}[Projected HJB equation is equivalent to tensor-valued ODE]\label{thm:tensor_hjb}
    Let $\bm{A}(t)\in\mathbb{R}^{\mathbf{n}+1}$ be a solution of the tensor-valued ODE
    \begin{equation}\label{eq:tensor_hjb}
       \bm{{\dot{A}}}(t) = \bm{L}\bm{A}(t) + \bm{P}{\bm{N}\!\!\bm{L}}(\bm{A}(t)), \qquad \bm{A}(0) = \bm{A}_0,
    \end{equation}
    for $t\in[0,T]$. Then $v_t \coloneqq v_{\bm{A}(t)}$ solves \eqref{eq:projected_hjb}. Conversely, if $v_t\in\operatorname{span}\Pi_{\bm{n}}$ solves \eqref{eq:projected_hjb}, then there exists a unique $\bm{A}(t)\in\mathbb{R}^{\mathbf{n}+1}$ such that $v_t = v_{\bm{A}(t)}$ and $\bm{A}(t)$ solves \eqref{eq:tensor_hjb}.
\end{theorem}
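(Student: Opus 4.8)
The plan is to observe that the entire statement is a change of coordinates. The coordinate map $\bm{A}\mapsto v_{\bm{A}}$ is a linear isomorphism from $\mathbb{R}^{\bm{n}+1}$ onto $\operatorname{span}\Pi_{\bm{n}}$ (and likewise from $\mathbb{R}^{2\bm{n}+1}$ onto $\operatorname{span}\Pi_{2\bm{n}}$), because the tensor-product Legendre functions $\{p_{\bm{\alpha}}\}$ are orthonormal, hence linearly independent. Under this identification the right-hand side of the tensor ODE \eqref{eq:tensor_hjb} is exactly the pullback of the right-hand side of the projected PDE \eqref{eq:projected_hjb}, and, since the basis $\Pi_{\bm{n}}$ does not depend on $t$, time differentiation commutes with the coordinate map, i.e. $\partial_t v_{\bm{A}(t)} = v_{\dot{\bm{A}}(t)}$.

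For the forward implication I would take $\bm{A}(t)$ solving \eqref{eq:tensor_hjb}, set $v_t := v_{\bm{A}(t)}$, and compute $\partial_t v_t = v_{\dot{\bm{A}}(t)} = v_{\bm{L}\bm{A}(t)} + v_{\bm{P}{\bm{N}\!\!\bm{L}}(\bm{A}(t))}$ using linearity of $\bm{A}\mapsto v_{\bm{A}}$. The first summand equals $\operatorname{Lin}(v_t)$ by the lemma giving $\operatorname{Lin}v_{\bm{A}} = v_{\bm{L}\bm{A}}$. For the second summand, the identity $\operatorname{NonLin}(v_{\bm{A}}) = v_{{\bm{N}\!\!\bm{L}}(\bm{A})}$ shows $v_{{\bm{N}\!\!\bm{L}}(\bm{A}(t))} = \operatorname{NonLin}(v_t)\in\operatorname{span}\Pi_{2\bm{n}}$, and then Lemma \ref{lem:projection} (with $\bm{m}=2\bm{n}$) gives $v_{\bm{P}{\bm{N}\!\!\bm{L}}(\bm{A}(t))} = \mathcal{P}_{2\bm{n},\bm{n}}\operatorname{NonLin}(v_t)$. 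Since $\operatorname{Lin}(v_t)$ already lies in $\operatorname{span}\Pi_{\bm{n}}$ it is fixed by $\mathcal{P}_{2\bm{n},\bm{n}}$, so $\operatorname{Lin}(v_t) + \mathcal{P}_{2\bm{n},\bm{n}}\operatorname{NonLin}(v_t) = \mathcal{P}_{2\bm{n},\bm{n}}[\operatorname{Lin}(v_t)+\operatorname{NonLin}(v_t)]$, which is precisely \eqref{eq:projected_hjb}; the initial condition $v_0 = \Phi$ holds because $\bm{A}_0$ is understood to be the unique coefficient tensor of $\Phi\in\operatorname{span}\Pi_{\bm{n}}$.

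For the converse, given $v_t\in\operatorname{span}\Pi_{\bm{n}}$ solving \eqref{eq:projected_hjb}, I define $\bm{A}(t)$ by $\bm{A}(t)[\bm{\alpha}] := \langle p_{\bm{\alpha}}, v_t\rangle$; this is the unique tensor with $v_{\bm{A}(t)} = v_t$, and $t\mapsto\bm{A}(t)$ is differentiable because the coordinate map is a fixed finite-dimensional linear isomorphism and therefore transports differentiability of $t\mapsto v_t$. Running the chain of identities above in reverse and using injectivity of $\bm{A}\mapsto v_{\bm{A}}$ to strip off the outer $v_{(\cdot)}$ turns \eqref{eq:projected_hjb} into \eqref{eq:tensor_hjb}, and $v_0 = \Phi$ gives $\bm{A}(0) = \bm{A}_0$.

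The argument contains no genuine difficulty; the two places needing a little care are (i) checking that the degree-preservation of $\operatorname{Lin}$, already noted after \eqref{eq:projected_hjb}, is exactly what lets the projection be pulled in front of the whole bracket consistently with the tensor ODE only projecting the nonlinear part, and (ii) the regularity bookkeeping ensuring differentiability passes both ways through the coordinate map. One may additionally remark that the right-hand side of \eqref{eq:tensor_hjb} is a polynomial, hence locally Lipschitz, map of $\bm{A}$, so local existence and uniqueness of solutions is automatic, though this is not needed for the stated equivalence.
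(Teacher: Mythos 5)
Your proposal is correct and follows essentially the same route as the paper's proof: differentiate through the linear coordinate map $\bm{A}\mapsto v_{\bm{A}}$, apply the identities $\operatorname{Lin}v_{\bm{A}}=v_{\bm{L}\bm{A}}$, $\operatorname{NonLin}(v_{\bm{A}})=v_{{\bm{N}\!\!\bm{L}}(\bm{A})}$ and Lemma \ref{lem:projection}, and use injectivity of the map for the converse. Your explicit remark that $\operatorname{Lin}(v_t)$ is fixed by $\mathcal{P}_{2\bm{n},\bm{n}}$ (so the projection of the full bracket matches the tensor ODE, which projects only the nonlinear part) spells out a point the paper only notes in passing after \eqref{eq:projected_hjb}.
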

\begin{proof}
Let $\bm{A}(t)\in\mathbb{R}^{\mathbf{n}+1}$ solve \eqref{eq:tensor_hjb}. Then, 
\ifnum\classstyle=3
\begin{equation}
\begin{aligned}
    \dot{v}_{\bm{A}(t)} &= v_{\bm{{\dot{A}}}(t)} = v_{\bm{L}\bm{A}(t) + \bm{P}{\bm{N}\!\!\bm{L}}(A(t))} = v_{\bm{L}\bm{A}(t)} + v_{\bm{P}{\bm{N}\!\!\bm{L}}(\bm{A}(t))} \\
    &= \operatorname{Lin}(v_{\bm{A}(t)}) + \mathcal{P}_{2\bm{n}+1,\bm{n}+1}\left[ \operatorname{NonLin}(v_{\bm{A}(t)}) \right]
\end{aligned}
\end{equation}
\else
$\dot{v}_{\bm{A}(t)} = v_{\bm{{\dot{A}}}(t)} = v_{\bm{L}\bm{A}(t) + \bm{P}{\bm{N}\!\!\bm{L}}(A(t))} = v_{\bm{L}\bm{A}(t)} + v_{\bm{P}{\bm{N}\!\!\bm{L}}(\bm{A}(t))} = \operatorname{Lin}(v_{\bm{A}(t)}) + \mathcal{P}_{2\bm{n}+1,\bm{n}+1}\left[ \operatorname{NonLin}(v_{\bm{A}(t)}) \right]$ 
\fi
and $v_{\bm{A}(0)} = v_{\bm{A}_0} = \Phi$, showing the first part of the claim. Conversely, if $v_t\in\operatorname{span}\Pi_{\bm{n}}$ solves \eqref{eq:projected_hjb}, then there exists a unique $\bm{A}(t)$ with $v_t = v_{\bm{A}(t)}$ and $v_{\bm{{\dot{A}}}(t)} = \partial_t v_t = v_{\bm{L}\bm{A}(t) + \bm{P}{\bm{N}\!\!\bm{L}}(\bm{A}(t))}$. Since the mapping $\bm{A}\mapsto v_{\bm{A}}$ is injective, this yields the second part of the claim.
\end{proof}
The solution algorithm for \eqref{eq:tensor_hjb} which will be presented in the following relies on local linearizations of the HJB for stiffnes based stepsize control. Hence, we state the following result on the form of such local linearizations.

\begin{lemma}[Local linearization]\label{lem:linearized_tensor_hjb}
    Let $\bm{B}\in\mathbb{R}^{\mathbf{n}+1}$. Then, the linearization of \eqref{eq:tensor_hjb} at $\bm{B}$ is given by
    \begin{equation}\label{eq:linearized_hjb}
        \bm{{\dot{A}}}(t) = (\bm{L} +2\bm{P}{\bm{N}\!\!\bm{L}}_{\bm{B}}) \bm{A}(t) - \bm{P}{\bm{N}\!\!\bm{L}}(\bm{B}).
    \end{equation}
\end{lemma}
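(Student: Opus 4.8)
The plan is to read the right-hand side of \eqref{eq:tensor_hjb} as a map $\bm{G}\colon\mathbb{R}^{\bm{n}+1}\to\mathbb{R}^{\bm{n}+1}$, $\bm{G}(\bm{A})\coloneqq \bm{L}\bm{A}+\bm{P}{\bm{N}\!\!\bm{L}}(\bm{A})$, and to note that it splits into a \emph{linear} part $\bm{A}\mapsto\bm{L}\bm{A}$ and a \emph{homogeneous quadratic} part $\bm{A}\mapsto\bm{P}{\bm{N}\!\!\bm{L}}(\bm{A})$. The linearization of the ODE at $\bm{B}$ is, by definition, the affine map $\bm{A}\mapsto \bm{G}(\bm{B})+D\bm{G}(\bm{B})[\bm{A}-\bm{B}]$, so the whole task is to compute $D\bm{G}(\bm{B})$ and then collect the constant terms. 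Since $\bm{A}\mapsto\bm{L}\bm{A}$ and $\bm{P}$ are linear, they pass through the derivative unchanged, and only $D{\bm{N}\!\!\bm{L}}(\bm{B})$ remains to be determined.

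To differentiate ${\bm{N}\!\!\bm{L}}(\bm{A}) = -\sum_{i=1}^d \bm{S}(\bm{L}_{x_i}\bm{A})$ with $\bm{S}(\bm{A})=\bm{M}_{\bm{A}}(\bm{A})$, I would first establish that the pairing $(\bm{A},\bm{A}')\mapsto\bm{M}_{\bm{A}'}(\bm{A})$ is bilinear and symmetric. Linearity in the first argument is immediate from $M_g$ being a linear operator; symmetry (and hence linearity in the second argument) follows from Lemma \ref{lemma:fg_hadamard} together with injectivity of $\bm{C}\mapsto v_{\bm{C}}$ on $\operatorname{span}\Pi_{2\bm{n}}$: since $v_{\bm{M}_{\bm{A}'}(\bm{A})}=v_{\bm{A}}v_{\bm{A}'}=v_{\bm{A}'}v_{\bm{A}}=v_{\bm{M}_{\bm{A}}(\bm{A}')}$, we get $\bm{M}_{\bm{A}'}(\bm{A})=\bm{M}_{\bm{A}}(\bm{A}')$. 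Expanding $\bm{S}(\bm{B}+t\bm{H})=\bm{M}_{\bm{B}+t\bm{H}}(\bm{B}+t\bm{H})=\bm{S}(\bm{B})+2t\,\bm{M}_{\bm{B}}(\bm{H})+t^2\bm{S}(\bm{H})$ then gives $D\bm{S}(\bm{B})[\bm{H}]=2\bm{M}_{\bm{B}}(\bm{H})$, and the chain rule with the linear maps $\bm{L}_{x_i}$ yields
\begin{equation}
D{\bm{N}\!\!\bm{L}}(\bm{B})[\bm{H}] = -2\sum_{i=1}^d \bm{M}_{\bm{L}_{x_i}\bm{B}}(\bm{L}_{x_i}\bm{H}) = 2{\bm{N}\!\!\bm{L}}_{\bm{B}}(\bm{H}),
\end{equation}
using the definition \eqref{eq:linearized_nonlinear_operator} of ${\bm{N}\!\!\bm{L}}_{\bm{B}}$, which is linear in $\bm{H}$.

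Finally I would assemble the affine map. By linearity of ${\bm{N}\!\!\bm{L}}_{\bm{B}}$ and the identity ${\bm{N}\!\!\bm{L}}_{\bm{B}}(\bm{B}) = -\sum_i\bm{M}_{\bm{L}_{x_i}\bm{B}}(\bm{L}_{x_i}\bm{B}) = -\sum_i\bm{S}(\bm{L}_{x_i}\bm{B}) = {\bm{N}\!\!\bm{L}}(\bm{B})$, the constant contributions collapse:
\begin{equation}
\bm{G}(\bm{B})+D\bm{G}(\bm{B})[\bm{A}-\bm{B}] = \bm{L}\bm{A} + 2\bm{P}{\bm{N}\!\!\bm{L}}_{\bm{B}}(\bm{A}) + \bm{P}{\bm{N}\!\!\bm{L}}(\bm{B}) - 2\bm{P}{\bm{N}\!\!\bm{L}}(\bm{B}) = (\bm{L}+2\bm{P}{\bm{N}\!\!\bm{L}}_{\bm{B}})\bm{A} - \bm{P}{\bm{N}\!\!\bm{L}}(\bm{B}),
\end{equation}
which is exactly \eqref{eq:linearized_hjb}. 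The only genuinely delicate step — the one I would be most careful about — is the symmetry and bilinearity of $(\bm{A},\bm{A}')\mapsto\bm{M}_{\bm{A}'}(\bm{A})$; routing it through the identification $v_{\bm{M}_{\bm{A}'}(\bm{A})}=v_{\bm{A}}v_{\bm{A}'}$ and the injectivity of the coefficient map avoids having to manipulate the explicit Kronecker/transform formula \eqref{eq:fg_hadamard_legendrepoly_tt_tensor}. Everything else is the standard chain rule for a quadratic map together with the bookkeeping cancellation of the constant terms.
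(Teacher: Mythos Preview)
Your proof is correct and reaches the same conclusion by essentially the same mechanism as the paper, but the two arguments are organised from opposite ends of the function--tensor correspondence. The paper linearizes at the \emph{function} level first: it writes down the familiar Taylor expansion $\operatorname{NonLin}_{v_0}(v) = -2\langle\nabla v_0,\nabla v\rangle - \operatorname{NonLin}(v_0)$ of the quadratic map $v\mapsto -\|\nabla v\|^2$, and then invokes the already-established identities $-\langle\nabla v_{\bm{B}},\nabla v_{\bm{A}}\rangle = v_{{\bm{N}\!\!\bm{L}}_{\bm{B}}(\bm{A})}$ and $\operatorname{NonLin}(v_{\bm{B}}) = v_{{\bm{N}\!\!\bm{L}}(\bm{B})}$ to read off the tensor-level coefficients. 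You instead compute the Fr\'echet derivative directly in $\mathbb{R}^{\bm{n}+1}$, and route the one nontrivial step (bilinearity and symmetry of $\bm{M}$) back through the function side via injectivity of $\bm{C}\mapsto v_{\bm{C}}$. Your version is more explicit about what ``linearization of the ODE'' means formally and about the cancellation ${\bm{N}\!\!\bm{L}}_{\bm{B}}(\bm{B})={\bm{N}\!\!\bm{L}}(\bm{B})$; the paper's version is shorter because the quadratic structure is transparent at the PDE level and no separate bilinearity argument is needed.
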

    \begin{proof}
    Note that the linearization of $\operatorname{NonLin}(v) = -\|\nabla v\|^2$ around a fixed $v_0\in\operatorname{span}\Pi_{\mathbf{n}}$ is given by
\begin{equation}
   \operatorname{NonLin}_{v_0}(v) = -2\langle \nabla v_0, \nabla v\rangle + \|\nabla v_0\|^2 = -2\langle \nabla v_0, \nabla v\rangle - \operatorname{NonLin}(v_0).
\end{equation}
Now, for $\bm{A},\bm{B}\in\mathbb{R}^{\mathbf{n}+1}$ we have
\begin{align*}
\operatorname{NonLin}_{v_{\bm{B}}}(v_{\bm{A}}) &= -2\langle \nabla v_{\bm{B}}, \nabla v_{\bm{A}}\rangle - \operatorname{NonLin}(v_{\bm{B}}) \\
    &= 2v_{{\bm{N}\!\!\bm{L}}_{\bm{B}}\bm{A}} - v_{{\bm{N}\!\!\bm{L}}(\bm{B})} \\
    &= v_{2{\bm{N}\!\!\bm{L}}_{\bm{B}}\bm{A}-{\bm{N}\!\!\bm{L}}(\bm{B})}.
\end{align*}
Since the other operators appearing on the right hand side of \eqref{eq:projected_hjb} are linear, \eqref{eq:linearized_hjb} follows.
\end{proof}
By Theorem \ref{thm:tensor_hjb}, it suffices to solve \eqref{eq:tensor_hjb} for $\bm{A}(t)$ since this solution defines the solution of \eqref{eq:projected_hjb} via $v_t = v_{\bm{A}(t)}$.
In the rest of this section, we present principled ways of computing approximate solutions to \eqref{eq:tensor_hjb} on the low rank manifold $\mathcal{M}_{\mathbf{r}}$. Two methods are investigated: 
\begin{enumerate}
    \item A simple explicit Euler scheme with adaptive step sizes and retraction after every step, see Section \ref{sec:explicit_euler}.
    \item A dynamical low rank integrator designed for time integration of Tensor Trains 
    \ifnum\classstyle=2
    \cite{lubich2015}, 
    \else
    \cite{lubich2015},
    \fi
    see Section \ref{sec:dlra}.
\end{enumerate}

\subsection{Time adaptive explicit Euler scheme}
\label{sec:explicit_euler}

\paragraph{Preliminaries.} In the following, we define a number of time points $N\in\mathbb{N}$, a sequence of times $0 = t_0 < t_1 < \ldots < t_{N} = T$, a TT-rank function $t\mapsto \bm{r}_t\in\mathbb{N}^{d-1}$ assigning to every time a Tensor Train rank and discrete approximations  $\mathcal{M}_{\mathbf{r}_{t_n}} \ni \bm{Y}_{t_n}\approx \bm{A}(t_n)$, $n=0,\ldots,N$, to the solution $\bm{A}(t)$ of \eqref{eq:tensor_hjb}. Throughout this section, let $\tau_{\max}, \delta_{\operatorname{proj}}, \delta_{\operatorname{rank}}, \delta_{\operatorname{contr}}  > 0$ and a reduction parameter $\rho\in (0,1)$. Denote the potential of the standard normal distribution by $v_{\infty}(x) = \|x\|^2/2$ and note that this function has FTT rank $(2,\ldots,2)$ \cite[Theorem 2]{oseledets2013constructive}. In practice we choose $\bm{r}_t$ to be bounded by  
$\operatorname{TT-rank}(v_0)$ and $\operatorname{TT-rank}(v_\infty)$ with adaptive rank reduction based on TT-rounding error induced by the retraction from \eqref{eq:retraction_op}.

\paragraph{Time adaptive explicit Euler step.} Starting with $n=0$, we have $\bm{Y}_{t_n}\in\mathcal{M}_{\bm{r}_{t_n}}$. By Section \ref{sec:TTs}, the right-hand side of \eqref{eq:tensor_hjb} applied to $\bm{Y}_{t_n}$, i.e. the tensor $\bm{L}\bm{Y}_{t_n} + \bm{P}{\bm{N}\!\!\bm{L}}(\bm{Y}_{t_n})$
has TT-rank at most $ 2\bm{r} + 2\bm{r}^2$
and so the addition
\begin{equation}\label{eq:euler_project}
    \overline{\bm{Y}}_{t_{n}+\tau_n} = \bm{Y}_{t_n} + \tau_n(\bm{L}\bm{Y}_{t_n} + \bm{P}{\bm{N}\!\!\bm{L}}(\bm{Y}_{t_n}))
\end{equation}
has TT-rank at most  $ 3\mathbf{r} + 2\mathbf{r}^2$ for any $\tau_n > 0$.

Since we require the next iterate to be a Tensor Train of rank $\mathbf{r}_{t_n+\tau_n}$, we retract to the appropriate manifold, setting
\begin{equation}\label{eq:euler_retract}
    \bm{Y}_{t_{n}+\tau_n} = \bm{R}_{\bm{r}_{t_n+\tau_n}}
    ( \overline{\bm{Y}}_{t_{n}+\tau_n}),
\end{equation}
where $ \bm{R}_{\bm{r}}$ denotes the TT rounding procedure based on higher order singular value decomposition and mapping to $\mathcal{M}_{\bm{r}}$, which was presented in \cite[Algorithm 2]{oseledets2011tensor}.
Note that \eqref{eq:euler_retract} corresponds to \eqref{eq:euler_sketch} with the $\operatorname{Compression}$ given by the retraction operator, i.e. by the higher order singular value decomposition. 
Up to now the choice of the step size $\tau_n$ was arbitrary. In what follows we set constraints on the step size $\tau_n$ based on three stability criteria. 

\paragraph{Criterion 1: local stiffness.}
At each iteration, we restrict the stepsize dependent on the local stiffness of the ODE. 
We use a heuristic based on local linearizations of \eqref{eq:tensor_hjb} to determine suitable upper bounds for the stepsize. By Lemma \ref{lem:linearized_tensor_hjb}, the local stiffness at the current iterate $\bm{Y}_{t_n}$ is governed by the linear operator
\begin{equation}
\label{eq:stifness_op}
    \bm{H}_{\bm{Y}_{t_n}} \coloneqq \bm{L} + 2\bm{P}{\bm{N}\!\!\bm{L}}_{\bm{Y}_{t_n}}.
\end{equation}

If the current iterate $\bm{Y}_{t_n}$ defines a zero mean Gaussian with diagonal covariance $\operatorname{diag}(a_{ii},i=1,\ldots,d)$, the eigenvalues of $\bm{H}_{\bm{Y}_{t_n}}$ can be bounded by $2\sum_{i=1}^d |1-2a_{ii}|$ (the details of the calculation can be found in 
\ifnum\online=1
the Online Supplementary Material).
\else
Appendix \ref{app:eigenvalues}).\fi
In general, $\bm{H}_{\bm{Y}_{t_n}}$  defines a non-symmetric TT operator. To the knowledge of the authors, estimation of the largest absolute eigenvalue of general non-symmetric TT operators is an open question. Here, we rely on a simpler idea. In particular as we are dealing with real valued tensors $\bm{Y}_{t_n}$, we avoid analyzing the operator action on complex space. In contrast, we are interested in the effect of the current operator in the neighborhood of the current iterate. This is realized by estimating the largest absolute \textit{real} eigenvalue of $\bm{H}_{\bm{Y}_{t_n}}$ denoted by $\lambda_{t_n}$ with corresponding eigenspace that is not orthogonal to the current iterate $\bm{Y}_{t_n}$,
by a power iteration. The resulting scheme is detailed in Algorithm \ref{alg:power_ub}. The current Tensor Train iterate $\bm{Y}_{t_n}$ serves as an initial guess for the eigentensor. The procedure then resembles a standard power iteration with an additional retraction step in line 6, which reduces computational burden.
In practice we are only interested in the absolute value of the eigenvalue or a meaningful upper bound $\overline{\lambda}_{t_n}$ and not in the corresponding eigentensor. Note that the eigenvalue usually converges at much higher order than the eigentensor. The aforementioned upper bound then is obtained through a simple rounding up strategy with a specified number of accurate non-zero digits, see Algorithm \ref{alg:power_ub}. Based on the return $\overline{\lambda}_{t_n}$ of the power iteration, we define a maximal stable stepsize $\tau_{\lambda}$ by
\begin{align}
\label{eq:max_stable_step}
    \tau_{\lambda} \coloneqq \frac{2\rho}{|\overline{\lambda}_{t_n}|}.
\end{align}
In experiments, this stiffness estimation proves essential for the solver to converge.

\begin{algorithm}
	\caption{
 Upper bound estimating the principal real eigenvalue $\lambda_{t_n}$ of $\bm{H}_{\bm{Y}_{t_n}}$ from \eqref{eq:stifness_op} based on power iteration.}\label{alg:power_ub}
	\begin{algorithmic}[1]
 \Require
 {
    $\left\{\begin{array}{ll}
         \text{\tabitem current iterate } \bm{X}_0=\bm{Y}_{t_n} &  \text{\tabitem maximum allowed TT rank } \bm{r}\in\mathbb{N}^{d-1} \\ 
         \text{\tabitem application of }\bm{H}_{Y_{t_n}} &  \text{\tabitem number of correct non-zero digits } p\in\mathbb{N}
    \end{array}
    \right. $
 }
 \vspace{0.5em}
 \Ensure upper bound $\overline{\lambda}_{t_n}$ 
 \vspace{0.5em}
 \State Let $\lambda_{t_n}^k$ denote the $k$-th iterate.
 \State Set $k=0$.
 \While{$p$-th non-zero digit of $\lambda_{t_n}^k$ is changing}
 \State Let $K\in\mathbb{N}$, $\bm{X}_0\in\mathcal{M}_{\mathbf{r}}$.
	\State $\hat{\bm{X}}_{k} = \bm{X}_{k} / \| \bm{X}_{k} \|_F$			
    \State $\bm{X}_{k+1} = \bm{R}_{\mathbf{r}}(\bm{H}_{\bm{Y}_{t_{n}}}\hat{\bm{X}}_{k})$
                \State $\lambda_{t_n}^k = \langle \hat{\bm{X}}_k , \bm{X}_{k+1}\rangle$
     \State $k=k+1$
    \EndWhile
    \State Define position $P$ of first non zero digit with $P= \lceil - \log_{10}(\lambda_{t_n}^k)\rceil $.
    \State Define upper bound treshold $\epsilon_p = 10^{- \left(P + p \right)}$.
    \State Define $\overline{\lambda}_{t_n}^k = \lambda_{t_n}^k + \epsilon_p$.
	\end{algorithmic} 
\end{algorithm}

\paragraph{Criterion 2: local relative projection error.} 
For stepsize $\tau > 0$ consider the iterate $\overline{\bm{Y}}_{t_n+\tau}$ defined by \eqref{eq:euler_project} for $\tau_n=\tau$ and let $\overline{\overline{\bm{Y}}}_{t_n+\tau} = \bm{Y}_{t_n} + \tau(\bm{L}\bm{Y}_{t_n} + {\bm{N}\!\!\bm{L}}(\bm{Y}_{t_n}))$ be an Euler step with the non-projected equation. Let 
\begin{align}
\label{eq:step_size_proj}
    \tau_{\operatorname{proj}} \coloneqq   
    \begin{cases}
    \tau_{\max}, &  ~\textnormal{if}~  \| \bm{P}{\bm{N}\!\!\bm{L}}(\bm{Y}_{t_n}) - {\bm{N}\!\!\bm{L}}(\bm{Y}_{t_n})\|_F = 0,\\
        \frac{\delta_{\operatorname{proj}}}{\| \bm{P}{\bm{N}\!\!\bm{L}}(\bm{Y}_{t_n}) - {\bm{N}\!\!\bm{L}}(\bm{Y}_{t_n})\|_F / \|{\bm{N}\!\!\bm{L}}(\bm{Y}_{t_n})\|_F}, &  ~\textnormal{else}.  
    \end{cases}
\end{align}
Then, for any $\tau \leq \tau_{\operatorname{proj}}$ we get 
\begin{equation}\label{eq:tau_proj}
    \| \overline{\bm{Y}}_{t_n+\tau}- \overline{\overline{\bm{Y}}}_{t_n+\tau} \|_F \leq \tau\| \bm{P}{\bm{N}\!\!\bm{L}}(\bm{Y}_{t_n}) - {\bm{N}\!\!\bm{L}}(\bm{Y}_{t_n})\|_F \leq \delta_{\operatorname{proj}} \|{\bm{N}\!\!\bm{L}}(\bm{Y}_{t_n})\|_F.
\end{equation}
Hence, the projection error of the Euler step, normalized with respect to the magnitude of the degree increasing nonlinear part ${\bm{N}\!\!\bm{L}}(\bm{Y}_{t_n})$, is bounded from above by $\delta_{\operatorname{proj}}$.

\paragraph{Criterion 3: local relative retraction error.} 

Determine maximum $\tau_{\operatorname{rank}}$ such that
\begin{equation}
\label{eq:tau_rank}
   \frac{ \| \overline{\bm{Y}}_{t_n+\tau_{\operatorname{rank}}} - \bm{Y}_{t_n+\tau_{\operatorname{rank}}} \|_F}{\| \overline{\bm{Y}}_{t_n+\tau_{\operatorname{rank}}} \|_F} \leq \delta_{\operatorname{rank}}.
\end{equation}
Here, we initially choose $\tau_{\operatorname{rank}}^0 = \tau_{n-1}$ and proceed with $\tau_{\operatorname{rank}}^k = \frac{1}{2}\tau_{\operatorname{rank}}^{k-1}$ until $\tau_{\operatorname{rank}}^k$ fulfils condition \eqref{eq:tau_rank}. Then, we use bisection iteration to determine the maximum 
$\tau_{\operatorname{rank}} \in (\frac{1}{2}\tau_{\operatorname{rank}}^{k}, \tau_{\operatorname{rank}}^k]$ satisfying \eqref{eq:tau_rank}.

\paragraph{Final stepsize choice.} After these three criteria, the next step size $\tau_n$ in \eqref{eq:euler_retract} and the next time $t_{n+1}$ are defined as
\begin{align}
    \tau_n &\coloneqq \min\{ \tau_{\max}, \tau_{\lambda}, \tau_{\operatorname{proj}}, \tau_{\operatorname{rank}}, T-t_n \},\label{eq:euler_stepsize} \\
    t_{n+1} &\coloneqq t_n + \tau_n,
\end{align}
where the term $T-t_n$ ensures that we end exactly at terminal time $T$. The single time step \eqref{eq:euler_retract} is repeated for $n=0,1,\ldots$ with stepsize \eqref{eq:euler_stepsize} until $t_{n+1}=T$, in which case we define $N=n+1$. \\

In addition to the adaptivity in the stepsize, the solver also incorporates adaptivity in the polynomial degree as well as the TT rank, which is detailed in the following.

\paragraph{Adaptive decrease of polynomial degree}
Motivated by the fact that $v_t\to v_{\infty}\in\Pi_{(2,\ldots,2)}$ as $t\to \infty$ at exponential rate, we introduce a simple adaptive choice for the polynomial degree. Assume that the degrees of $\bm{Y}_{t_n}$ at time $t_n$ are given by $\bm{n}_{t_n}\in\mathbb{N}_0^d$.
Let $ \bm{Y}_{t_n}^k$ denote the order $d-1$ tensor, which for $k=1,\ldots,d$ is given as 
$$
\bm{Y}_{t_n}^k = \left( \bm{Y}_{t_n}[\bm{\alpha}]\right)_{\bm{\alpha}\in[\bm{n}_{t_n}], \alpha_k = (\bm{n}_{t_n})_k}.
$$
This is a slice of the full coefficient tensor $\bm{Y}_{t_n}$ fixing $\alpha_k = (\bm{n}_{t_n})_k$ which is the highest polynomial degree in the $k$-th dimension at time $t_n$. Now, in case of
\begin{equation}
\label{eq:C_slices}
    \| \bm{Y}_{t_n}^k \|_F \leq \delta_{\text{contr}}, 
\end{equation}
we truncate the highest polynomial degree in the $k$-th direction. Since $\bm{Y}_{t_n}$ is given in TT format with
$$
\bm{Y}_{t_n}[\bm{\alpha}] = Y_{t,1}[\alpha_1] \cdot \ldots  \cdot Y_{t,d}[\alpha_d],
$$
this operation is realized by truncation of the component tensor $Y_k$ and possibly adapting the TT-ranks. 

\paragraph{Adaptive choice of TT rank}
Motivated by the conjecture, that the FTT rank of $v_t$ is bounded by the FTT rank of $v_0$ and $v_\infty$, i.e. $\bm{r}_\infty=(2,\ldots,2)$, we perform two retraction steps with respect to these bounds after the time step at time $t_n$. First a retraction with respect to the rank 
\begin{equation}
\label{eq:rank_reduction}
    \hat{\bm{r}}_{t_{n}+\tau_n}  = \max\{ \bm{r}_{t_n}, \bm{r}_\infty \}
\end{equation}
is performed where the maximum is understood component wise. This serves to ensure that the rank of the solution remains bounded by the maximum of the initial rank and the rank of the standard normal potential. Furthermore, a rounding procedure \cite[Algorithm 2]{oseledets2011tensor} with respect to $\delta_{\text{contr}}$ is performed to potentially further decrease the rank and thus define $\bm{r}_{t_n+\tau_n}$. In practice both retraction steps can be performed efficiently in a single operation, which leads to \eqref{eq:euler_retract}.\\

The proposed time adaptive explicit Euler scheme is summarized in Algorithm \ref{alg:HJB_algo}.

\ifnum\classstyle=0
\begin{algorithm}
		\caption{
Time adaptive explicit Euler Scheme to solve HJB equation based on Tensor Trains }\label{alg:HJB_algo}
	\begin{algorithmic}[1]
 \Require
 {
    $\left\{\begin{array}{ll}
         \text{\tabitem } v_0 \text{ given in TT format},\\[0.2em]   \text{\tabitem } T>0 \text{ maximum finite time horizon} , \\[0.2em]
         \text{\tabitem } \tau_{\text{max}} > 0 \text{,  bound for the stepsize}\\[0.2em]
          \text{\tabitem  reduction stiffness parameter }\rho \in(0,1), \\[0.2em]
         \text{\tabitem step size proposal hyperparameter }  
         \delta_{\operatorname{proj}}, \delta_{\operatorname{rank}}, \\[0.2em]
         \text{\tabitem degree of freedom contribution tolerance }
         \delta_{\operatorname{contr}}>0.
    \end{array}
    \right. $
 }
 \vspace{0.5em}
 \Ensure Discrete sequence $(v_{t_n})_n$ defined on subsequently determined adaptive time points $t_n\in[0,T].$
 \vspace{0.5em}
  \State Set $t=0$.
  \While{$t\leq T$} 
  \State \textbf{Determine next time step}:   
  \IndState Compute maximal stable stepsize $\tau_\lambda$. \Comment{see \eqref{eq:max_stable_step}}
  \IndState Compute step size proposal $\tau_{\operatorname{proj}}$ based on projection error. 
  \Comment{see \eqref{eq:step_size_proj}}
  \IndState Compute step size proposal $\tau_{\operatorname{rank}}$ based on relative retraction error. \Comment{see \eqref{eq:tau_rank}}
 \IndState Determine final step size  $\tau = \min\{ \tau_{\max}, \tau_{\lambda}, \tau_{\operatorname{proj}}, \tau_{\operatorname{rank}}, T-t \}$.
  \State \textbf{Perform a single Euler step}
  \IndState Set $t=t+\tau$.
  \IndState Approximate $v_t$ via algebraic manipulation of the underlying TT format.  \Comment{see \eqref{eq:euler_project}}
  \IndState Perform a retraction step of the resulting coefficient in TT format. \Comment{see  \eqref{eq:euler_retract}} 
  \State \textbf{(Re-)compression}
  \IndState Check for potential polynomial degree decrease using $\delta_{\operatorname{contr}}$. \Comment{see  \eqref{eq:C_slices}}
  \IndState Check for potential rank reduction using $\delta_{\operatorname{contr}}$. \Comment{see \eqref{eq:rank_reduction}}
 \EndWhile
	\end{algorithmic} 
\end{algorithm}
\fi
\ifnum\classstyle=1
\begin{algorithm}
		\caption{
Time adaptive explicit Euler Scheme to solve HJB equation based on Tensor Trains }\label{alg:HJB_algo}
	\begin{algorithmic}[1]
 \Require
 {
    $\left\{\begin{array}{ll}
         \text{\tabitem } v_0 \text{ given in TT format},\\[0.2em]   \text{\tabitem } T>0 \text{ maximum finite time horizon} , \\[0.2em]
         \text{\tabitem } \tau_{\text{max}} > 0 \text{,  bound for the stepsize}\\[0.2em]
          \text{\tabitem  reduction stiffness parameter }\rho \in(0,1), \\[0.2em]
         \text{\tabitem step size proposal hyperparameter }  
         \delta_{\operatorname{proj}}, \delta_{\operatorname{rank}}, \\[0.2em]
         \text{\tabitem degree of freedom contribution tolerance }
         \delta_{\operatorname{contr}}>0.
    \end{array}
    \right. $
 }
 \vspace{0.5em}
 \Ensure Discrete sequence $(v_{t_n})_n$ defined on subsequently determined adaptive time points $t_n\in[0,T].$
 \vspace{0.5em}
  \State Set $t=0$.
  \While{$t\leq T$} 
  \State \textbf{Determine next time step}:   
  \IndState Compute maximal stable stepsize $\tau_\lambda$. \Comment{see \eqref{eq:max_stable_step}}
  \IndState Compute step size proposal $\tau_{\operatorname{proj}}$ based on projection error. 
  \Comment{see \eqref{eq:step_size_proj}}
  \IndState Compute step size proposal $\tau_{\operatorname{rank}}$ based on relative retraction error. \Comment{see \eqref{eq:tau_rank}}
 \IndState Determine final step size  $\tau = \min\{ \tau_{\max}, \tau_{\lambda}, \tau_{\operatorname{proj}}, \tau_{\operatorname{rank}}, T-t \}$.
  \State \textbf{Perform a single Euler step}
  \IndState Set $t=t+\tau$.
  \IndState Approximate $v_t$ via algebraic manipulation of the underlying TT format.  \Comment{see \eqref{eq:euler_project}}
  \IndState Perform a retraction step of the resulting coefficient in TT format. \Comment{see  \eqref{eq:euler_retract}} 
  \State \textbf{(Re-)compression}
  \IndState Check for potential polynomial degree decrease using $\delta_{\operatorname{contr}}$. \Comment{see  \eqref{eq:C_slices}}
  \IndState Check for potential rank reduction using $\delta_{\operatorname{contr}}$. \Comment{see \eqref{eq:rank_reduction}}
 \EndWhile
	\end{algorithmic} 
\end{algorithm}
\fi
\ifnum\classstyle=2
\begin{algorithm}
		\caption{
Time adaptive explicit Euler Scheme to solve HJB equation based on Tensor Trains }\label{alg:HJB_algo}
	\begin{algorithmic}[1]
 \Require
 {
    $\left\{\begin{array}{ll}
         \text{\tabitem } v_0 \text{ given in TT format},\\[0.2em]   \text{\tabitem } T>0 \text{ maximum finite time horizon} , \\[0.2em]
         \text{\tabitem } \tau_{\text{max}} > 0 \text{,  bound for the stepsize}\\[0.2em]
          \text{\tabitem  reduction stiffness parameter }\rho \in(0,1), \\[0.2em]
         \text{\tabitem step size proposal hyperparameter }  
         \delta_{\operatorname{proj}}, \delta_{\operatorname{rank}}, \\[0.2em]
         \text{\tabitem degree of freedom contribution tolerance }
         \delta_{\operatorname{contr}}>0.
    \end{array}
    \right. $
 }
 \vspace{0.5em}
 \Ensure Discrete sequence $(v_{t_n})_n$ defined on subsequently determined adaptive time points $t_n\in[0,T].$
 \vspace{0.5em}
  \State Set $t=0$.
  \While{$t\leq T$} 
  \State \textbf{Determine next time step}:   
  \IndState Compute maximal stable stepsize $\tau_\lambda$. \Comment{see \eqref{eq:max_stable_step}}
  \IndState Compute step size proposal $\tau_{\operatorname{proj}}$ based on projection error. 
  \Comment{see \eqref{eq:step_size_proj}}
  \IndState Compute step size proposal $\tau_{\operatorname{rank}}$ based on relative retraction error. \Comment{see \eqref{eq:tau_rank}}
 \IndState Determine final step size  $\tau = \min\{ \tau_{\max}, \tau_{\lambda}, \tau_{\operatorname{proj}}, \tau_{\operatorname{rank}}, T-t \}$.
  \State \textbf{Perform a single Euler step}
  \IndState Set $t=t+\tau$.
  \IndState Approximate $v_t$ via algebraic manipulation of the underlying TT format.  \Comment{see \eqref{eq:euler_project}}
  \IndState Perform a retraction step of the resulting coefficient in TT format. \Comment{see  \eqref{eq:rank_reduction}} 
  \State \textbf{(Re-)compression}
  \IndState Check for potential polynomial degree decrease using $\delta_{\operatorname{contr}}$. \Comment{see  \eqref{eq:C_slices}}
  \IndState Check for potential rank reduction using $\delta_{\operatorname{contr}}$. \Comment{see \eqref{eq:euler_retract}}
 \EndWhile
	\end{algorithmic} 
\end{algorithm}
\fi
\ifnum\classstyle=3
\begin{algorithm}
		\caption{
Time adaptive explicit Euler Scheme to solve HJB equation based on Tensor Trains }\label{alg:HJB_algo}
	\begin{algorithmic}[1]
 \Require
 {
    $\left\{\begin{array}{ll}
         \text{\tabitem } v_0 \text{ given in TT format},\\[0.2em]   \text{\tabitem } T>0 \text{ maximum finite time horizon} , \\[0.2em]
         \text{\tabitem } \tau_{\text{max}} > 0 \text{,  bound for the stepsize}\\[0.2em]
          \text{\tabitem  reduction stiffness parameter }\rho \in(0,1), \\[0.2em]
         \text{\tabitem step size proposal hyperparameter }  
         \delta_{\operatorname{proj}}, \delta_{\operatorname{rank}}, \\[0.2em]
         \text{\tabitem degree of freedom contribution tolerance }
         \delta_{\operatorname{contr}}>0.
    \end{array}
    \right. $
 }
 \vspace{0.5em}
 \Ensure Discrete sequence $(v_{t_n})_n$ defined on subsequently determined adaptive time points $t_n\in[0,T].$
 \vspace{0.5em}
  \State Set $t=0$.
  \While{$t\leq T$} 
  \State \textbf{Determine next time step}:   
  \IndState Compute maximal stable stepsize $\tau_\lambda$. \Comment{see \eqref{eq:max_stable_step}}
  \IndState Compute step size proposal $\tau_{\operatorname{proj}}$ based on projection error. 
  \Comment{see \eqref{eq:step_size_proj}}
  \IndState Compute step size proposal $\tau_{\operatorname{rank}}$ based on relative retraction error. \Comment{see \eqref{eq:tau_rank}}
 \IndState Determine final step size  $\tau = \min\{ \tau_{\max}, \tau_{\lambda}, \tau_{\operatorname{proj}}, \tau_{\operatorname{rank}}, T-t \}$.
  \State \textbf{Perform a single Euler step}
  \IndState Set $t=t+\tau$.
  \IndState Approximate $v_t$ via algebraic manipulation of the underlying TT format.  \Comment{see \eqref{eq:euler_project}}
  \IndState Perform a retraction step of the resulting coefficient in TT format. \Comment{see  \eqref{eq:euler_retract}} 
  \State \textbf{(Re-)compression}
  \IndState Check for potential polynomial degree decrease using $\delta_{\operatorname{contr}}$. \Comment{see  \eqref{eq:C_slices}}
  \IndState Check for potential rank reduction using $\delta_{\operatorname{contr}}$. \Comment{see \eqref{eq:rank_reduction}}
 \EndWhile
	\end{algorithmic} 
\end{algorithm}
\fi


\subsection{Dynamical low rank approximation}
\label{sec:dlra}

While the time adaptive explicit Euler scheme presented in the previous section offers a conceptually simple integration method, \textit{Dynamical low rank appromxation} (DLRA) 
\ifnum\classstyle=2
\cite{lubich2007,lubich2013,lubich2015} 
\else
\cite{lubich2007,lubich2013,lubich2015}
\fi
methods offer another principled way of approximately integrating tensor valued ODEs of the form \eqref{eq:tensor_hjb}.

Here, the idea is to formulate an approximation of a tensor valued ODE $$\bm{{\dot{A}}}(t) = \bm{F}(t,\bm{A}(t)), \qquad \bm{A}(0) = \bm{A}_0,$$ where $\bm{n}\in\mathbb{N}^{d}$, $\bm{A}(t)\in\mathbb{R}^{\bm{n}}$ and $\bm{F}\colon [0,\infty]\times\mathbb{R}^{\bm{n}}\rightarrow\mathbb{R}^{\bm{n}}$ on a fixed rank manifold $\mathcal{M}_{\bm{r}}$. This is done via projection of the right-hand side onto the tangent space of $\mathcal{M}_{\bm{r}}$. More precisely, for a fixed $\bm{r}\in \mathbb{N}^{d-1}$, the approximation is defined as
    \begin{equation}\label{eq:tensor_hjb_dlra}
       \bm{{\dot{Y}}}(t) = \mathcal{P}_{\mathcal{T}_{\bm{Y}(t)}}\bm{F}(t,\bm{Y}(t)), \qquad \bm{Y}(0) = \bm{Y}_0 \approx \bm{A}_0,
    \end{equation}
where $\bm{Y}_0\in\mathcal{M}_{\bm{r}}$ and $\mathcal{P}_{\mathcal{T}_{\bm{Y}(t)}}$ denotes the orthogonal projection (in Frobenius norm) onto the tangent space of $\mathcal{M}_{\bm{r}}$ in $\bm{Y}(t)$. Note that due to this projection, a solution of \eqref{eq:tensor_hjb_dlra} satisfies $\bm{Y}(t)\in\mathcal{M}_{\bm{r}}$ for all $t$. In 
\ifnum\classstyle=2
\cite{eigel2023dynamical} 
\else
\cite{eigel2023dynamical}
\fi
the authors use an explicit Euler discretization of \eqref{eq:tensor_hjb_dlra} for the solution of HJB equations appearing in deterministic optimal control based on spatially discretized parabolic PDEs. However, leveraging the form of the tangent space, the projector on the right hand side can be decomposed into a sum of projectors corresponding to orthogonal subspaces. In
\ifnum\classstyle=2
\cite{lubich2013},
\else
\cite{lubich2013}, 
\fi
the authors propose to use this sum structure for a Lie-Trotter type splitting scheme in the case of a matrix valued ODE, which is termed the \textit{projector-splitting} integrator. Consequently, 
\ifnum\classstyle=2
\cite{lubich2015} 
\else
\cite{lubich2015}
\fi
extends the projector splitting integrator to the tensor setting. One of the key properties of this integrator is that each discrete step preserves the rank $\bm{r}$. 

In our scheme, using a step with the integrator from 
\ifnum\classstyle=2
\cite{lubich2015} 
\else
\cite{lubich2015}
\fi
instead of the explicit Euler step \eqref{eq:euler_project} leads to a new iterate $\overline{\bm{Y}}_{t_{n}+\tau_n}$ with the same rank as $\bm{Y}_{t_{n}}$. Hence, the retraction \eqref{eq:euler_retract} becomes a mere rounding procedure and the rank of two consecutive iterates is monotonically decreasing. This is a desirable property if the initial rank satisfies $\bm{r}_{t_{0}} \geq (2,\ldots,2)$. For $\bm{r}_{t_{0}} \ngeq (2,\ldots,2)$, the projector-splitting is unsuited because it restricts the rank from above to $\bm{r}_{t_{n}} \leq \bm{r}_{t_{0}}$ and so $\bm{r}_{t_{n}}$ can not converge to the correct rank $(2,\ldots,2)$. 

Incorporating more recent state-of-the-art dynamical low rank integrators for matrix valued ODEs such as 
\ifnum\classstyle=2
\cite{ceruti2022rank,ceruti2023parallel}
\else
\cite{ceruti2022rank,ceruti2023parallel}
\fi
to the Tensor Train setting could lead to significant improvements of the proposed method. In particular, the Basis
Update $\&$ Galerkin (BUG) integrator 
\ifnum\classstyle=2
\cite{ceruti2022rank} 
\else
\cite{ceruti2022rank}
\fi
introduces rank adaptivity, while the fully parallel integrator 
\ifnum\classstyle=2
\cite{ceruti2023parallel} 
\else
\cite{ceruti2023parallel}
\fi
could additionally greatly speed up computations in high dimensions. Their application in our method is a topic of future research.

\subsection{Evaluation of the low-rank model}

As the result of section \ref{sec:explicit_euler} or \ref{sec:dlra} we have a representation of the value function 
in the spirit of \eqref{eq:TT_repr}
at discrete set of time points the form $t\in \{t_0,t_1,\ldots,T\}$ of the form
\begin{equation}
\label{eq:solution_contraction}
    v_t(x)  = \sum\limits_{\bm\alpha\in [\bm{n}_t]} \boldsymbol Y_t[\bm\alpha] p_{\bm\alpha}(x),
\end{equation}
for some $\bm{n}_t\in\mathbb{N}_0^d$ and $Y_t$ given in tensor train format resulting as the discrete solution of \eqref{eq:tensor_hjb}.

We now want to discuss the evaluation of $v_t(x)$ at arbitrary time $t\in[0,T]$ and $x\in \mathbb{R}^d$. This is motivated by the 
 reverse-time sampling process, which is permitted to be time adaptive and may require evaluation in time points not included in the set $\{t_0,\ldots,t_N\}$ .
 
For this we propose a very simple solution. Let $t^\ast \in[0,T]$. Let 
\begin{equation}
    \overline{t} = \max\{ t \in \{t_0,\ldots,t_N\}  \colon t \leq t^\ast\} 
\end{equation}
Let $\tau = t^\ast - \underline{t}$. Then, we compute the coefficient representation in Tensor train format of $v_{t^\ast}$ through a single Euler- or DLRA step with step size $\tau$.
Note that this step size is within the step size bounds implied by the adaptive scheme proposed earlier. In particular, $\tau$ is smaller than the step size implied by local stiffness.

Lastly, we discuss how the evaluation of the model class is performed in practice. Aside from the evaluation of the polynomial basis functions, only matrix- and vector products have to be computed to evaluate $v_t$. This efficient evaluation is one of the strengths of the Tensor Train format. For $x=(x_s,\ldots,x_d)\in\mathbb{R}^d$ and $t\in[0,T]$, the approximation is defined by a TT $\boldsymbol{Y}_t$ with dimensions $\bm n_t = (n_{t,1},\ldots, n_{t,d})$ and ranks $\mathbf{r}_t = (r_{t,1},\ldots,r_{t,d})$. To evaluate \eqref{eq:solution_contraction}, one first computes $p^i_{j}(x_i)$ for all $i=1,\ldots,d$ and $j=0,\ldots,n_{t,i}$. Now the TT format provides a decomposition of the form $\boldsymbol{Y}_t[\bm\alpha] = Y_{t,1}[\alpha_1]Y_{t,2}[\alpha_2]\cdots Y_{t,d}[\alpha_d]$, where $Y_{t,i}\in \mathbb{R}^{r_{t,i-1},n_{t,i},r_{t,i}}$, $\bm\alpha \in [\bm n_t]$ and hence $\alpha_i$ runs from $0$ to $n_{t,i}$. In particular, \eqref{eq:solution_contraction} implies
\begin{equation}\label{eq:solution_contraction_explicit}
  \begin{aligned}
    v_t(x) &= \sum_{\alpha_1=0}^{n_{t,1}}\ldots \sum_{\alpha_d=0}^{n_{t,d}} Y_{t,1}[\alpha_1]Y_{t,2}[\alpha_2]\cdots Y_{t,d}[\alpha_d]  p^1_{\alpha_1}(x_1)p^2_{\alpha_2}(x_2)\ldots p^{d}_{\alpha_d}(x_d) \\
    &=: Y_{t,1}^{x_1}\cdot Y_{t,2}^{x_2}\cdots Y_{t,d}^{x_d},
\end{aligned}  
\end{equation}
where $Y_{t,i}^{x_i} \in \mathbb{R}^{r_{t,i-1},r_{t,i}}$ results from a simple contraction of $Y_{t,i}$ with the vector $(p^i_{1}(x_i),\ldots,p^i_{n_{t,i}}(x_i))$ over the $n_{t,i}$-dimension.  $Y_{t,1}^{x_1}\cdot Y_{t,2}^{x_2}\cdots Y_{t,d}^{x_d}$ is now a simple matrix product. Note since $r_{t,0}=r_{t,d}=1$, this product boils down to a matrix-vector product, when performed from left to right or vice-versa, yielding a scalar value.
\section{Numerical results}\label{sec:numerics}

Based on Remark \ref{rem:time-reverse_ornstein}, we generate approximate samples from $\mu_\ast$ by means of the discrete process described in Algorithm \ref{alg:reverse_sampling}. The algorithm utilizes the reverse-time process from Remark \ref{rem:time-reverse_ornstein} with $\lambda=0$ discretized at the time-points $t_n$ at which approximate solutions $\bm{Y}_{t_n}$ of the projected HJB \eqref{eq:projected_hjb} are available. These approximations define our surrogate for the score $\nabla \log\pi_{t}$ based on 
\begin{equation}
    v_{t_n} \approx - \log \pi_{t_n}, \quad n=0,\ldots,N,  
\end{equation}
where $v_{t_n} \coloneqq v_{\bm{Y}_{t_n}}$ is understood in the sense of \eqref{eq:TT_repr}. The inner loop over $k$ in Algorithm \ref{alg:reverse_sampling} consists of additional \textit{Langevin-postprocessing} steps 
\ifnum\classstyle=2
\cite{song2020score} 
\else
\cite{song2020score}
\fi
after every step with the reverse process.

As a necessary condition for convergence of the computed solutions $v_{t_n}$ to the potential $v_\infty(x)=\frac{1}{2}x^{\intercal}I_d x$ of the standard normal distribution, we consider convergence of the coefficients of the quadratic part. More precisely, since $v_{t_n}$ is a polynomial, we can always write
\begin{equation}
    v_{t_n}(x) = a_{t_n} + b_{t_n}^{\intercal}x + x^{\intercal}\Sigma_{t_n}^{-1}x + \textnormal{higher order terms},
\end{equation}
with $a_{t_n}\in\mathbb{R},b_{t_n}\in\mathbb{R}^d$ and a symmetric $\Sigma_{t_n}\in\mathbb{R}^{d\times d}$. In this section, we call \textit{covariance error at time} $t_n$ the term 
\begin{equation}\label{eq:cov_error}
  \operatorname{CovErr}(t_n) = \left\| \Sigma^{-1}_{t_n} - I_d/2 \right \|_F / \left \| I_d/2 \right \|_F,  
\end{equation}
i.e. the relative error in Frobenius norm between the current precision matrix and the precision matrix of the standard normal distribution.

We remark that, in the test cases we considered, the results produced by the dynamical low rank integrator 
\ifnum\classstyle=2
\cite{lubich2015} 
\else
\cite{lubich2015}
\fi
(using the same heuristics for adaptive stepsize determination) are similar to the results produced by an explicit Euler stepping with subsequent retraction. Hence, we only present the results of the latter. 

\begin{algorithm}
	\caption{
Sampling from $\pi_\ast$}\label{alg:reverse_sampling}
	\begin{algorithmic}[1]
 \Require
 {
    $\left\{\begin{array}{ll}
         \text{\tabitem Initial samples $\{z^i_0\}_{i=1}^I \sim \mathcal{N}(0,I_d)$ },   \\[0.2em]
         \text{\tabitem Times $\{t_n\}_{n=1}^{N}$ and discrete HJB solution $ \{v_{\bm{Y}_{t_n}}\}_{n=1}^N$ defined by Algorithm \ref{alg:HJB_algo}} , \\[0.2em]
         \text{\tabitem Stepsize $\tau$ and number of steps $K\in\mathbb{N}$ for Langevin postprocessing,} \\[0.2em]
         \text{\tabitem Parameter $\lambda\in [0,1]$ for reverse-time process}
    \end{array}
    \right. $
 }
  \vspace{0.1cm}
 \Ensure Samples $\{z^i_N\}_{i=1}^I \sim \mu_\ast$.
 \vspace{0.1cm}
 \For{$i=1,\ldots,I$ in parallel} 
 \State Generate time points $\{t_n^i\}_{n=1}^N$.
 \For{$n=0,1,\ldots,N-1$}
 \State Set $\tau^i_n = t_{n+1}^i-t_n^i$ 
 \State Sample $\xi_n \sim \mathcal{N}(0,I_d)$ if $\lambda\neq 1$.
     
	\State Set $z^i_{n+1} = z^i_n + \left[ z_n^i + (2-\lambda)\nabla v_{\bm{Y}_{T-t_n}}(z_n^i) \right]\tau_n^i  + \sqrt{2(1-\lambda)\tau_n}\xi_n$. \Comment{Reverse-time process step}
      \For{$\ell=0,1,\ldots,L$} 
 \State Sample $\xi_k \sim \mathcal{N}(0,I_d)$
    \State  $z^i_{n+1} \longleftarrow z^i_{n+1} - \tau  \nabla v_{\bm{Y}_{T-t_n}}(z^i_{n+1})  + \sqrt{2\tau}\xi_k$. \Comment{Langevin post-processing step}
     \EndFor
 \EndFor
\EndFor
	\end{algorithmic} 
\end{algorithm}

\subsection{Verification result: Gaussian setting}

\paragraph{Problem definition} Let $d=10$, $K = [-5,5]^{10}$ and $\Phi(x) = x^{\intercal}\Sigma^{-1}x$, where $\Sigma$ is a randomly generated symmetric positive definite matrix (we sample entries of a matrix $A$ uniformly on $[0,1]$ and then define $\Sigma^{-1} = A^{\intercal}A + 0.1 I_d$). Note that in this setting the polynomial degree of the HJB solution is bounded by $\bm{n}=(2,\ldots,2)$ as $\pi_t$ remains a Gaussian density if $\pi_0$ and $\pi_\infty$ are Gaussian.

\paragraph{Parameters} For Algorithm \ref{alg:HJB_algo}, we choose $T=12$, $\tau_{\max}=0.1$, $\rho = 0.2$, $\delta_{\operatorname{proj}} = \delta_{\operatorname{rank}} = 0.01$, $\delta_{\operatorname{contr}} = 10^{-8}$.

\paragraph{Evaluation} By 
\ifnum\online=1
Lemma B.1 in the Online Supplementary Material,
\else
Lemma \ref{lem:rank_gauss},\fi 
$\Phi$ has an FTT-rank given by $\bm{r} = (3,4,5,6,7,6,5,4,3)$. Since the solution of the HJB is a strictly quadratic polynomial for all times (meaning that no higher or lower degrees than 2 appear), 
\ifnum\online=1
the same lemma
\else
Lemma \ref{lem:rank_gauss}\fi 
also yields that the FTT rank of the solution is bounded from above by $\bm{r}$ for all $t_n$. In Figure \ref{fig:gaussian_rank_plot} the ranks of the solution during integration are displayed. Once the solution reaches a covariance error of $\sim 10^{-7}$, the solver starts to truncate the ranks, meaning that at this point higher ranks give a contribution to the solution which is less than $\delta_{\operatorname{contr}} = 10^{-8}$ in relative Frobenius norm. Finally, all ranks higher than $2$ are truncated, which is to be expected since the standard normal potential has FTT rank $\bm{r}\equiv 2$. At $t=12$, the covariance error has decreased to $\sim 10^{-11}$.\\
Figure \ref{fig:gaussian_stepsize_plot} displays the stepsizes chosen by the solution algorithm. Since the polynomial degree does not increase and the ranks are bounded from above by the initial rank, the stiffness estimate \eqref{eq:max_stable_step} determines the stepsize. \\
As a further demonstration of the consistency of the method, we consider the solution algorithm for different maximal 
stepsizes $\tau_{\max} \in \{0.1,0.01,0.001\}$. 
By the standard theory of the explicit Euler scheme, we expect the time discretization error to be in $\mathcal{O}(\tau_{\max})$. As the second order polynomial ansatz introduces no spatial discretization error in the Gaussian setting, the time discretization is the only source of error in the learned score. 
Figure \ref{fig:Error_plot_gaussian} clearly displays the $\mathcal{O}(\tau_{\max})$ dependence in both the relative Frobenius-error of the covariance matrix as well as the $L^2$-error of the score.

\begin{figure}
    \centering
    \includegraphics[width=0.6\textwidth]{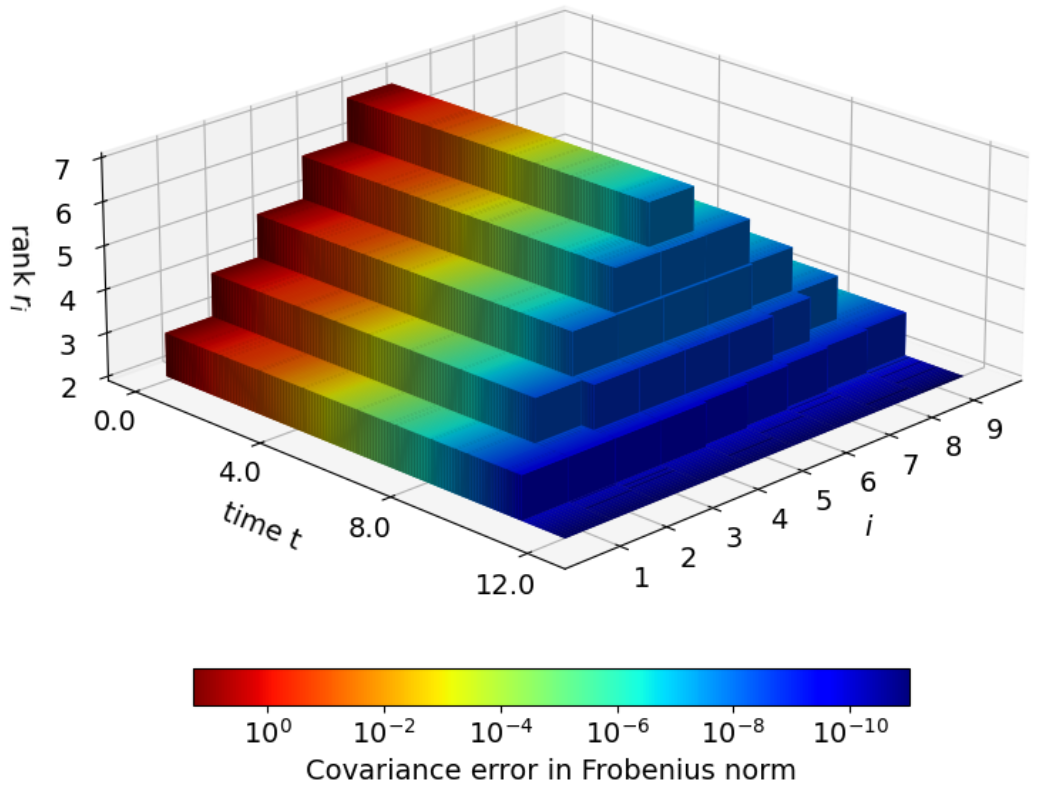}
    \caption{\textit{Development of the solution ranks and the covariance error \eqref{eq:cov_error} over time in the Gaussian setting. Once the solution is close to convergence (in terms of the covariance error), the ranks decrease to the rank $(2,\ldots,2)$ of the potential of the standard normal distribution.}}
    \label{fig:gaussian_rank_plot}
\end{figure}

\begin{figure}
    \centering
    \includegraphics[width=\textwidth]{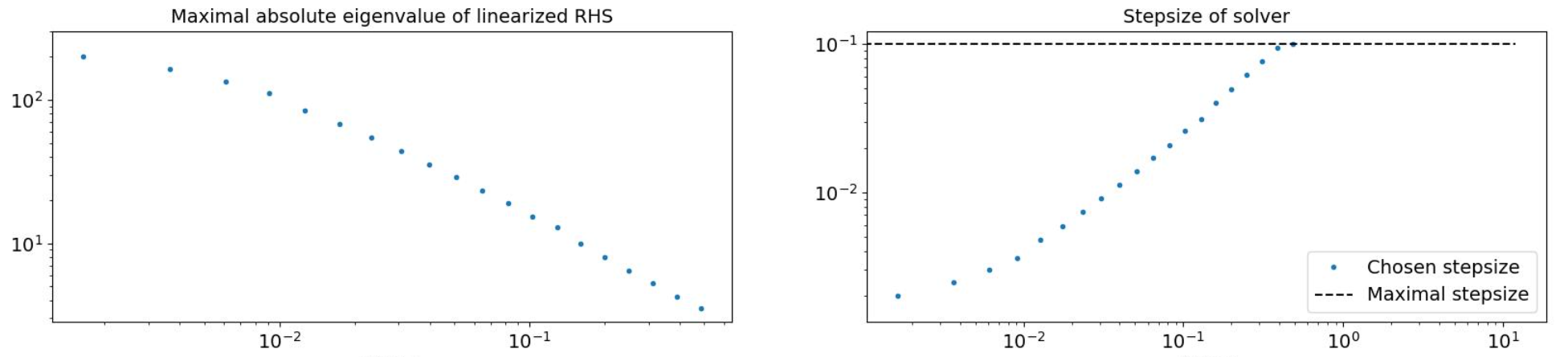}
    \caption{\textit{Approximations of the maximal absolute eigenvalues of the linearized right-hand side $|\overline{\lambda}_{t}|$ determined by the power method (left) and accordingly chosen stepsize $2\rho/|\overline{\lambda}_{t}|$ (right) over time in the Gaussian setting. Note that the eigenvalues decrease monotonically, permitting a monotonous increase of the stepsize until the maximal permitted stepsize $\tau_{\max} = 0.1$ is reached.}}
    \label{fig:gaussian_stepsize_plot}
\end{figure}

\begin{figure}
    \includegraphics[width = 0.9\linewidth]{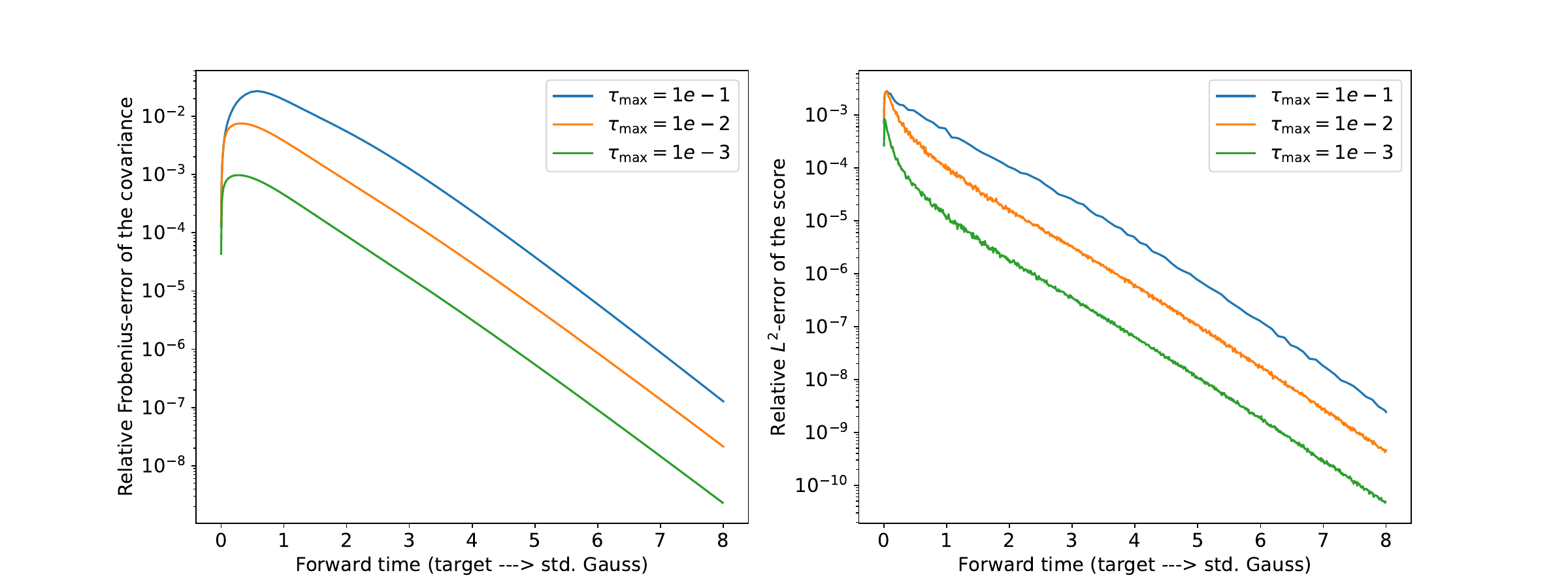}
    \caption{\textit{Comparison of the discretized HJB solution and its derived gradient for different maximum time steps $\tau_{\mathrm{max}}=0.1,0.01,0.001$ to the exact score in the setup of random Gaussian target distribution for $d = 10$. One can clearly see the $\mathcal{O}(\tau_{\max})$ dependence of the error in both gradient and covariance matrix of the approximate solution. Note that for small times $t \ll 1$, the discretizations for maximal stepsizes $0.1$ and $0.01$ lead to similar $L^2$-error in the gradient, which can be attributed to an equally small adaptively chosen stepsize in this early (stiff) regime (compare to the right plot in Figure \ref{fig:gaussian_stepsize_plot} which shows the realized step sizes for $\tau_{\max} = 0.1$).}}
    \label{fig:Error_plot_gaussian}
\end{figure}

\subsection{Mixed nonlinear density}\label{sec:mixed_nonlinear}

\paragraph{Problem definition}
Let $d=20$, $K = [-5,5]^2\times [-2,2]^2 \times [-5,5]^2 \times [-2,2]^{14}$.
Consider the transport map $\mathcal{T}\colon\mathbb{R}^2\to\mathbb{R}^2$ and matrix $\Sigma$  with 
\begin{equation}
     \label{eq:banana}
      \mathcal{T}(x,y) = (x, y + x^2+1), \quad \Sigma = 
      \begin{pmatrix}
          1 & 0.9 \\ 0.9 & 1
      \end{pmatrix}.
\end{equation}

Let $\Phi_1(x,y) = v_\infty( \Sigma^{-1} \mathcal{T}(x,y))$ , 
$\Phi_2(x,y) = x^4 + y^4 -4x^2 -4y^2 - 0.4x+0.1y+8$ and $\Phi_3(x,y) = x^6 + y^6 + 3xy$. Define $\Phi(x) = \Phi_1(x_1,x_2) + \Phi_2(x_3,x_4) + \Phi_3(x_5,x_6) + \sum_{i=7}^{20}x_i^2$. The first six dimensions of this potential define a \textit{banana}-shaped marginal density in the first two dimensions, a nonsymmetric multimodal marginal density in the third and forth dimensions, and a bimodal marginal density in the fifth and sixth dimensions (see the right most column in Figure \ref{fig:fun_mix}).  By construction, this potential has rank $\bm{r} = (3,2,2,2,3,2,\ldots,2)$.

\paragraph{Parameters} We choose $\bm{n} = (4,2,4,4,6,6,2,\ldots,2)\in\mathbb{N}^{20}$ according to the degrees appearing in the potential. For Algorithm \ref{alg:HJB_algo} we set $T=10$, $\tau_{\max}=0.05$, $\delta_{\operatorname{proj}} = \delta_{\operatorname{rank}} = 0.01$, $\delta_{\operatorname{contr}} = 10^{-8}$. To account for the high stiffness of the equation at small time $t\ll 1$, we set the stiffness parameter $\rho$ in Algorithm~\ref{alg:HJB_algo} to $\rho = 0.001$ as long as $t < 10^{-6}$ and $\rho=0.5$ for $t \geq 10^{-6}$. Langevin postprocessing (see Algorithm \ref{alg:reverse_sampling}) is performed with $L=100$ steps and stepsize $\tau=0.005$.

\paragraph{Evaluation} While the rank between independent parts of the density does not increase under the HJB flow, the initial ranks $r_1 = r_5 = 3$ may increase due to the time stepping scheme and hence incur a truncation error. However, with the specified values for $\rho$ we discover that the stepsize resulting from the stiffness criterion \eqref{eq:max_stable_step} satisfies both the projection and the truncation criterion \eqref{eq:tau_proj}, \eqref{eq:tau_rank} with the requested tolerance, suggesting that the solver keeps these errors sufficiently small. Figure \ref{fig:fun_mix_stepsize_plot} shows these stepsizes with a jump around $t=10^{-6}$ due to the increase in the stiffness control parameter $\rho$. Figure \ref{fig:fun_mix_convergence_plot} shows the exponential decay in the covariance error \eqref{eq:cov_error} between the HJB solution and the standard normal distribution. Note that there is an initial spike in the error for small times $t$. In experimentation, this spike seems to decrease in magnitude when permitting higher polynomial degrees. Hence, we can attribute it to a discretization error. The optimal choice of permitted degrees to balance accuracy and computational feasibility is an open question at this point. We conjecture that it is at this point that future research will prove most fruitful: the difficult region close to $t=0$, where the true solution of the HJB is far away from the standard normal potential. Finally, Figure \ref{fig:fun_mix} shows the densities corresponding to the HJB solution obtained by Algorithm \ref{alg:HJB_algo} and the samples at the corresponding time points in the reverse process. We note that the curvature of the banana potential in the first two dimensions as well as the multimodalities in higher dimensions are recovered by the method. Finally, we note the large number of postprocessing steps used in this example. We observed a drastic decrease in sample quality for less postprocessing steps. 
\ifnum\classstyle=1
This observation motivates the discussion in the next section.
\fi


\begin{figure}
    \centering\includegraphics[width=0.95\textwidth]{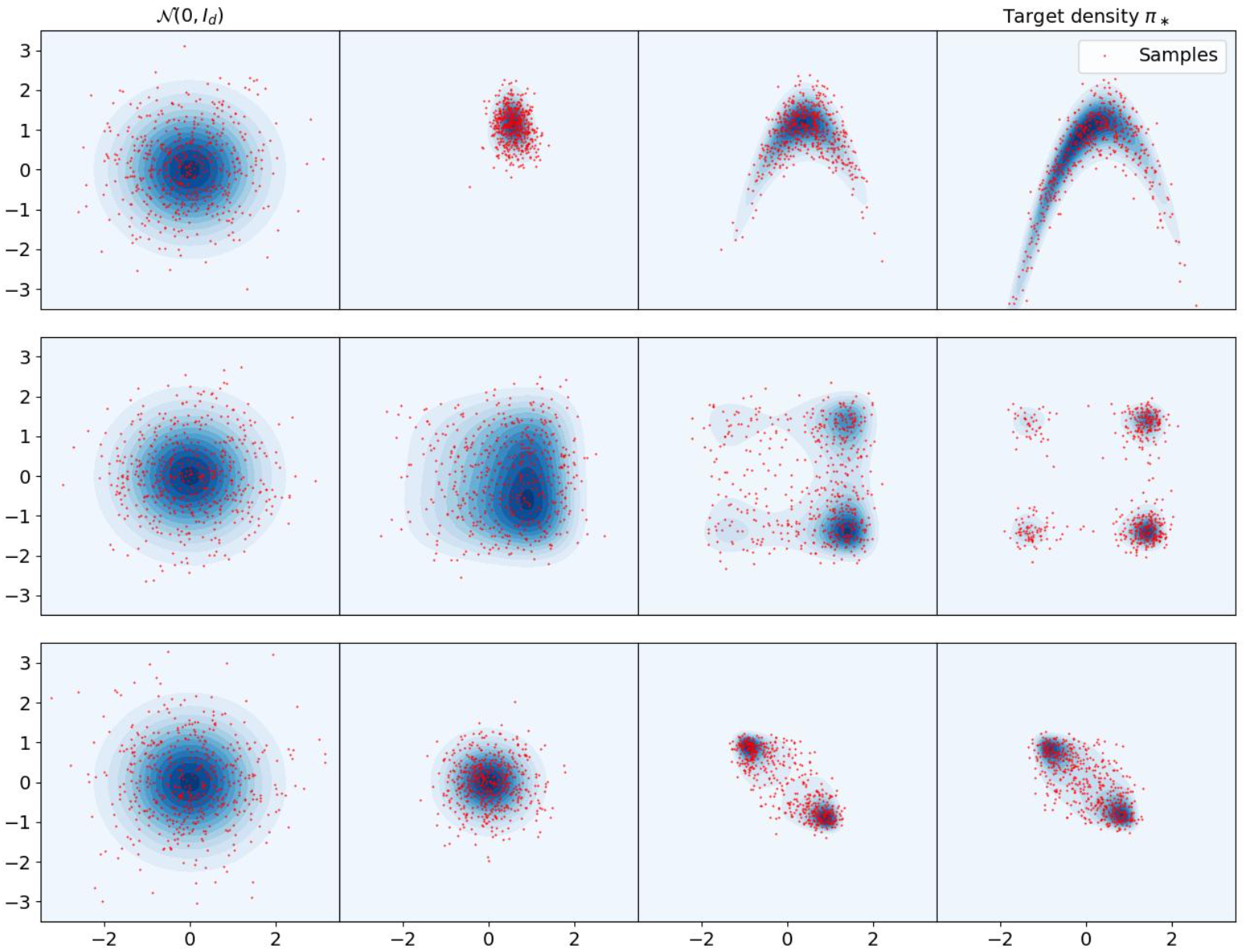}
    \caption{\textit{Development of marginal densities (\textcolor{blue}{blue}) and the samples produced by the corresponding reverse process defined by Algorithm \ref{alg:reverse_sampling} (\textcolor{red}{red}) in the setting of the mixed nonlinear case (Section \ref{sec:mixed_nonlinear}). The first row shows the values of the densities and samples on the $(x_1,x_2)$-plane, which is governed by the the Banana potential. The second row concerns the $(x_3,x_4)$-plane, which is governed by the nonsymmetric multimodal potential. The third row shows the $(x_5,x_6)$-dimension, governed by the bimodal potential. On the level of the HJB solver, the plot should be viewed \textit{from right to left} since the target density (right) is transformed to a standard Gaussian (left). On the level of the reverse process, the samples (\textcolor{red}{red}) move from the standard Gaussian on the left to the target measure on the right. We note that in all cases the sampler is able to reproduce the multimodality and curvature of the corresponding density.}}
    \label{fig:fun_mix}
\end{figure}

\begin{figure}
    \centering
    \includegraphics[width=\textwidth]{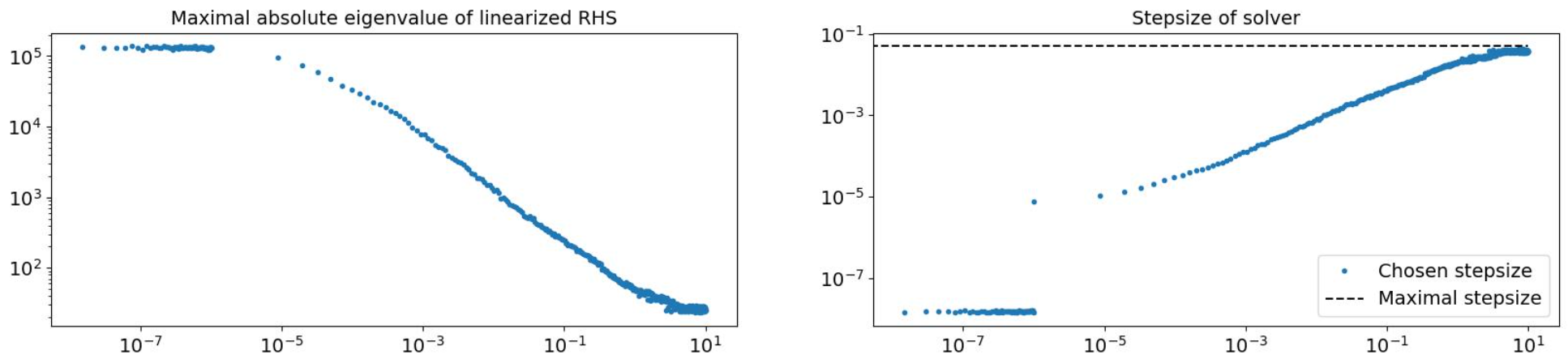}
    \caption{\textit{pproximations of the maximal absolute eigenvalues of the linearized HJB right-hand side (left) and accordingly chosen time stepsizes (right) as in Figure \ref{fig:gaussian_stepsize_plot} but for the mixed nonlinear potential from Section \ref{sec:mixed_nonlinear}. Note the jump in the stepsize at $t=10^{-6}$ which corresponds to a change in the stiffnes control parameter $\rho$. Up to small perturbations which may be attributed to inaccuracy of the power method the stepsizes are monotonically increasing again.}}
    \label{fig:fun_mix_stepsize_plot}
\end{figure}

\begin{figure}
    \centering
    \includegraphics[width=0.6\textwidth]{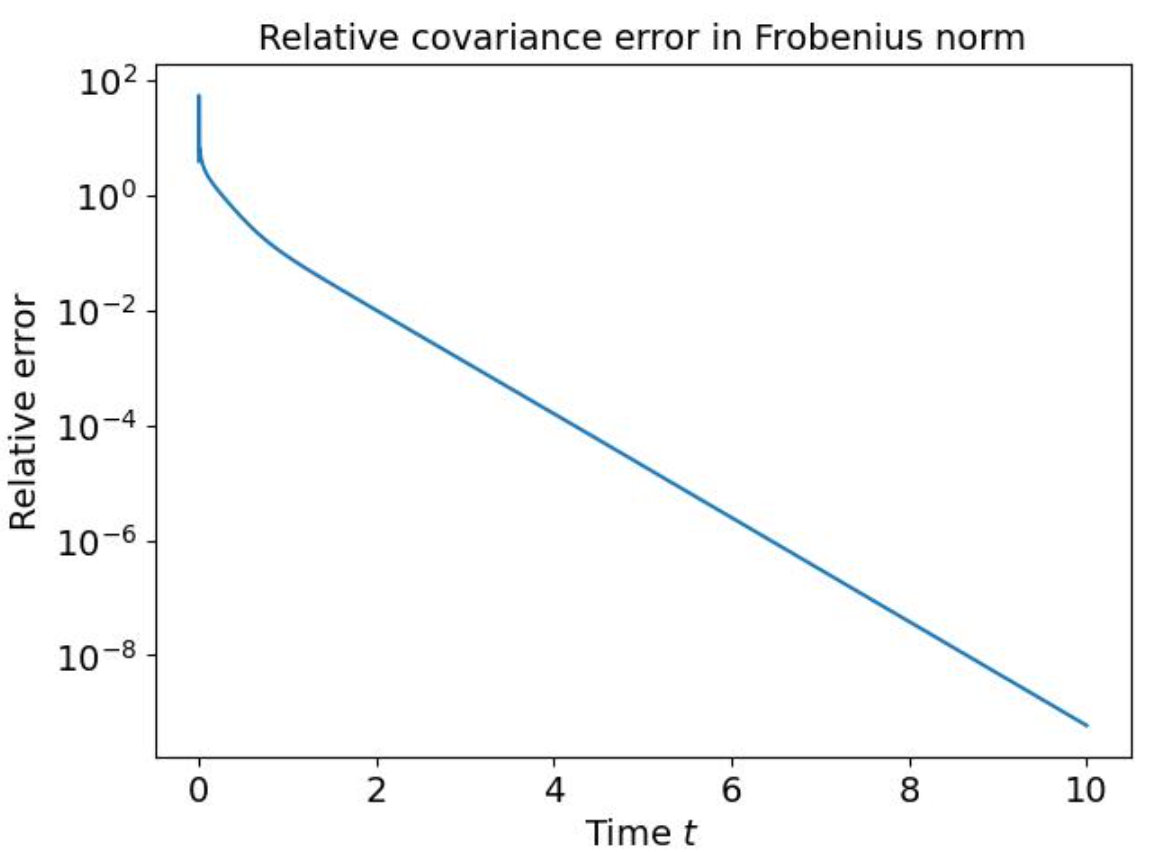}
    \caption{\textit{Decay of the covariance error \eqref{eq:cov_error} for the mixed nonlinear potential of Section \ref{sec:mixed_nonlinear}. Note that after an initial spike, which may be attributed to degree and rank increase of the true HJB solution, the error decays exponentially.}}
    \label{fig:fun_mix_convergence_plot}
\end{figure}

\subsection{Sensitivity of score-based sample generation under perturbations}

In \cite{song2020score} it was pointed out that using the flow-ODE instead of the reverse SDE, we actually can rely on high-order integration schemes and thus reduce the time discretization error significantly. However, both the reverse SDE and the flow ODE are unstable under perturbations. In fact, while the magnitude of perturbations decreases exponentially under the forward process, it can grow exponentially under the reverse process. This can be easily seen in the linear case. Assume that $\mu_\ast = \mathcal{N}(0,\Sigma)$ for $\Sigma = cI$, where $c>1$. Then for all $t\geq 0$, the law of \eqref{eq:Ornstein_Uhlenbeck} satisfies $\mu_{X_t} = \mathcal{N}(0,\Sigma_t)$ for $\Sigma_t = \Sigma e^{-2t} + (1-e^{-2t})I = (1+(c-1)e^{-2t})I$.
Hence for the flow ODE ($\lambda = 1$) and for the reverse SDE ($\lambda =0)$, the dynamic (respective drift) term has the form 
$$
f(t,X_t) = X_t + (2-\lambda) \nabla \log\pi_{T-t}(X_t) = (I - (2-\lambda)\Sigma^{-1}_{T-t})X_t.
$$
The eigenvalues of the linear dynamic of the ODE ($\lambda = 1$) are $1-(1 + (c-1)e^{-2(T-t)})^{-1} > 0$ for all $t$. Hence, perturbations in the initial data will grow exponentially in magnitute.

In practice, a good sample quality for the generation process relies on control of the time discretization, the error made in the score approximation and the accuracy of the start distribution. While the latter can be reduced by letting samples drawn from $\mathcal{N}(0,I)$ be transported to the terminal distribution of the forward process, $\mu_{X_T}$, where by construction $\mu_{X_T}\approx \mathcal{N}(0,I)$, 
the approximation error of the score still has negative effect on the sampling quality. 

In order to illustrate this, we consider an example independent of the low-rank approximation, that allows  for exact error control with respect to an accessible exact reference solution. For this, consider the two dimensional example with potential $\Phi_2(x,y) = x^4 + y^4 -4x^2 -4y^2 - 0.4x+0.1y+8$ from section \ref{sec:mixed_nonlinear}, i.e. $\mathrm{d}\mu_\ast(x) = \frac{1}{Z}\exp(-\Phi_2(x))\mathrm{d}x = \pi_\ast(x) \mathrm{d}x$. Since the forward process is an Ornstein-Uhlenbeck process, the exact density is obtained by convolution, in particular 
\begin{equation}
\label{eq:exact_pi}
\pi_t(x) = \int\limits_{\mathbb{R}^2} \pi_t(x | x_0) \pi_\ast(x_0)\mathrm{d}x_0.
\end{equation}
with the transition density 
$$
\pi_t(x | x_0)  = \frac{1}{\sqrt{ | 2\pi \Sigma(t)|}} e^{-\frac{1}{2}(x-M(t,x_0))^T \Sigma(t)^{-1}(x-M(t,x_0)) }
$$
with 
$$
M(t,x_0) = e^{-tI}x_0 = e^{-t}x_0, \quad 
\Sigma(t)  = \int\limits_{0}^t  e^{(s-t)I} \sigma\sigma^T e^{(s-t)I}\mathrm{d}s 
= (1 - e^{-2t})I.
$$
Now consider for $Q\in\mathbb{N}$ the approximation of $\pi_t$ through $\pi_t^Q$ defined as 
$$
\pi_t^Q(x) := \sum\limits_{i,j=1}^Q w_{ij} \pi_t(x|x_{ij}) \pi_\ast(x_{ij})
$$
via Gaussian quadrature encoded in the weights $(w_{ij})$ and abcissas $(x_{ij})$ on a suitable large tensor domain in $\mathbb{R}^2$ containing most of the support of $\pi_\ast$. 
We then define the approximation $v_t^M$ of the exact value function $v_t$ as 
$$
v_t(x) = -\log \pi_t(x) \approx v_t^Q(x) := -\log \pi_t^Q(x)
$$
In the experiment we consider the case of $Q=3$ for low-accuracy, $Q=10$ for medium accuracy and $Q=50$ for high-accuracy on the domain $[-5,5]^2$. As can be seen in Figure~\ref{fig:instability_sample}, the sample quality is very poor in the situation of low or medium accuracy. In our experiments the effect of perturbations due to low accuracy could be mitigated by Langevin postprocessing. 

\begin{figure}
    \begin{minipage}[t]{0.3\linewidth}
  \begin{center}
  \begin{tikzpicture}
 \node at (0,0) {\includegraphics[width = 0.8\linewidth]{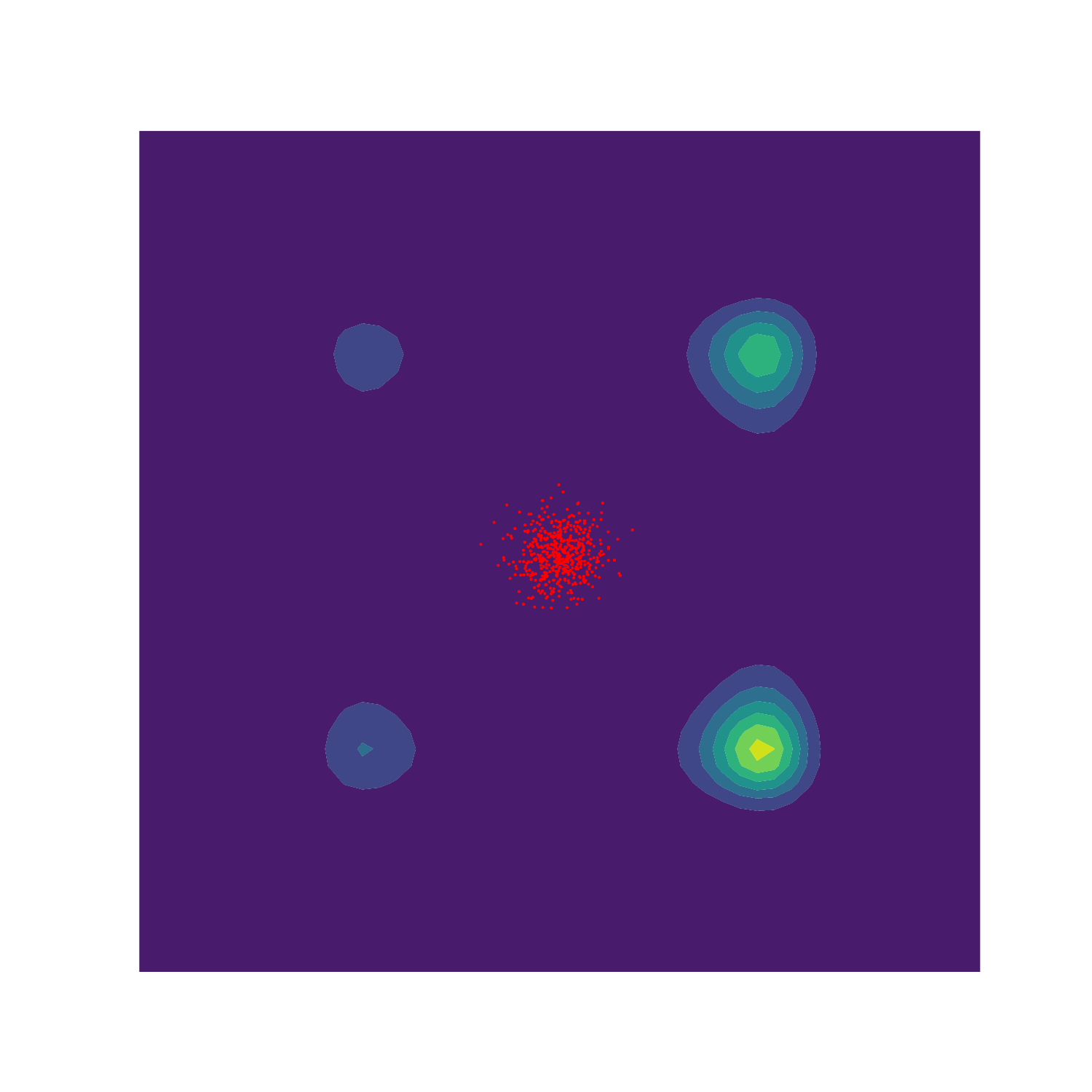}};
 \node at (0,5em)  {\small{\text{ low acc}}};
  \node at (0,-10em) {\includegraphics[width = 0.8\linewidth]{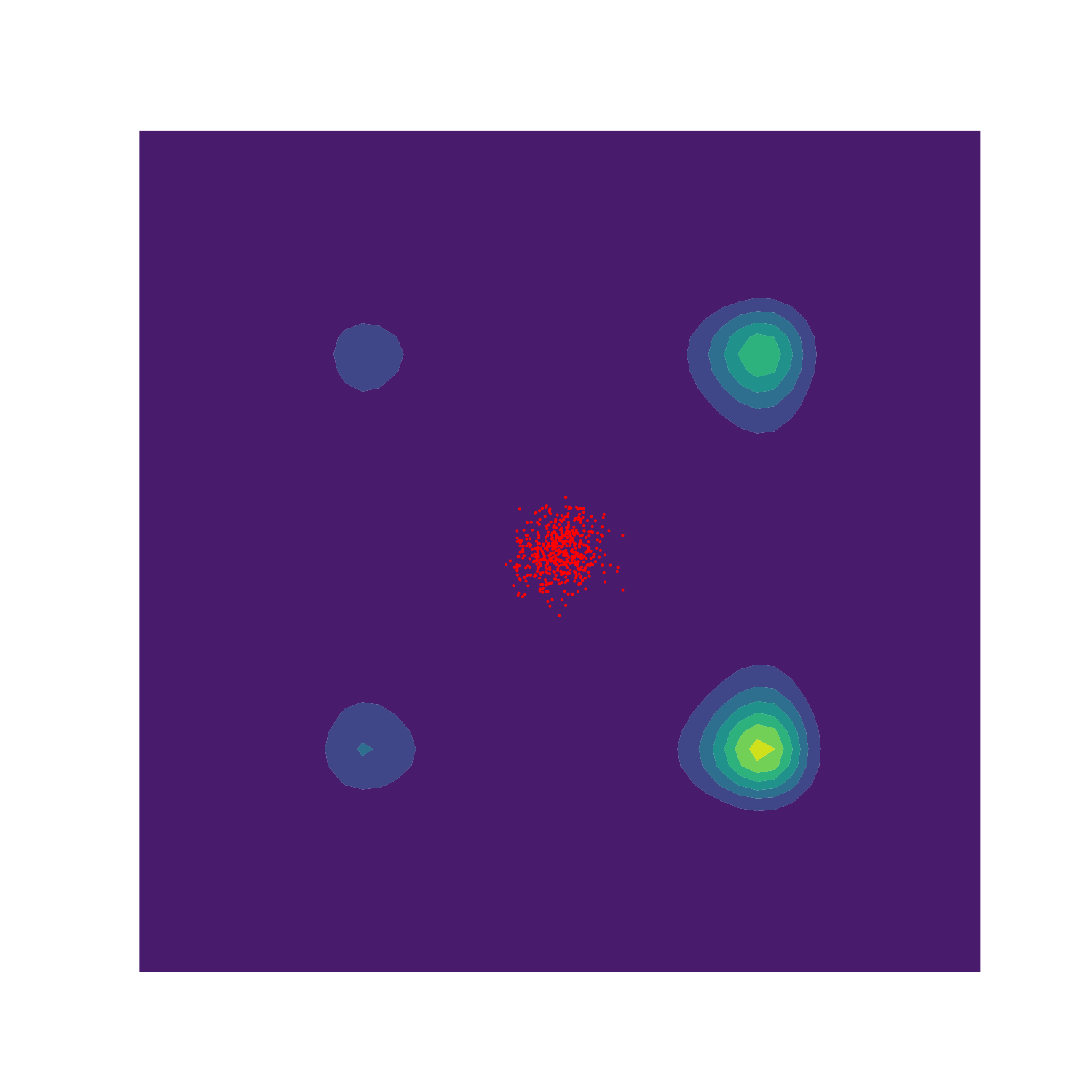}}; 
  \end{tikzpicture}
  \end{center}
 \end{minipage}
 \begin{minipage}[t]{0.3\linewidth}
  \begin{center}
    \begin{tikzpicture}
      \node at (0,0) {\includegraphics[width = 0.8\linewidth]{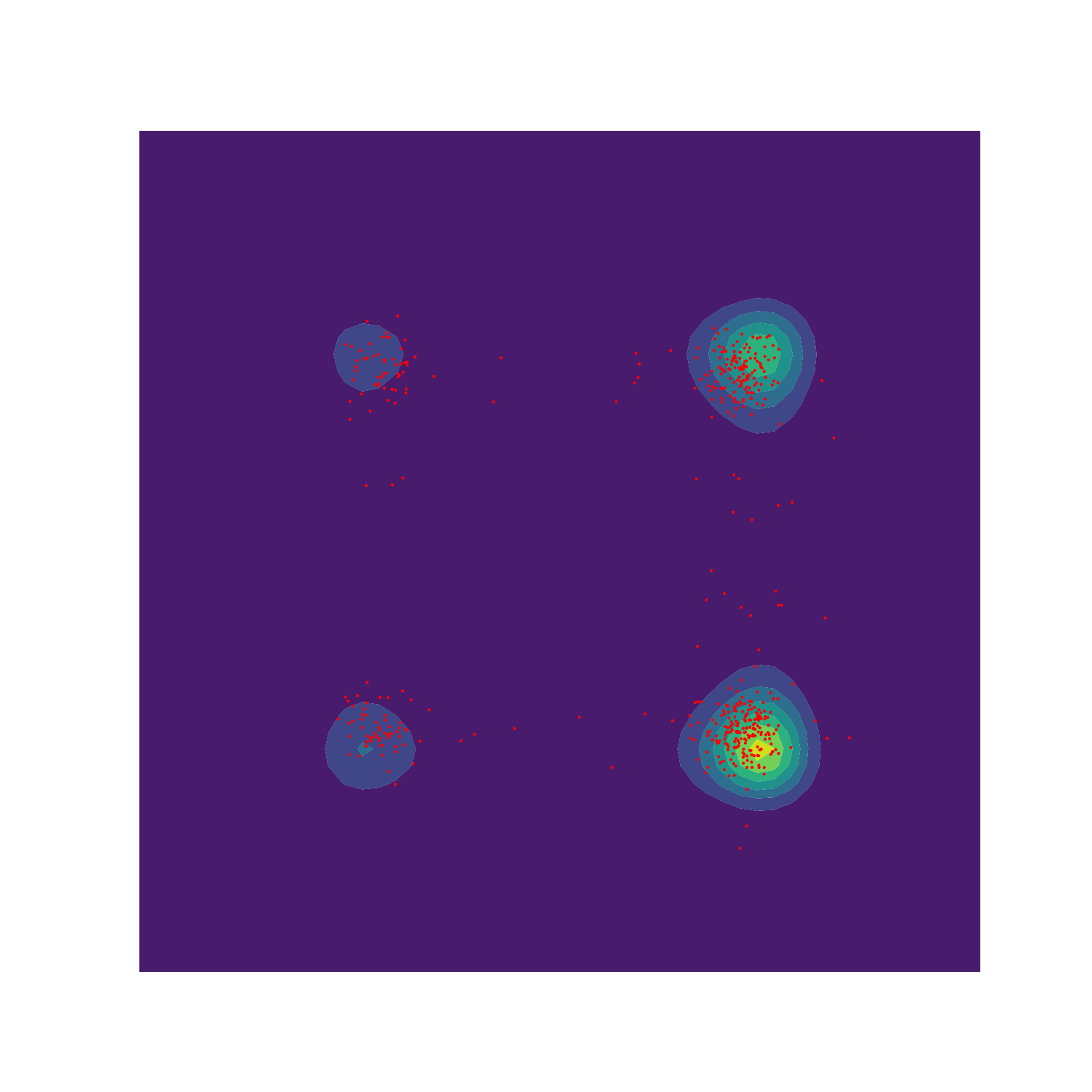}};
      \node at (0,5em)  {\small{\text{medium acc}}};
       \node at (0,-10em) {\includegraphics[width = 0.8\linewidth]{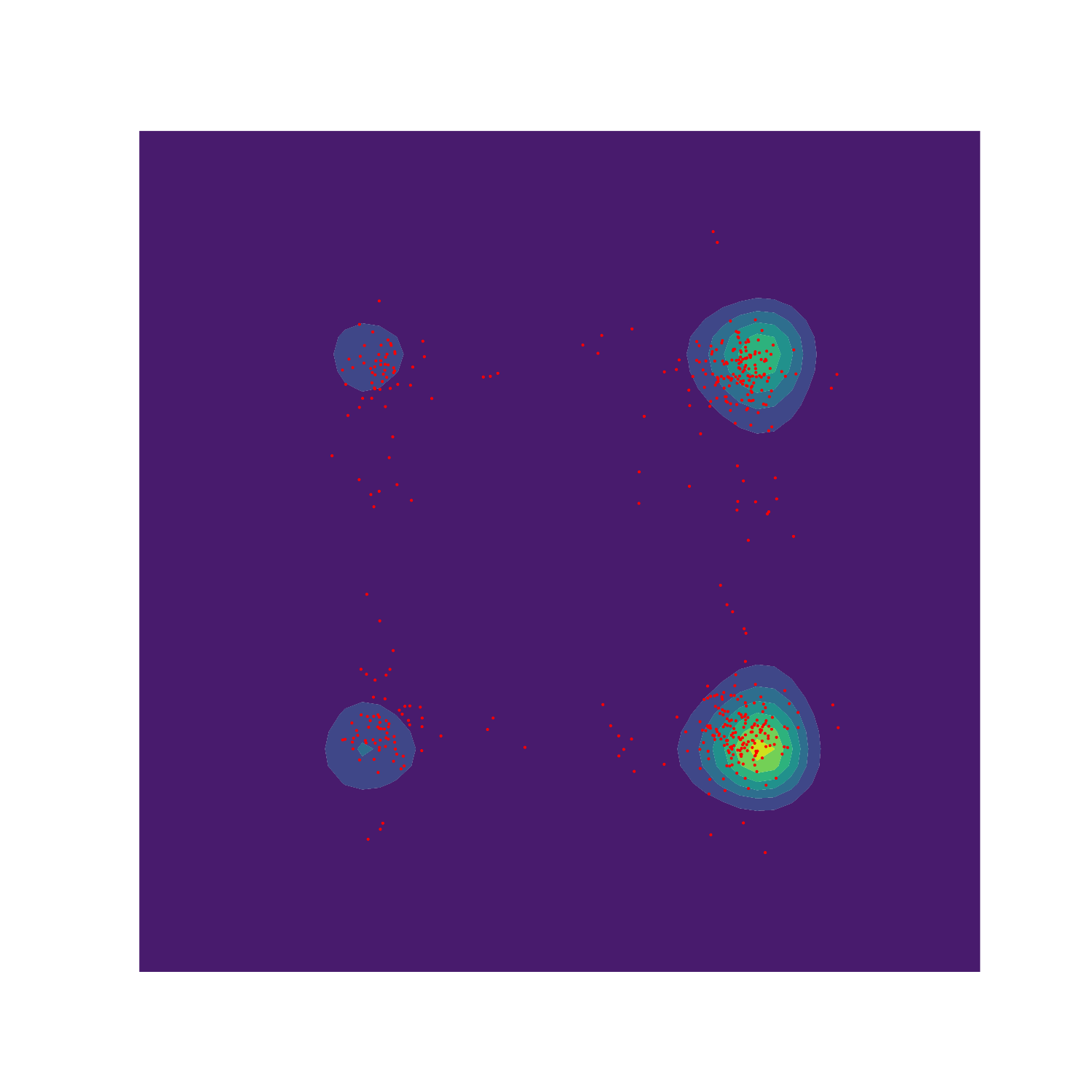}};
       \end{tikzpicture}
\end{center}
\end{minipage}
  \begin{minipage}[t]{0.3\linewidth}
    \begin{center}
      \begin{tikzpicture}
        \node at (0,0) {\includegraphics[width = 0.8\linewidth]{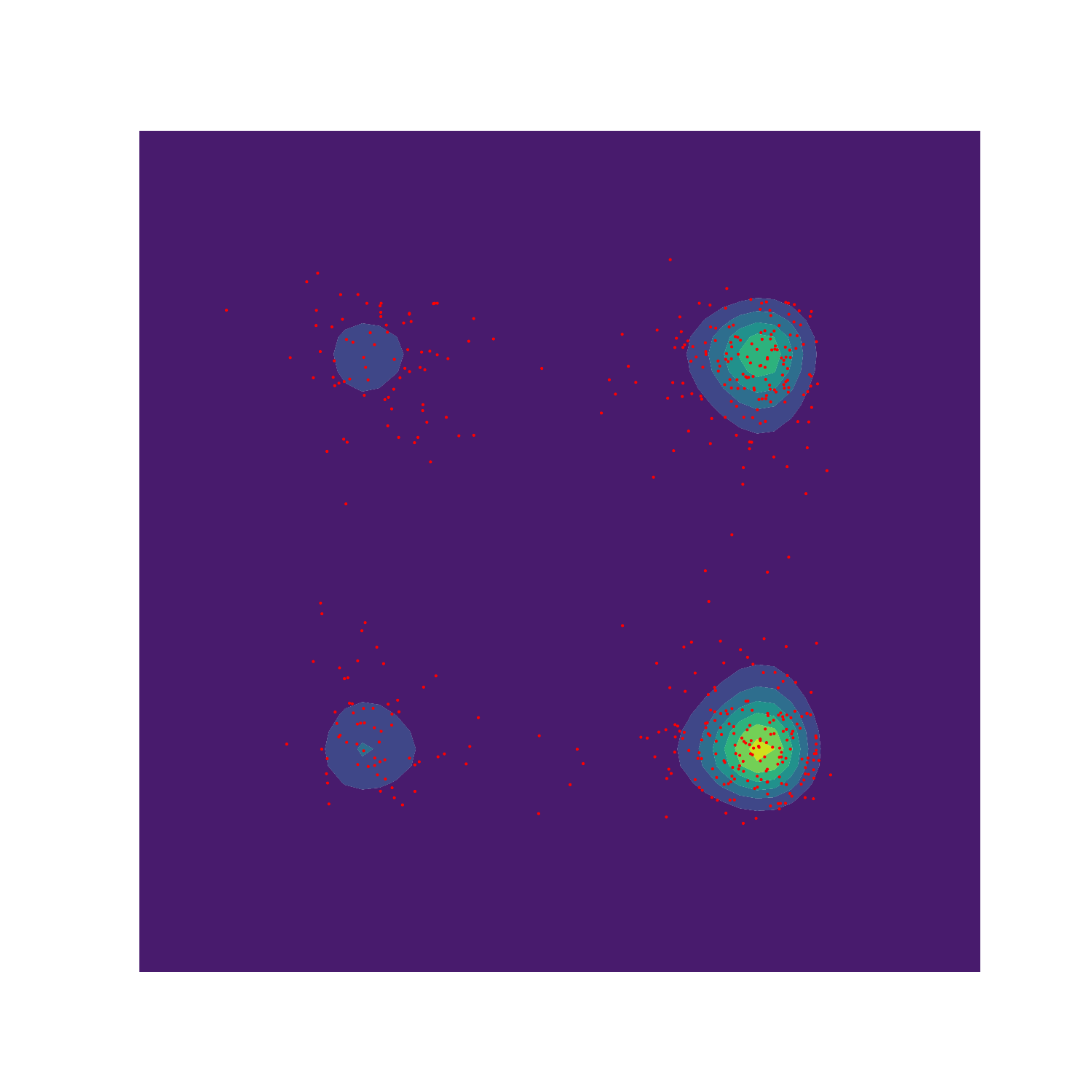}};
        \node at (0,5em)  {\small{\text{high acc}}};
        \node at (0,-10em) {\includegraphics[width = 0.8\linewidth]{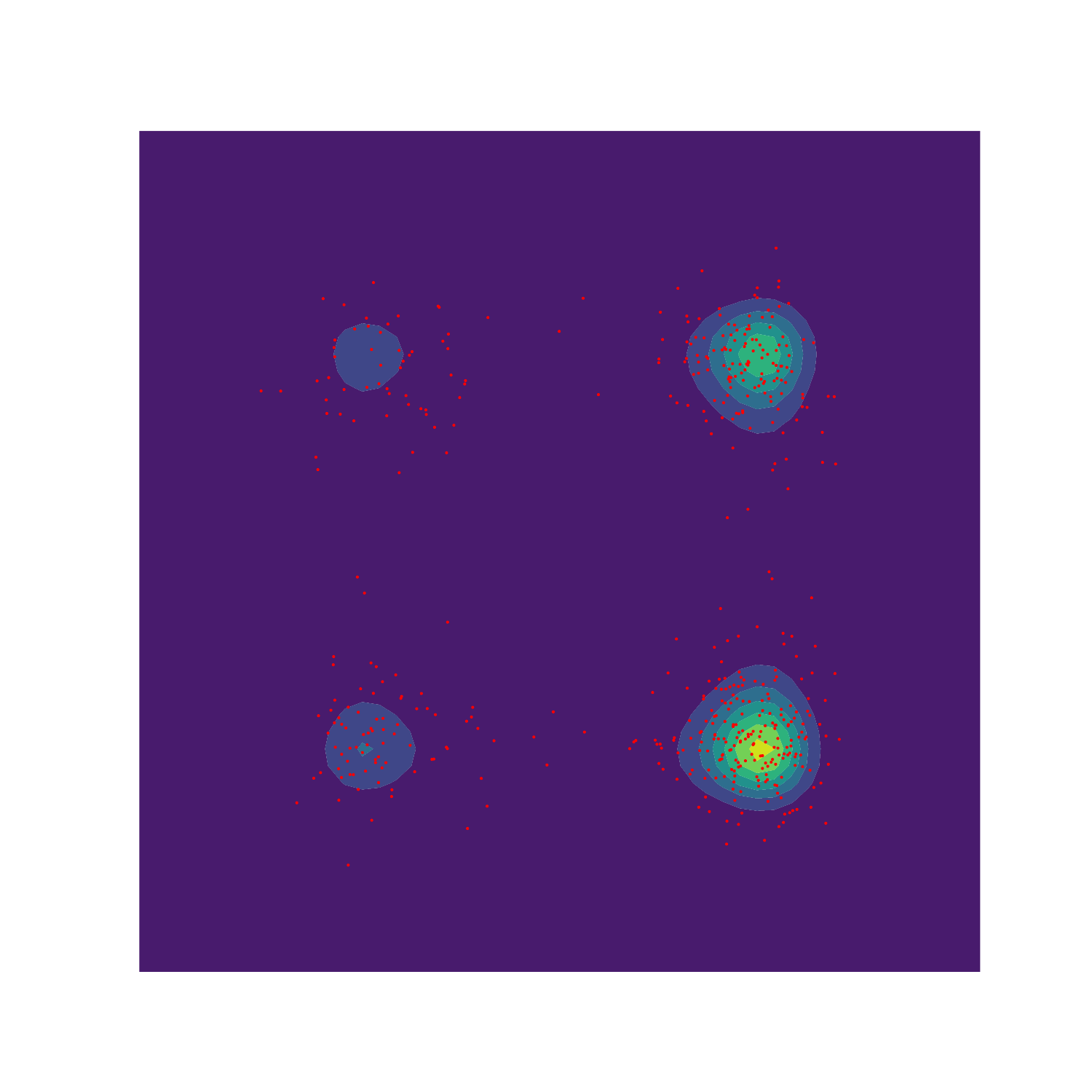}};
         \end{tikzpicture}
  \end{center}
  \end{minipage}
\caption{\textit{Sample quality illustration for different accuracies for the sample generation through the flow ODE (top row) and the reverse SDE (bottom row).}}
\label{fig:instability_sample}
\end{figure}
Motivated by this observation, we suggest an alternative to the reverse SDE sampling, given by the sampling process
\begin{equation}
\label{eq:homotopy_langevin}
    \mathrm{d}X_t = -\nabla \log\pi_{\max\{T-\alpha(t),0\}}(X_t)\mathrm{d}t+\sqrt{2}\mathrm{d}W_t,
\end{equation}
where $\pi_0=\pi_\ast$, and $\alpha\colon [0,\infty)\to [0,T]$ is some suitably chosen time dilation. A piecewise constant $\alpha$ corresponds to the Langevin postprocessing employed in Algorithm \ref{alg:reverse_sampling}. For a high number of postprocessing steps, the reverse-SDE step arguably becomes numerically negligible, and the reverse sampling process approximately follows \eqref{eq:homotopy_langevin}. 
The SDE in \eqref{eq:homotopy_langevin} is an instance of homotopy based Langevin dynamics developed in \cite{eigel2022interaction}. 
Consider again the linear example $\mu_{X_t}=\mathcal{N}(0,\Sigma_t)$ from the beginning of this section. Then, the drift term has the form
$$
-\nabla \log\pi_{\max\{T-\alpha(t),0\}} = -\Sigma_{\max\{T-\alpha(t),0\}}
$$
which uniformly provides negative eigenvalues, hence stability is to be expected. This serves as a motivation for the Langevin postprocessing employed in the sampling algorithm. 
\section{Conclusion and Outlook}

We presented an interpretable solver for the HJB equation arising from Hopf-Cole transformation of the Fokker-Planck equation in the setting of Bayesian inference and Generative Modelling. The approach uses functional Tensor Trains and spatial discretization with Legendre polynomials. A surrogate replacement for the HJB equation, which reduces to an ODE on tensor space, was derived. The applicability of the method was demonstrated on linear and nonlinear test cases.

There are some obvious avenues for future work. 
\begin{itemize}
    \item Incorporating more recent state-of-the-art dynamical low rank integrators for matrix valued ODEs such as \cite{ceruti2022rank,ceruti2023parallel} to the Tensor Train setting could lead to substantial performance improvements of the proposed method. In particular, the Basis
Update $\&$ Galerkin (BUG) integrator \cite{ceruti2022rank} introduces rank adaptivity, while the fully parallel integrator \cite{ceruti2023parallel} could additionally greatly speed up computations in high dimensions.
\item Sampling from the reverse process via an Euler-Maruyama discretization usually requires a small step size and a high number of time steps. In a recent work \cite{zhang2023fast}, a \textit{Diffusion Exponential Integrator Sampler} (DEIS) was proposed, which utilizes the semilinear structure of the learned diffusion process \eqref{eq:reverse_process} to reduce the discretization error. This integrator could be applied in our setting. In particular, the combination with recent dynamical low rank solvers such as \cite{ceruti2023parallel} could lead to a greatly reduced number of necessary steps both in solving the HJB as well as in discretizing the reverse process.
\item We provided results for the FTT rank structure of the HJB solution in case of Gaussian distributions 
\ifnum\online=1
(Lemma B.1 in the Online Supplementary Material)
\else
(Lemma \ref{lem:rank_gauss})\fi 
and distributions with independent components 
\ifnum\online=1
(Lemma B.2 in the Online Supplementary Material)
\else
(Lemma \ref{lem:rank_independent})\fi 
but it is an open question if there are further rank structures that are preserved under the HJB flow. As a fist step, 
\ifnum\online=1
Lemma B.2 in the Online Supplementary Material
\else
Lemma \ref{lem:rank_independent}\fi 
can be generalized to independence between groups of components: Let $f(x) = f_1(x_1,\ldots,x_n) + f_2(x_{n+1},\ldots,x_d)$. Then the FTT ranks of both $f$ and $\operatorname{Lin}(f) + \operatorname{NonLin}(f)$ satisfy $r_n\leq 2$. We conjecture that there are further situations in which the solution ranks can be bounded:
   \begin{center}
    \textit{Conjecture: 
    The HJB flow FTT ranks $\bm{r}_t$ are (up to a constant) bounded by  $\bm{r}_0$ of  $v_0$ and $\bm{r}_\infty\equiv 2$ for $v_\infty$. }
    \end{center}
This analysis is part of investigations in a subsequent work.
\item A rigorous analysis providing error estimates between the solution of the projected equation \eqref{eq:projected_hjb} and the solution of the HJB equation \eqref{eq:shifted_hjb} needs to be carried out. For a Gaussian potential, the solution of \eqref{eq:projected_hjb} coincides with that of \eqref{eq:shifted_hjb}. For more general densities the quality of the approxmiation largely depends on the initial condition, the contraction properties of the right-hand side of the HJB equation and the projection error. 
\item Finally, in the standard reverse scheme, sample quality depends on three factors: initial measure error, accuracy of the score and the time discretization. While the initial measure error can be treated by methods such as predictor-corrector schemes, the other sources of error can only be mitigated by increased computational effort. Langevin postprocessing provides a cheaper alternative, improving the quality of the samples (up to a certain level) without requiring higher accuracy of the score. We therefore suggested a homotopy approach in \eqref{eq:homotopy_langevin}. The sample quality achieved by this method, especially in the case of low accuracy of the score, will be investigated in future work.
\end{itemize}
\section{Acknowledgements}

DS \& ME acknowledge support by the Profit project \textit{ReLkat - Reinforcement Learning for complex automation engineering} as well as support by the ANR-DFG project \textit{COFNET: Compositional functions networks - adaptive learning for high-dimensional approximation and uncertainty quantification}. 
RG, ME \& CS acknowledge support by the DFG MATH+ project AA5-5 (was EF1-25) -
\textit{Wasserstein Gradient Flows for Generalised Transport in Bayesian Inversion}.
ME acknowledges partial funding by the DFG priority program SPP 2298 ``Theoretical Foundations of Deep Learning''.
This study does not have any conflicts to disclose.

\ifnum\classstyle=0
\bibliographystyle{abbrv}
\bibliography{ref}
\fi
\ifnum\classstyle=1
\bibliographystyle{abbrv}
\bibliography{ref}
\fi
\ifnum\classstyle=2
\bibliography{ref}
\fi
\ifnum\classstyle=3
\bibliographystyle{siamplain}
\bibliography{ref}

\begin{thebibliography}{10}

\bibitem{ANDERSON1982313}
B.~D. Anderson.
\newblock Reverse-time diffusion equation models.
\newblock {\em Stochastic Processes and their Applications}, 12(3):313--326,
  1982.

\bibitem{bachmayr2023low}
M.~Bachmayr.
\newblock Low-rank tensor methods for partial differential equations.
\newblock {\em Acta Numerica}, 32:1--121, 2023.

\bibitem{bachmayr2021approximation}
M.~Bachmayr, A.~Nouy, and R.~Schneider.
\newblock Approximation by tree tensor networks in high dimensions: Sobolev and
  compositional functions.
\newblock {\em arXiv preprint arXiv:2112.01474}, 2021.

\bibitem{berner2020numerically}
J.~Berner, M.~Dablander, and P.~Grohs.
\newblock Numerically solving parametric families of high-dimensional
  kolmogorov partial differential equations via deep learning.
\newblock {\em Advances in Neural Information Processing Systems},
  33:16615--16627, 2020.

\bibitem{berner2022optimal}
J.~Berner, L.~Richter, and K.~Ullrich.
\newblock An optimal control perspective on diffusion-based generative
  modeling.
\newblock {\em arXiv preprint arXiv:2211.01364}, 2022.

\bibitem{brooks2011handbook}
S.~Brooks, A.~Gelman, G.~Jones, and X.-L. Meng.
\newblock {\em Handbook of Markov Chain Monte Carlo}.
\newblock CRC press, 2011.

\bibitem{reich2022kalman}
E.~Calvello, S.~Reich, and A.~M. Stuart.
\newblock Ensemble {K}alman methods: A mean field perspective, 2022.

\bibitem{ceruti2022rank}
G.~Ceruti, J.~Kusch, and C.~Lubich.
\newblock A rank-adaptive robust integrator for dynamical low-rank
  approximation.
\newblock {\em BIT Numerical Mathematics}, 62(4):1149--1174, 2022.

\bibitem{ceruti2023parallel}
G.~Ceruti, J.~Kusch, and C.~Lubich.
\newblock A parallel rank-adaptive integrator for dynamical low-rank
  approximation, 2023.

\bibitem{cohen2011analytic}
A.~Cohen, R.~Devore, and C.~Schwab.
\newblock Analytic regularity and polynomial approximation of parametric and
  stochastic elliptic pde's.
\newblock {\em Analysis and Applications}, 9(01):11--47, 2011.

\bibitem{dolgov2021}
S.~Dolgov, D.~Kalise, and K.~K. Kunisch.
\newblock Tensor decomposition methods for high-dimensional
  hamilton--jacobi--bellman equations.
\newblock {\em SIAM Journal on Scientific Computing}, 43(3):A1625--A1650, 2021.

\bibitem{eigel2022interaction}
M.~Eigel, R.~Gruhlke, and D.~Sommer.
\newblock Less interaction with forward models in langevin dynamics, 2022.

\bibitem{eigel2023dynamical}
M.~Eigel, R.~Schneider, and D.~Sommer.
\newblock Dynamical low-rank approximations of solutions to the
  hamilton--jacobi--bellman equation.
\newblock {\em Numerical Linear Algebra with Applications}, 30(3):e2463, 2023.

\bibitem{garbuno2020interacting}
A.~Garbuno-Inigo, F.~Hoffmann, W.~Li, and A.~M. Stuart.
\newblock Interacting {L}angevin diffusions: {G}radient structure and ensemble
  {K}alman sampler.
\newblock {\em SIAM Journal on Applied Dynamical Systems}, 19(1):412--441,
  2020.

\bibitem{garbuno2020affine}
A.~Garbuno-Inigo, N.~Nusken, and S.~Reich.
\newblock Affine invariant interacting {L}angevin dynamics for {B}ayesian
  inference.
\newblock {\em SIAM Journal on Applied Dynamical Systems}, 19(3):1633--1658,
  2020.

\bibitem{goodman2010ensemble}
J.~Goodman and J.~Weare.
\newblock Ensemble samplers with affine invariance.
\newblock {\em Communications in applied mathematics and computational
  science}, 5(1):65--80, 2010.

\bibitem{griebel2023low}
M.~Griebel, H.~Harbrecht, and R.~Schneider.
\newblock Low-rank approximation of continuous functions in sobolev spaces with
  dominating mixed smoothness.
\newblock {\em Mathematics of Computation}, 92(342):1729--1746, 2023.

\bibitem{ho2020denoising}
J.~Ho, A.~Jain, and P.~Abbeel.
\newblock Denoising diffusion probabilistic models.
\newblock {\em Advances in Neural Information Processing Systems},
  33:6840--6851, 2020.

\bibitem{holtz2012alternating}
S.~Holtz, T.~Rohwedder, and R.~Schneider.
\newblock The alternating linear scheme for tensor optimization in the tensor
  train format.
\newblock {\em SIAM Journal on Scientific Computing}, 34(2):A683--A713, 2012.

\bibitem{holtz2012manifolds}
S.~Holtz, T.~Rohwedder, and R.~Schneider.
\newblock On manifolds of tensors of fixed tt-rank.
\newblock {\em Numerische Mathematik}, 120(4):701--731, 2012.

\bibitem{huang2021variational}
C.-W. Huang, J.~H. Lim, and A.~C. Courville.
\newblock A variational perspective on diffusion-based generative models and
  score matching.
\newblock {\em Advances in Neural Information Processing Systems},
  34:22863--22876, 2021.

\bibitem{hyvarinen2005estimation}
A.~Hyv{\"a}rinen and P.~Dayan.
\newblock Estimation of non-normalized statistical models by score matching.
\newblock {\em Journal of Machine Learning Research}, 6(4), 2005.

\bibitem{lubich2007}
O.~Koch and C.~Lubich.
\newblock Dynamical low‐rank approximation.
\newblock {\em SIAM Journal on Matrix Analysis and Applications},
  29(2):434--454, 2007.

\bibitem{liu2016stein}
Q.~Liu and D.~Wang.
\newblock Stein variational gradient descent: A general purpose {B}ayesian
  inference algorithm.
\newblock {\em Advances in neural information processing systems}, 29, 2016.

\bibitem{lubich2013}
C.~Lubich and I.~Oseledets.
\newblock A projector-splitting integrator for dynamical low-rank
  approximation.
\newblock {\em BIT Numerical Mathematics}, 54:171--188, 2013.

\bibitem{lubich2015}
C.~Lubich, I.~V. Oseledets, and B.~Vandereycken.
\newblock Time integration of tensor trains.
\newblock {\em SIAM Journal on Numerical Analysis}, 53(2):917--941, 2015.

\bibitem{markowich2000trend}
P.~A. Markowich and C.~Villani.
\newblock On the trend to equilibrium for the fokker-planck equation: an
  interplay between physics and functional analysis.
\newblock {\em Mat. Contemp}, 19:1--29, 2000.

\bibitem{nusken2021solving}
N.~N{\"u}sken and L.~Richter.
\newblock Solving high-dimensional hamilton--jacobi--bellman pdes using neural
  networks: perspectives from the theory of controlled diffusions and measures
  on path space.
\newblock {\em Partial differential equations and applications}, 2:1--48, 2021.

\bibitem{oseledets2011tensor}
I.~V. Oseledets.
\newblock Tensor-train decomposition.
\newblock {\em SIAM Journal on Scientific Computing}, 33(5):2295--2317, 2011.

\bibitem{oseledets2013constructive}
I.~V. Oseledets.
\newblock Constructive representation of functions in low-rank tensor formats.
\newblock {\em Constructive Approximation}, 37:1--18, 2013.

\bibitem{oster2022approximating}
M.~Oster, L.~Sallandt, and R.~Schneider.
\newblock Approximating optimal feedback controllers of finite horizon control
  problems using hierarchical tensor formats.
\newblock {\em SIAM Journal on Scientific Computing}, 44(3):B746--B770, 2022.

\bibitem{reich2021fokker}
S.~Reich and S.~Weissmann.
\newblock {F}okker--{P}lanck particle systems for {B}ayesian inference:
  {C}omputational approaches.
\newblock {\em SIAM/ASA Journal on Uncertainty Quantification}, 9(2):446--482,
  2021.

\bibitem{rey2016improving}
L.~Rey-Bellet and K.~Spiliopoulos.
\newblock Improving the convergence of reversible samplers.
\newblock {\em Journal of Statistical Physics}, 164:472--494, 2016.

\bibitem{robert2011mcmc}
C.~Robert and G.~Casella.
\newblock {A Short History of Markov Chain Monte Carlo: Subjective
  Recollections from Incomplete Data}.
\newblock {\em Statistical Science}, 26(1):102 -- 115, 2011.

\bibitem{roberts2004general}
G.~O. Roberts and J.~S. Rosenthal.
\newblock General state space {M}arkov chains and {MCMC} algorithms.
\newblock {\em Probability surveys}, 1:20--71, 2004.

\bibitem{roberts1996exponential}
G.~O. Roberts and R.~L. Tweedie.
\newblock Exponential convergence of {L}angevin distributions and their
  discrete approximations.
\newblock {\em Bernoulli}, pages 341--363, 1996.

\bibitem{song2019generative}
Y.~Song and S.~Ermon.
\newblock Generative modeling by estimating gradients of the data distribution.
\newblock {\em Advances in neural information processing systems}, 32, 2019.

\bibitem{pmlr-v115-song20a}
Y.~Song, S.~Garg, J.~Shi, and S.~Ermon.
\newblock Sliced score matching: A scalable approach to density and score
  estimation.
\newblock In R.~P. Adams and V.~Gogate, editors, {\em Proceedings of The 35th
  Uncertainty in Artificial Intelligence Conference}, volume 115 of {\em
  Proceedings of Machine Learning Research}, pages 574--584. PMLR, 22--25 Jul
  2020.

\bibitem{song2020score}
Y.~Song, J.~Sohl-Dickstein, D.~P. Kingma, A.~Kumar, S.~Ermon, and B.~Poole.
\newblock Score-based generative modeling through stochastic differential
  equations.
\newblock {\em arXiv preprint arXiv:2011.13456}, 2020.

\bibitem{stuart_2010}
A.~M. Stuart.
\newblock Inverse problems: A bayesian perspective.
\newblock {\em Acta Numerica}, 19:451–559, 2010.

\bibitem{zhang2023transport}
B.~J. Zhang, Y.~M. Marzouk, and K.~Spiliopoulos.
\newblock Transport map unadjusted langevin algorithms, 2023.

\bibitem{zhang2020wasserstein}
K.~S. Zhang, G.~Peyr{\'e}, J.~Fadili, and M.~Pereyra.
\newblock Wasserstein control of mirror langevin monte carlo.
\newblock In {\em Conference on Learning Theory}, pages 3814--3841. PMLR, 2020.

\bibitem{zhang2023fast}
Q.~Zhang and Y.~Chen.
\newblock Fast sampling of diffusion models with exponential integrator.
\newblock In {\em The Eleventh International Conference on Learning
  Representations}, 2023.

\bibitem{zhou2021}
M.~Zhou, J.~Han, and J.~Lu.
\newblock Actor-critic method for high dimensional static
  hamilton--jacobi--bellman partial differential equations based on neural
  networks.
\newblock {\em SIAM Journal on Scientific Computing}, 43(6):A4043--A4066, 2021.

\end{thebibliography}
\fi

\newpage
\appendix

\ifnum\online=0
\ifnum\classstyle=3
\else
\section{Hopf-Cole Transformation}\label{app:hopf_cole}

Let $\pi_t$ satisfy the Fokker-Planck equation
\begin{align}
    \partial_t \pi_t =  \Delta \pi_t+x\cdot\nabla \pi_t + d\pi_t.
\end{align}

Then, by the chain and product rule, we have the identities
\begin{align}
    \partial_t \log\pi_t &= \frac{1}{\pi_t}\partial_t \pi_t = \frac{1}{\pi_t}(\Delta \pi_t+x\cdot\nabla \pi_t + d\pi_t), \\
    \Delta\log\pi_t &= \nabla\cdot \left(\frac{1}{\pi_t}\nabla\pi_t\right) = \frac{1}{\pi_t}\Delta\pi_t - \frac{1}{\pi_t^2}\nabla\pi_t\cdot \nabla\pi_t = \frac{1}{\pi_t}\Delta\pi_t - \|\nabla\log\pi_t\|_2^2.
\end{align}
Putting these together, we see that
\begin{align}
    \partial_t \log\pi_t = \Delta \log\pi_t + \|\nabla \log\pi_t\|_2^2 + x\cdot \nabla\log\pi_t + d
\end{align}
and hence $v_t = -\log\pi_t$ satisfies
\begin{align}
    \partial_t v_t = \Delta v_t - \|\nabla v_t\|_2^2 + x\cdot \nabla v_t - d.
\end{align}

\fi

\ifnum\classstyle=3
\else
\section{Optimal Control Perspective}\label{app:oc}
We consider the case where $\pi_{\ast}$ is a zero-mean Gaussian with symmetric positive definite covariance matrix $\Sigma$. Hence, \eqref{eq:shifted_hjb} becomes
\begin{equation}
    \begin{aligned}
        \partial_t v_t = \operatorname{Lin}(v_t) + \operatorname{NonLin}(v_t), \qquad v_0 = \dfrac{1}{2}x^{\intercal} \Sigma^{-1} x.
    \end{aligned}
\end{equation}
This is a familiar form in stochastic optimal control. Consider an SDE
\begin{align}\label{eq:SDE}
    \mathrm{d}X_t &= (f(X_t) + g(X_t)u_t)dt + \sigma(X_t)\mathrm{d}W_t, \\
    X_0 &= x_0,
\end{align}
with initial condition $x_0$, control $u_t$, diffusion $\sigma$, free drift part $f$ and controlled drift part $g\cdot u_t$. Associate with this SDE a cost functional given by
\begin{align}
    J(t,x,u) = \mathbb{E}\left[ \int_t^T \lambda u_t^{\intercal} u_t \mathrm{d}t + E(X_T) \big| X_t = x \right],\label{eq:stochastic_cost}
\end{align}
where $\lambda > 0$ and $E$ is a positive definite terminal cost function.
The associated HJB equation for the value function $V$ reads
\begin{align*}
    \partial_t V + f^{\intercal}\nabla V - \dfrac{1}{4\lambda}\nabla V^{\intercal} g g^{\intercal} \nabla V + \dfrac{\sigma^2}{2} \Delta V = 0, \qquad V(T,x) = E(x).
\end{align*}
Now, choose $f(x)=x$, $g(x) \equiv I_d$, $E(x) = x^{\intercal} \Sigma^{-1}x/2$, $\lambda = 1/4$ and $\sigma=\sqrt{2}$ to arrive at 
\begin{equation}\label{eq:hjb_lqr}
    \begin{aligned}
    \partial_t V &= -\nabla V^{\intercal}x + \| \nabla V \|^2 - \Delta V ,\\
    \qquad V(T,x) &= \frac{1}{2}x^{\intercal} \Sigma^{-1}x.
\end{aligned}
\end{equation}
Clearly, reversing the time by defining $\overleftarrow{V}(t,x) \coloneqq V(T-t,x)$ and regrouping the terms yields 
\begin{equation}
    \begin{aligned}
    \partial_t \overleftarrow{V} &=  \Delta \overleftarrow{V}+ \nabla \overleftarrow{V}^{\intercal}x   -\| \nabla \overleftarrow{V} \|^2 =  \operatorname{Lin}(\overleftarrow{V}) + \operatorname{NonLin}(\overleftarrow{V}),\\
    \qquad \overleftarrow{V}(0,x) &= \frac{1}{2}x^{\intercal} \Sigma^{-1}x,
\end{aligned}
\end{equation}
 and hence $v_t = \overleftarrow{V}(t,\cdot)$. In total, the log-density of the foward SDE is given as the reverse-time value function of a linear quadratic optimal control problem. The linear quadratic problem has the property that its solution is the same in the deterministic and stochastic setting. Hence, instead of the stochastic problem, we may also consider the deterministic optimal control problem defined by
\begin{align}
    \dot{x} &=  x +   u  , \qquad
    x(0) = x_0, \\
    J(t,x,u) &= \int_t^T \dfrac{1}{4} u(t)^{\intercal} u(t) \mathrm{d}t + \frac{1}{2}X(T)^{\intercal} \Sigma^{-1} X(T).
\end{align}
Now, setting $A=I_d$, $B=I_d$, $Q\equiv 0\in \mathbb{R}^{d\times d}$, $R = \frac{1}{4}I_d$ and $Q_f = \frac{1}{2}\Sigma_{\rho}^{-1}$ this leads to 
\begin{align}
    \dot{x} &= A x + B  u  , \qquad
    x(0) = x_0, \\
    J(t,x,u) &= \int_t^T x(t)Qx(t) + u(t)^{\intercal} R u(t) \mathrm{d}t + X(T)^{\intercal} Q_f X(T).
\end{align}
The solution of this problem is given by the LQR $V(t,x) = x^{\intercal}P_t x$, where $P_t$ solves a Riccati differential equation with inputs $A,B,Q,R,Q_f$. It follows that
\begin{align*}
    \nabla v_{T-t}(x) = \nabla V(t,x) = 2P_t x,
\end{align*}
leading for $\lambda = 0$ to a reverse-time generative process defined by
\begin{equation}\label{eq:lin_process}
    \mathrm{d}Y_s 
    = (I_d-4P_s) Y_s\mathrm{d}s + \sqrt{2}\mathrm{d}W_s.
\end{equation}
\fi

\section{Motivation for using (Functional) Tensor Trains}\label{app:tt_motivation}

We give an informal motivation for the use of FTTs and in particular polynomials represented in the Tensor Train format in the setting of Bayesian inference for paramteric PDEs. For more rigorous representations and decay rates in polynomial chaos representations of solutions of parametric PDEs we refer e.g. to pioneering work in \cite{cohen2011analytic} and follow up research. In this setting, the fact that usually only very few data points are available often renders high frequency components non-informative. Hence, the higher mode dimensions are often close to the prior information even after inference. Assuming that the prior is given as a (standard) Gaussian, it is reasonable to assume that these higher modes will be close to Gaussian. In particular, this motivates a form of potential similar to the nonlinear potential used in Section \ref{sec:mixed_nonlinear}. The following provides a sketch on how such a form might be obtained.
Let  $M_1\in\mathbb{N}$, $M_1 < d$, denote a number of relevant modes and for maximal polynomial degrees $d_1\geq d_2\geq \ldots \geq d_{M_1}\geq 2$ let
$$ \text{relevant} = \bigtimes\limits_{i=1}^{M_1} \{0,\ldots, d_i\}, \qquad \sqrt{\text{relevant}} = \bigtimes\limits_{i=1}^{M_1} \{0,\ldots, \lfloor\sqrt{d_i}\rfloor\}.$$

Observe that solutions $u$ of parameteric PDEs with spatial variable $x$ and parameter $y$ can often be written as
$$
u(x,y) \approx \sum\limits_{\bm\alpha \in \sqrt{\text{relevant}}} u_{\bm\alpha}(x)p_{\bm\alpha}(y)
$$
where $u_{\bm\alpha}$ is an element of some function space $V$ for every $\bm\alpha$.
Let $G(y) = u(\cdot, y)\in V$ and for some $K\in\mathbb{N}$ let $\mathcal{O}\colon V\to \mathbb{R}^K$ be a linear observation operator (e.g. point evaluations in $x$). Hence: 
$$
\mathcal{O}(G(y)) = \mathcal{O}(u(\cdot, y)) = 
\sum\limits_{\bm\alpha \in \sqrt{\text{relevant}}} \bm{u}_{\bm\alpha} p_{\bm\alpha}(y), \quad \bm{u}_{\bm\alpha} \in\mathbb{R}^K.
$$
Then, assuming a Bayesian setting with a zero mean Gaussian prior with covariance matrix $\Sigma$, the log posterior density has the form
$$
\log \pi(y) = -\frac{1}{2}\| \mathcal{O}(G(y)) - \delta \|_{\sigma I_d}^2 - \frac{1}{2}\|y\|_\Sigma^2
$$
where $\delta$ is an observation and $\sigma I_d$, $\sigma>0$, is the covariance of the zero mean Gaussian observational noise.
By the form of $\mathcal{O}(G(y))$, it then follows that there are coefficient tensors $c^{\mathrm{prior}}$ and $c^{\mathrm{likelihood}}$ such that the potential or negative log posterior density is of the form 
\ifnum\classstyle=1
\begin{equation}
\begin{aligned}
\Phi(y)  &=\frac{1}{2\sigma}\sum_{k=1}^K \left(\mathcal{O}(G(y)) - \delta\right)_k^2 +  \sum\limits_{|\bm\beta|=2} c^{\text{prior}}[\bm\beta] P_{\bm\beta}(y)\\
&= 
 \underbrace{
\sum\limits_{\bm\beta \in \text{relevant}} c^{\text{likelihood}}[\bm\beta]P_{\bm\beta}(y)
 +  \sum\limits_{|\bm\beta|=2, \bm\beta_i\equiv 0 ~\textnormal{for}~ i > M_1} c^{\text{prior}}[\bm\beta] P_{\bm\beta}(y)}_{\text{non-Gaussian component}}
 +  \underbrace{\sum\limits_{|\bm\beta|=2, \bm\beta_i\equiv 0 ~\textnormal{for}~ i \leq M_1} c^{\text{prior}}[\bm\beta] P_{\bm\beta}(y) }_{\text{Gaussian (uninformed) component}},
 \end{aligned}
\end{equation}
\fi
\ifnum\classstyle=2
\begin{equation}
\begin{aligned}
\Phi(y)  &=\frac{1}{2\sigma}\sum_{k=1}^K \left(\mathcal{O}(G(y)) - \delta\right)_k^2 +  \sum\limits_{|\bm\beta|=2} c^{\text{prior}}[\bm\beta] P_{\bm\beta}(y)\\
&= 
 \underbrace{
\sum\limits_{\bm\beta \in \text{relevant}} c^{\text{likelihood}}[\bm\beta]P_{\bm\beta}(y)
 +  \sum\limits_{|\bm\beta|=2, \bm\beta_i\equiv 0 ~\textnormal{for}~ i > M_1} c^{\text{prior}}[\bm\beta] P_{\bm\beta}(y)}_{\text{non-Gaussian component}}
 \\
 &~~ +  \underbrace{\sum\limits_{|\bm\beta|=2, \bm\beta_i\equiv 0 ~\textnormal{for}~ i \leq M_1} c^{\text{prior}}[\bm\beta] P_{\bm\beta}(y) }_{\text{Gaussian (uninformed) component}},
\end{aligned}
\end{equation}
\fi
with non-Gaussian parts confined to the relevant modes $1,\ldots,M_1$.


\section{Functional Tensor Train rank of HJB solutions}\label{app:ftt_ranks}

\begin{lemma}[Gaussian distributions]\label{lem:rank_gauss}
    Let $d\in\mathbf{N}$ and $f\colon\mathbb{R}^d\rightarrow\mathbb{R}$ admit the form $f(x) = x^{\intercal}M x$ for a symmetric positive definite matrix $M \in\mathbb{R}^{d,d}$. Then $f$ has finite FTT rank $\bm{r}\in\mathbb{N}^{d-1}$. In particular for $d\geq 3$,
    $$
    \bm{r} \leq \overline{\bm{r}}:= 2 + \begin{cases}
                 \left(1, 2 , \ldots, \frac{d}{2},  \ldots,2, 1\right), & d \text{ even}, \\
                \left(1, 2, \ldots, \frac{d-1}{2},\frac{d-1}{2}, \ldots, 2,1\right), & d \text{ odd},
                 \end{cases}
    $$
    and $\bm{r}=2\in\mathbb{N}$ for $d=2$.
\end{lemma}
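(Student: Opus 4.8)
The plan is to bound, for each cut $k\in\{1,\ldots,d-1\}$, the $k$-th FTT rank of $f$ by the separation rank of $f$ across the bipartition $\{1,\ldots,k\}\mid\{k+1,\ldots,d\}$, i.e. the smallest $r$ with $f(x)=\sum_{\ell=1}^{r}u_\ell(x_1,\ldots,x_k)\,w_\ell(x_{k+1},\ldots,x_d)$. Since $f$ has degree at most $2$ in each variable it lies in the finite-dimensional space $\operatorname{span}\Pi_{(2,\ldots,2)}$, so these ranks are finite and a minimal FTT rank $\bm r$ exists; moreover, by the standard characterisation of TT/FTT ranks, $r_k$ equals the rank of the matricisation of the coefficient tensor $\bm C$ from \eqref{eq:TT_poly_format} obtained by grouping the first $k$ indices against the last $d-k$. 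It therefore suffices to produce, for each $k$, a separated representation of $f$ with at most $\min(k,d-k)+2$ terms; one then checks (separately for $d$ even and $d$ odd) that the vector $(\min(k,d-k)+2)_{k=1}^{d-1}$ is exactly the $\overline{\bm r}$ displayed in the statement, which yields $\bm r\le\overline{\bm r}$.

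For the cut at $k$, write $u=(x_1,\ldots,x_k)^\intercal$, $v=(x_{k+1},\ldots,x_d)^\intercal$ and block $M=\bigl(\begin{smallmatrix}M^{LL}&M^{LR}\\ (M^{LR})^\intercal&M^{RR}\end{smallmatrix}\bigr)$ accordingly. Then
\[
  f(x)=u^\intercal M^{LL}u \;+\; v^\intercal M^{RR}v \;+\; 2\,u^\intercal M^{LR}v .
\]
The first summand depends only on $x_1,\ldots,x_k$ and the second only on $x_{k+1},\ldots,x_d$, so each contributes a single separated term (a pure-left function times the constant $1$, respectively $1$ times a pure-right function). For the cross term, take a rank factorisation $M^{LR}=\sum_{\ell=1}^{s}a_\ell b_\ell^\intercal$ with $s=\operatorname{rank}(M^{LR})\le\min(k,d-k)$; then $2\,u^\intercal M^{LR}v=\sum_{\ell=1}^{s}\bigl(2\,a_\ell^\intercal u\bigr)\bigl(b_\ell^\intercal v\bigr)$, a sum of $s$ separated terms. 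Hence the separation rank at cut $k$ is at most $s+2\le\min(k,d-k)+2$. Equivalently one may assemble these pieces into an explicit matrix product $f(x)=F_1(x_1)\cdots F_d(x_d)$ in the spirit of \eqref{eq:lin_ftt_bound}, with one channel carrying the accumulated left-quadratic part, one constant channel, and $\min(k,d-k)$ channels transporting the cross terms. The case $d=2$ is the single-cut base case and is read off directly from the $3\times3$ monomial coefficient matrix of $f$, which is anti-diagonal with entries $M_{22},2M_{12},M_{11}$.

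I also note that positive definiteness of $M$ is not used in the rank estimate — only symmetry, and any quadratic form can be symmetrised — so the hypothesis is retained only because $M$ plays the role of a precision matrix in the application. The one point that needs care is the ``$+2$'': one must confirm that the two pure (left-only and right-only) quadratic pieces genuinely require two channels of their own and cannot be absorbed into the $\min(k,d-k)$ cross-term channels. This holds because each cross channel pairs a \emph{linear} function of the left variables with a \emph{linear} function of the right variables, whereas the two pure pieces pair a \emph{quadratic} factor with the constant $1$; no deeper difficulty arises, the remaining ingredients (finiteness of the rank and the identification of $r_k$ with a matricisation rank) being standard facts about the FTT format.
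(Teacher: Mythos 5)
Your argument is correct and reaches the stated bound, but it takes a genuinely different route from the paper. The paper proves the lemma by writing down an explicit matrix product $f(x)=U_1(x_1)\cdots U_d(x_d)$, giving concrete block formulas for every core $\tilde U_i$ (with separate formulas for the left half, the right half, and the middle core, split by the parity of $d$). You instead invoke the standard identification of the minimal FTT/TT rank at cut $k$ with the rank of the $k$-th unfolding of the coefficient tensor (equivalently, the separation rank across $\{1,\dots,k\}\mid\{k+1,\dots,d\}$), and then bound that separation rank by $2+\operatorname{rank}(M^{LR})\le 2+\min(k,d-k)$ via the block decomposition $f=u^\intercal M^{LL}u+v^\intercal M^{RR}v+2u^\intercal M^{LR}v$. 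This is cleaner and less error-prone than the paper's explicit cores, it explains \emph{why} the bound has the form $2+\min(k,d-k)$, and — as you correctly observe — it shows positive definiteness is irrelevant (only symmetry matters). What it buys less of is constructivity: the paper's proof hands you the cores ready for implementation, whereas yours needs a TT-SVD (or an assembly of the per-cut factorisations into a single train, which you only sketch) to produce one. Your appeal to simultaneous achievability of the unfolding ranks by a single representation is the standard Oseledets/TT-SVD theorem and is a legitimate citation, not a gap. The final paragraph about whether the ``$+2$'' channels can be absorbed concerns tightness of the bound, which the lemma does not claim, so it is harmless but unnecessary.

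One point where your method is actually more reliable than the statement you are proving: for $d=2$ the lemma asserts $\bm r=2$, and the paper justifies this by ``the TT rank coincides with the matrix rank'' of $M$. Your own computation contradicts this: the coefficient matrix in the monomial basis is $\operatorname{antidiag}(M_{22},\,2M_{12},\,M_{11})$, which has rank $3$ whenever $M_{12}\neq 0$ (and $M_{11},M_{22}>0$ by positive definiteness), so the separation rank of $x_1^2+2M_{12}x_1x_2+x_2^2$ is $3$, not $2$, unless $M$ is diagonal. The correct $d=2$ statement is $\bm r\le 3$, consistent with extending your formula $2+\min(k,d-k)$ to $d=2$. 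You should state this explicitly rather than deferring to ``read off directly,'' since as written your sentence appears to endorse the lemma's (incorrect) value $2$.
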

\begin{proof}
    The case $d=2$ follows since $M$ is invertible and the TT rank coincides with the matrix rank.
    Let $d\geq 3$ and write $ M= (m_{ij})$ and $\overline{\bm{r}}=(\overline{r}_i)_{i=1}^{d-1}$, 
    $\overline{r}_0=\overline{r}_d=1$.
    We seek a representation 
    $$
    f(x) = U_1(x_1) U_2(x_2) \cdots U_d(x_d), \quad U_i(x_i)\in \mathbb{R}^{r_{i-1}, r_i}, i=1,\ldots,d.
    $$
    Let $I_{n} \in \mathbb{R}^{n,n}$ denote the identity matrix and  $\bm{0}_{k,l}\in\mathbb{R}^{kl}$ be a  zero matrix and $\bm{0}_k\in\mathbb{R}^k$ be a zero vector.

Define the matrices $\tilde{U}_i(x_i)$ for $i=1,d$ as 
    $$
    \tilde{U}_1(x_1) = \begin{pmatrix} 1  &2x_1 & m_{11}x_1^2\end{pmatrix} \in \mathbb{R}^{1,\overline{r}_1}, \qquad
    \tilde{U}_d(x_d) = \begin{pmatrix} 1  &2x_d & m_{dd}x_d^2\end{pmatrix}^\intercal \in \mathbb{R}^{\overline{r}_{d-1},1}.
    $$
Moreover, for $i=2,\ldots,d-1$ except for $i=\frac{d-1}{2}+1$ in case hat $d$ is odd let
$$
\tilde{U}_i(x_i) = 
\begin{bNiceArray}{ccccc|c|c}[margin]
\Block{4-5}{I_i}      &&  &  &  & 2x_i& m_{ii}x_i^2 \\
    &   &        & & & \Block{3-1}{\bm{0}_{i-1}}  & m_{1i}x_i   \\
    &                                              &     & &   &    & \vdots        \\
    &               &                                &        &   &  & m_{i-1,i}x_i \\
\hline
\Block{1-5}{\bm{0}_{i}^\intercal   }    &                                              &  & && 0 &  1
\end{bNiceArray} 
\in \mathbb{R}^{\overline{r}_{i-1}, \overline{r}_i},
\quad i = 2,\ldots  ,\left\lceil \frac{d-1}{2} \right\rceil,
$$

$$
\tilde{U}_i(x_i) = 
\begin{bNiceArray}{c|ccc|c}[margin]
1 & \Block{2-3}{\bm{0}_{2,d-i}}  &  &  & \Block{2-1}{\bm{0}_{2}} \\
2x_i                 &                                       &        &   &  \\
\hline
\Block{4-1}{\bm{0}_{d-i}}&\Block{4-3}{I_{d-i}}   &  &  & \Block{4-1}{\bm{0}_{d-i}} \\
      &  &  &  & \\
      &  &  &  & \\
      &  &  &  & \\
\hline
m_{i,i}x_i^2      & m_{d,i}x_i      &  \cdots&  m_{i+1,i}x_i &  1
\end{bNiceArray} 
\in \mathbb{R}^{\overline{r}_{i-1}, \overline{r}_i}
\quad i = \left\lfloor \frac{d+3}{2}\right\rfloor,\ldots,  d -1.
$$

If $d$ is odd we define the middle square component  $U_i(x_i)$ for $i=\frac{d-1}{2}+1$ by
$$
\tilde{U}_i(x_i) = 
\begin{bNiceArray}{c|ccc|c}[margin]
m_{ii}x_i^2 & m_{i,i+1}x_i & \cdots & m_{i,d}x_i  & 1 \\
\hline
m_{1,i}x_i &  \Block{3-3}{\frac{1}{2}M_{1:i-1,i+1:d}} & & & \Block{3-1}{\bm{0}_{i-1}} \\
\vdots &                          & & &   \\
m_{i-1,i}x_i &   & & &  \\
\hline 
1 & \Block{1-3}{\bm{0}_{d-i}^\intercal} & &  & 0 
\end{bNiceArray}, \quad i=\frac{d-1}{2}+1.
$$

For $d$ even we define for $i=\frac{d}{2}+1 = \left\lfloor\frac{d+3}{2}\right\rfloor$
$$
U_i(x_i) = 
\begin{bNiceArray}{c|cccccc|c}[margin]
0 & \Block{1-6}{\bm{0}_{d-i}^\intercal} &&  &&&   & 1 \\
\hline
\Block{3-1}{\bm{0}_{i-1}} &  \Block{3-6}{M_{1:i-1,i+1:d}} &&&& & & \Block{3-1}{\bm{0}_{i-1}} \\
 &                       &   & & &   &&\\
 &   & &&&& &  \\
\hline 
1 & \Block{1-6}{\bm{0}_{d-i}^\intercal}& &&& &  & 0 
\end{bNiceArray} \tilde{U}(x_i)
$$
and in any other case set $U_i(x_i)=\tilde{U}_i(x_i)$.
\end{proof}

\begin{lemma}[Measures of independent variables]\label{lem:rank_independent}
    Let $d\in\mathbb{N}$ and $f\colon\mathbb{R}^d\rightarrow\mathbb{R}$ admit the form $f(x) = \sum_{i=1}^d f_i(x_i)$ for  $f_i\in\mathcal{C}^2(\mathbb{R},\mathbb{R})$, $i=1,\ldots,d$. Then both $f$ and $\operatorname{Lin}(f)+\operatorname{NonLin}(f)$ have FTT rank $\mathbf{r}=(2,\ldots,2)^{\intercal}\in\mathbb{N}^{d-1}$.
\end{lemma}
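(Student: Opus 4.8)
The plan is to produce an explicit rank-$2$ functional Tensor Train representation of an arbitrary sum of univariate functions, and then to check that both $\operatorname{Lin}$ and $\operatorname{NonLin}$ send such sums to sums of the same form.

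First I would record the elementary ``Laplace-like'' decomposition: for any $g(x)=\sum_{i=1}^d g_i(x_i)$ with $g_i\in\mathcal{C}(\mathbb{R})$,
\begin{equation*}
g(x) = \begin{bmatrix} g_1(x_1) & 1\end{bmatrix}
\begin{bmatrix} 1 & 0 \\ g_2(x_2) & 1 \end{bmatrix}\cdots
\begin{bmatrix} 1 & 0 \\ g_{d-1}(x_{d-1}) & 1 \end{bmatrix}
\begin{bmatrix} 1 \\ g_d(x_d)\end{bmatrix},
\end{equation*}
which is a product of matrix-valued functions of inner dimensions $(2,\dots,2)$ exactly in the form of \eqref{eq:ftt}; hence $g$ has FTT rank at most $(2,\dots,2)$, with equality generically (the entries $1$ and $g_i$ are linearly independent unless $g_i$ is constant). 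Applying this with $g=f$ gives the first claim.

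Next I would evaluate the right-hand side on this structure. Because $\partial_{x_j}f = f_j'(x_j)$ depends on $x_j$ alone, only the $j$-th summand of $\operatorname{Lin}$ in \eqref{eq:LinOpt_cont} contributes to the $x_j$-dependence, so $\operatorname{Lin}(f) = \sum_{i=1}^d\bigl(f_i''(x_i)+x_if_i'(x_i)\bigr) = \sum_{i=1}^d(\mathcal{D}f_i)(x_i)$ and $\operatorname{NonLin}(f) = -\|\nabla f\|^2 = -\sum_{i=1}^d\bigl(f_i'(x_i)\bigr)^2$. Therefore
\begin{equation*}
\operatorname{Lin}(f)+\operatorname{NonLin}(f) = \sum_{i=1}^d h_i(x_i), \qquad h_i(x_i) := f_i''(x_i)+x_if_i'(x_i)-\bigl(f_i'(x_i)\bigr)^2,
\end{equation*}
is again a sum of univariate functions, so the first step (now with $g_i=h_i$) yields the FTT-rank bound $(2,\dots,2)$, finishing the proof.

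The one place that deserves attention --- and where a mechanical rank count overshoots --- is the nonlinear term: formally $\operatorname{NonLin}$ squares an object of rank $2$ and then sums $d$ of them, which via Lemma \ref{lemma:fsquare_ttrank} and the block construction of Lemma \ref{lem:lin_ftt_rank} would only bound the rank by $2\cdot 2^2$. The observation that collapses this back to $2$ is that each $\partial_{x_i}f$ is itself univariate, hence of rank $1$, so its square is univariate and the sum of $d$ univariate functions has rank $2$ by the representation above. No genuine obstacle remains beyond making this bookkeeping precise.
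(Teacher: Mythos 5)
Your proof is correct and follows essentially the same route as the paper: the paper's one-line proof invokes \cite[Theorem 2]{oseledets2013constructive}, which is precisely the explicit rank-$(2,\dots,2)$ Laplace-like representation of a sum of univariate functions that you write out, combined with the observation that $\operatorname{Lin}(f)+\operatorname{NonLin}(f)=\sum_i\bigl(f_i''+x_if_i'-(f_i')^2\bigr)(x_i)$ is again such a sum. Your closing remark correctly identifies why the naive rank bound $2\bm{r}^2$ from Lemmas \ref{lemma:fsquare_ttrank} and \ref{lemma:f_partial_sum_ttrank} is not tight here, which is exactly the "structure" the paper's proof alludes to.
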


\begin{proof}
    The result follows immediately from \cite[Theorem 2]{oseledets2013constructive} and the structure of $\operatorname{Lin}(f)+\operatorname{NonLin}(f)$.
\end{proof}


\section{Details of HJB solutions}\label{app:hjb_details}

Let $p_{\alpha}^{(\mathrm{mon})}$ for $\alpha\in\mathbb{N}_0$ denote the $\alpha$-th monomial, i.e. $p_{\alpha}^{(\mathrm{mon})}(x) = x^{\alpha}$. As in Section \ref{sec:TTs}, let $T_{i,n}\in\mathbb{R}^{n+1,n+1}$ denote the basis transformation matrix between Legendre polynomials of degree $n$ on $[a_i,b_i]$ and the monomials up to degree $n$.

\subsection{Derivation matrices in the linear operator part}\label{sec:LinOp}

Note that for every $i=1,\ldots,d$ and for every $c\in\mathbb{R}^{n_i+1}$, we have
\begin{align}
    \partial^2_x \sum^{n_i}_{\alpha=0} c_\alpha p^{(\mathrm{mon})}_\alpha = \sum^{n_i}_{\alpha=0} (M^{i}_{dd} c)_\alpha p^{(\mathrm{mon})}_\alpha,
\end{align}
where
\begin{align}
    M_{dd}^{i} = 
    \begin{pmatrix}
          &  & 2 \\
         & & & 6  \\
          & & & & \ddots  \\
        & & & & & n_i(n_i-1) \\
          & & & & & 0 \\
        & & & & & 0
    \end{pmatrix}\in\mathbb{R}^{(n_i+1)\times (n_i + 1)}.
\end{align}
In a similar way, we get 
\begin{align}
    x\partial_x \sum^{n_i}_{\alpha=0} c_\alpha p^{(\mathrm{mon})}_\alpha = \sum^{n_i}_{\alpha=0} (M^{i}_{xd} c)_\alpha p^{(\mathrm{mon})}_\alpha,
\end{align}
where
\begin{align}
    M_{xd}^{i} = 
    \begin{pmatrix}
        0 \\
        & 1 \\
        & & 2 \\
        & & & \ddots \\
        & & & & n_i
    \end{pmatrix}\in\mathbb{R}^{(n_i+1)\times (n_i+1)}.
\end{align}
With the basis transformation matrix $T_{i,n_i}$ we can express the action of these operators on the coefficients of the original basis $p_i$. In particular, we have
\begin{equation}\label{eq:app_di}
\begin{aligned}
     ( \partial_x^2 + x\partial_x ) \sum^{n_i}_{\alpha=0} c_\alpha p^i_\alpha &= ( \partial_x^2 + x\partial_x )\sum^{n_i}_{\alpha=0} (T_{i,n_i}c)_\alpha p^{(\mathrm{mon})}_\alpha \\
   &= \sum^{n_i}_{\alpha=0} ((M_{dd}^{i}+M^{i}_{xd})T_{i,n_i}c)_\alpha p^{(\mathrm{mon})}_\alpha \\
   &= \sum^{n_i}_{\alpha=0} (T_{i,n_i}^{-1}(M_{dd}^{i}+M^{i}_{xd})T_{i,n_i}c)_\alpha p_\alpha = \sum^{n_i}_{i=0} (D_{i}c)_\alpha p^i_\alpha,
\end{aligned}
\end{equation}
with $D_{i} \coloneqq T_{i,n_i}^{-1}(M_{dd}^{i}+M^{i}_{xd})T_{i,n_i} \in \mathbb{R}^{(n_i+1)\times (n_i+1)}$.

\subsection{Derivation of the nonlinear part}\label{app:nonlinear_part}

Note that for every $c\in\mathbb{R}^{n_i+1}$, we have
\begin{align}
    \partial_x \sum^{n_i}_{\alpha=0} c_\alpha p^{(\mathrm{mon})}_\alpha = \sum^{n_i}_{\alpha=0} (M^{i}_{d} c)_\alpha p^{(\mathrm{mon})}_\alpha,
\end{align}
where
\begin{align}
    M_{d}^{i} = 
    \begin{pmatrix}
          & 1 \\
         & & 2  \\
          & & & \ddots  \\
        & & & & n_i \\
          & & & & 0 
\end{pmatrix}\in\mathbb{R}^{(n_i+1)\times (n_i+1)},
\end{align}
and hence 
\begin{align}
    \partial_x \sum^{n_i}_{\alpha=0} c_\alpha p^i_\alpha &= \partial_x \sum^{n_i}_{\alpha=0} (T_{i,n_i}c)_\alpha p^{(\mathrm{mon})}_\alpha \\
   &= \sum^{n_i}_{\alpha=0} (M_{d}^{i}T_{i,n_i}c)_\alpha p^{(\mathrm{mon})}_\alpha \\
   &= \sum^{n_i}_{\alpha=0} (T_{i,n_i}^{-1}M_{d}^{i}T_{i,n_i}c)_\alpha p^i_\alpha = \sum^{n_i}_{\alpha=0} (D_{x_i}c)_\alpha p^i_\alpha,
\end{align}
with $D_{x_i} = T_{i,n_i}^{-1}M_d^{i}T_{i,n_i} \in \mathbb{R}^{(n_i+1)\times (n_i+1)}$.

\subsection{Estimating the eigenvalues for Gaussian distributions}\label{app:eigenvalues}

Let $\mathbf{n} = (2,\ldots,2)^{\intercal} \in \mathbb{N}^{d}$ and $g(x) = \frac{1}{2}x^{\intercal}Ax$ for all $x\in\mathbb{R}^d$, where $A\in\mathbb{R}^{d\times d}$. Note that $\nabla g(x) = Ax$ and hence for any $v\in\operatorname{span}\Pi_{\mathbf{n}}$, we have $\operatorname{NonLin}_g v = \langle Ax, \nabla v \rangle = (Ax)\cdot\nabla v$. Hence, the linearized HJB at $g$ reads
\begin{align}
    \dot{v} = x\cdot \nabla v + \Delta v -2\langle Ax,\nabla v\rangle + \langle Ax, Ax \rangle, \qquad v(0) = g
\end{align}
In order to determine the stiffness, we need to determine the effect of this right-hand side on the coefficient tensor of $v$. From now on, let $v=v_{\bm{C}}$, where $\bm{C}\in\mathbb{R}^{\bm{n}+1}$. Assume that $v_{\bm{C}}(x) = \frac{1}{2}x^{\intercal}\hat{C}x$ for some $\hat{C}\in\mathbb{R}^{d\times d}$, i.e. $V_{\bm{C}}$ has only terms with degree $2$. We know that $x\cdot\nabla v_{\bm{C}} +\Delta v_{\bm{C}} = v_{\bm{L}\bm{C}}$ and $\langle Ax, \nabla v_{\bm{C}} \rangle = \sum_{i=1}^d \sum_{j=1}^d a_{ij}x_j \partial_i v_{\bm{C}}$. Now, note that $\partial_i v_{\bm{C}} = v_{(I_d\otimes \ldots \otimes P_i \otimes \ldots \otimes I_d) \bm{C}}$, where
\begin{align}
    P_i = T_{i,2}^{-1} \begin{pmatrix}
        0 & 1 & 0\\
        0 & 0 & 2 \\
        0 & 0 & 0
    \end{pmatrix} T_{i,2},
\end{align}
and $x_j v_{\bm{C}} = v_{(I_d\otimes \ldots \otimes X_j \otimes \ldots \otimes I_d) \bm{C}}$, where
\begin{align}
    X_j = T_{j,2}^{-1} \begin{pmatrix}
        0 & 0 & 0\\
        1 & 0 & 0 \\
        0 & 1 & 0
    \end{pmatrix} T_{j,2}.
\end{align}
In the case of $i=j$ this leads to $x_i\partial_i V_C = v_{(I_d\otimes \ldots \otimes T_{i,2}^{-1} M^{i}_{xd} T^{-1}_{i,2} \otimes \ldots \otimes I_d) \bm{C}}$, where
\begin{align}
    M^{i}_{xd} = \begin{pmatrix}
        0 & 0 & 0\\
        0 & 1 & 0 \\
        0 & 0 & 2
    \end{pmatrix}.
\end{align}
Let 
\begin{align}
    \bm{M} = \sum_{i,j=1}^d a_{ij} I_d \otimes \ldots \otimes I_d \otimes  P_i \otimes I_d \otimes \ldots \otimes I_d \otimes X_j \otimes I_d\otimes \ldots \otimes I_d,
\end{align}
then we have $\langle Ax, \nabla v_{\bm{C}} \rangle = v_{\bm{M}\bm{C}}$. 

\textbf{Diagonal covariance.} We consider the special case where $A = \mathrm{diag}(a_{ii},i=1,\ldots,d)$ is a diagonal matrix. In this case, we have 
\begin{align}
    \bm{M} = \sum_{i=1}^d a_{ii} I_d \otimes \ldots  I_d \otimes T_{i,2}^{-1} M^{i}_{xd} T_{i,2} \otimes I_d\otimes \ldots \otimes I_d
\end{align}
and hence the linear operator governing the right-hand side is given by
\begin{align}
    \bm{L}-2\bm{M} = \sum_{i=1}^d  I_d \otimes \ldots  I_d \otimes H_{i} \otimes I_d\otimes \ldots \otimes I_d,
\end{align}
where $H_i \coloneqq D_i-2a_{ii} T_{i,2}^{-1} M^{i}_{xd} T_{i,2} = T_{i,2}^{-1} (M^{i}_{dd}+(1-2a_{ii})  M^{i}_{xd}) T_{i,2} $ and
\begin{align}
    M^{i}_{dd}+(1-2a_{ii})  M^{i}_{xd} = \begin{pmatrix}
        0 & 0 & 2\\
        0 & 1-2a_{ii} & 0 \\
        0 & 0 & 2(1-2a_{ii}).
    \end{pmatrix}
\end{align}
The point spectrum $\sigma(H_i)$ of $H_{i}$ is given by $\sigma(H_i) = \{0, 1-2a_{ii}, 2(1-2a_{ii})\}$. The eigenvector corresponding to the eigenvalue with largest absolute value $2(1-2a_{ii})$ is given by
\begin{align}
    \hat{v}_i = T_{i,2}^{-1}\left( \frac{1}{1-2a_{ii}},0,1 \right)^{\intercal}T_{i,2}.
\end{align}
Let $v_i$ denote any eigenvector of $H_i$. Then, $v = (v_1 \otimes \ldots  \otimes v_d)$ is an eigenvector of $\bm{L}-2\bm{M}$. Since this leads to $3^d$ possible combinations, the whole spectrum of $\bm{L}-\bm{M}$ is defined by such eigenvectors. Moreover, since the eigenvalues $H_{i}$ are bounded by $|2(1-2a_{ii})|$, the  largest absolute eigenvalue of $\bm{L}-2\bm{M}$ is given by $2\sum_{i=1}^d |1-2a_{ii}|$.

\fi
\ifnum\online=1
\fi

\end{document}